\journal{Journal of Multivariate Analysis}
\DeclareMathOperator{\row}{row}
\newtheorem{theorem}{Theorem}
\newtheorem{lemma}{Lemma}
\newtheorem{corollary}{Corollary}
\DeclareMathOperator{\var}{\rm var}
\DeclareMathOperator{\corr}{\rm corr}
\DeclareMathOperator{\cov}{\rm cov}
\newcommand{\argmax}{\operatornamewithlimits{\rm argmax}}
\newcommand{\argmin}{\operatornamewithlimits{\rm argmin}}
\begin{document}

\begin{frontmatter}

\title{Angle-based joint and individual variation explained}

\author[]{Qing Feng}
\ead{qing.feng1014@gmail.com}

\author[]{Meilei Jiang\corref{cor1}}
\cortext[cor1]{Corresponding author.}
\ead{jiangm@live.unc.edu}

\author[]{Jan Hannig}
\ead{jan.hannig@unc.edu}

\author[]{J. S. Marron}
\ead{marron@unc.edu}

\address{Department of Statistics and Operations Research, University of North Carolina at Chapel Hill, Chapel Hill, NC 27599, USA}

\begin{abstract}
Integrative analysis of disparate data blocks measured on a common set of experimental subjects is a major challenge in modern data analysis. This data structure naturally motivates the simultaneous exploration of the joint and individual variation within each data block resulting in new insights. For instance, there is a strong desire to integrate the multiple genomic data sets in The Cancer Genome Atlas to characterize the common and also the unique aspects of cancer genetics and cell biology for each source. In this paper we introduce Angle-Based Joint and Individual Variation Explained  capturing both joint and individual variation within each data block. This is a major improvement over earlier approaches to this challenge in terms of a new conceptual understanding, much better adaption to data heterogeneity and a fast linear algebra computation. Important mathematical contributions are the use of score subspaces as the principal descriptors of variation structure and the use of perturbation theory as the guide for variation segmentation. This leads to an exploratory data analysis method which is insensitive to the heterogeneity among data blocks and does not require separate normalization. An application to cancer data reveals different behaviors of each type of signal in characterizing tumor subtypes. An application to a mortality data set reveals interesting historical lessons. Software and data are available at GitHub \url{https://github.com/MeileiJiang/AJIVE_Project}. 
\end{abstract}

\begin{keyword}
Data integration \sep Heterogeneity \sep Perturbation theory \sep Principal angle \sep  Singular value decomposition
\end{keyword}

\end{frontmatter}

\section{Introduction}
\label{c:jive-s:intro}
A major challenge in modern data analysis is data integration, combining diverse information from disparate data sets measured on a common set of experimental subjects. Simultaneous variation decomposition has been useful in many practical applications. For example, \citet{kuhnle2011integration}, \citet{lock2013bayesian}, and \citet{mo2013pattern} performed integrative clustering on multiple sources to reveal novel and consistent cancer subtypes based on understanding of joint and individual variation. The Cancer Genome Atlas (TCGA)~\citep{cancer2012comprehensive} provides a prototypical example for this problem. TCGA contains disparate data types generated from high-throughput technologies. Integration of these is fundamental for studying cancer on a molecular level. Other types of application include analysis of multi-source metabolomic data~\citep{kuligowski2015analysis}, extraction of commuting patterns in railway networks~\citep{jere2014extracting}, recognition of brain-€"computer interface~\citep{zhang2015ssvep}, etc.	

A unified and insightful understanding of the set of data blocks is expected from simultaneously exploring the joint variation representing the inter-block associations and the individual variation specific to each block.
\citet{lock2013joint} formulated this challenge into a matrix decomposition problem. Each data block is decomposed into three matrices modeling different types of variation, including a low-rank approximation of the joint variation across the blocks, low-rank approximations of the individual variation for each data block, and residual noise. Definitions and constraints were proposed for the joint and individual variation together with a method named JIVE; see \url{https://genome.unc.edu/jive/} and \citet{OConnell:2016du} for Matlab and \textsf{R} implementations of JIVE, respectively.

JIVE was a promising framework for studying multiple data matrices. However, \citet{lock2013joint} algorithm and its implementation was iterative (thus slow) and performed rank selection based on a permutation test. It had no guarantee of achieving a solution that satisfied the definitions of JIVE, especially in the case of some correlation between individual components. The example in Figure~\ref{fig:jive:toyoldjiveoutput} in \ref{s:toy_details} shows that this can be a serious issue. An important related algorithm named COBE was developed by~\citet{zhou2015group}. COBE considers a JIVE-type decomposition as a quadratic optimization problem with restrictions to ensure identifiability. While COBE removed many of the shortcomings of the original JIVE, it was still iterative and often required longer computation time than the~\citet{lock2013joint} algorithm. Neither~\citet{zhou2015group} nor~\citet{lock2013joint} provided any theoretical basis for selection of a thresholding parameter used for separation of the joint and individual components. 

A novel solution, {\em Angle-based Joint and Individual Variation Explained (AJIVE)}, is proposed here for addressing this matrix decomposition problem. It provides an efficient {\em angle-based algorithm} ensuring an identifiable decomposition and also an insightful new interpretation of extracted variation structure. The key insight is the use of row spaces, i.e., a focus on scores, as the principal descriptor of the joint and individual variation, assuming columns are the $n$ data objects, e.g., vectors of measurements on patients. This focuses the methodology on variation patterns across data objects, which gives straightforward definitions of the components and thus provides identifiability. These variation patterns are captured by the  \emph{score subspaces} of $\mathbb{R}^n$. Segmentation of joint and individual variation is based on studying the relationship between these score subspaces and using perturbation theory to quantify noise effects~\citep{stewart1990matrix}.

The main idea of AJIVE is illustrated in the flowchart of Figure~\ref{fig:flow}. AJIVE works in three steps. First we find a low-rank approximation of each data block (shown as the far left color blocks in the flowchart) using SVD. This is depicted (using blocks with colored dashed line boundaries) on the left side of Figure~\ref{fig:flow} with the black arrows signifying thresholded SVD. Next, in the middle of the figure, SVD of the concatenated bases of row spaces from the first step (the gray blocks with colored boundaries) gives a joint row space (the gray box next to the circle), using a mathematically rigorous threshold derived using perturbation theory in Section~\ref{c:jive-s:method-subsec:step2}. This SVD is a natural extension of Principal Angle Analysis, which is also closely related to the multi-block extension of Canonical Correlation Analysis~\citep{nielsen2002multiset} as well as to the flag means of the row spaces~\citep{draper2014flag}; see Section~\ref{s:CCAPPA_optimization} for details. Finally, the joint and individual space approximations are found using projection of the joint row space and its orthogonal complements on the data blocks as shown as colored boundary gray squares on the right with the three joint components at the top and the individual components at the bottom.
\begin{figure}[htp!]
	\centering
	\includegraphics[scale = 0.52]{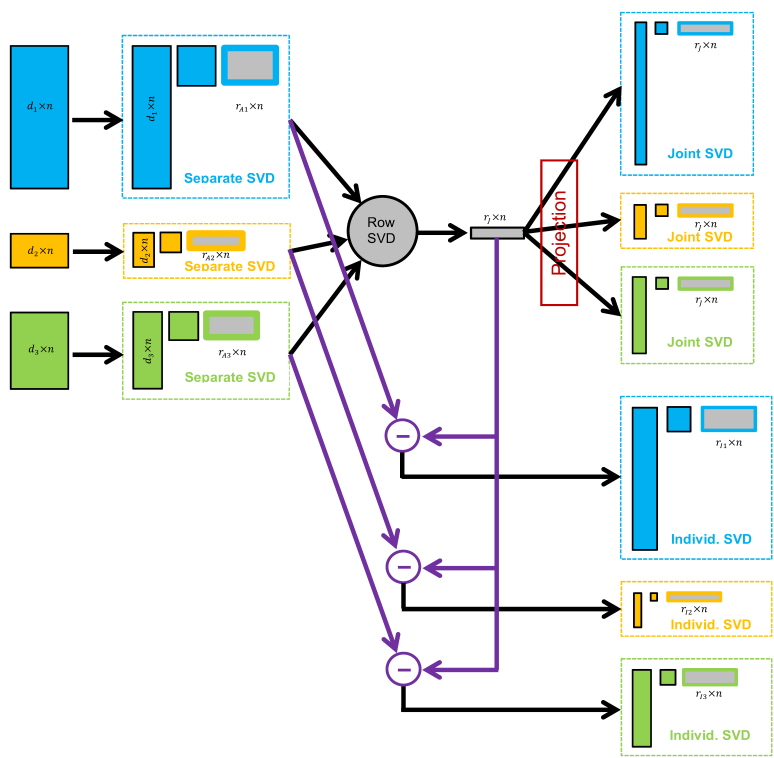}
	\caption{Flow chart demonstrating the main steps of AJIVE. First low-rank approximation of each data block is obtained on the right. Then in the middle joint structure between the low-rank approximations is extracted using SVD of the stacked row basis matrices. Finally, on the right, the joint components (upper) are obtained by projection of each data block onto the joint basis (middle) and the individual components (lower) come from orthonormal basis subtraction.}
	\label{fig:flow}
\end{figure}

Using score subspaces to describe variation contained in a matrix not only empowers the interpretation of analysis but also improves understanding of the problem and the efficiency of the algorithm. An identifiable decomposition can now be obtained with all definitions and constraints satisfied even in situations when individual spaces are somewhat correlated. Moreover, the need to select a tuning parameter used to distinguish joint and individual variation is eliminated based on theoretical justification using perturbation theory.  A consequence is an algorithm which uses a fast built-in singular value decomposition to replace lengthy iterative algorithms. For the example in Section~\ref{c:jive-s:intro-subsec:toy}, implemented in Matlab, the computational time of AJIVE (10.8 seconds) is about 11 times faster than the old JIVE (121 seconds) and 39 times faster than COBE (422 seconds). The computational advantages of AJIVE get even more pronounced on data sets with higher dimensionality and more complex heterogeneity such as the TCGA data analyzed in Section~\ref{subsec:TCGA}. For a very successful application of AJIVE on integrating fMRI imaging and behavioral data, see~\citet{yu2017jive}.

Other methods that aim to study joint variation patterns and/or individual variation patterns have also been developed. \citet{westerhuis1998analysis} discuss two types of methods. One main type extends traditional Principal Component Analysis (PCA), including Consensus PCA and Hierarchical PCA first introduced by \citet{wold1987multi, wold1996hierarchical}. An overview of extended PCA methods is discussed in~\citet{smilde2003framework}. \citet{abdi2013multiple} discuss a multiple block extension of PCA called multiple factor analysis. This type of method computes the block scores, block loadings, global loadings and global scores.

The other main type of method is extensions of Partial Least Squares (PLS)~\citep{wold1985partial} or Canonical Correlation Analysis (CCA)~\citep{hotelling1936relations} that seek associated patterns between the two data blocks by maximizing covariance/correlation. For example,~\citet{wold1996hierarchical} introduced multi-block PLS and hierarchical PLS (HPLS) and~\citet{trygg2003o2} proposed \emph{O2-PLS} to better reconstruct joint signals by removing structured individual variation. A multi-block extension can be found in \citet{lofstedt2013global}.

\citet{Yang:2015dt} provide a very nice integrative joint and individual component analysis based on non-negative matrix factorization. \citet{ray2014bayesian} do integrative analysis using factorial models in the Bayesian setting. \citet{schouteden2013sca, schouteden:2014ce} propose a method called DISCO-SCA that is a low-rank approximation with rotation to sparsity of the concatenated data matrices. 

A connection between extended PCA and extended PLS methods is discussed in~\citet{hanafi2011connections}. Both types of methods provide an integrative analysis by taking the inter-block associations into account. These papers recommend use of normalization to address potential scale heterogeneity, including normalizing by the Frobenius norm, or the largest singular value of each data block, etc. However, there are no consistent criteria for normalization and some of these methods have convergence problems. An important point is that none of these approaches provide simultaneous decomposition highlighting joint and individual modes of variation with the goal of contrasting these to reveal new insights.

\subsection{Toy example}
\label{c:jive-s:intro-subsec:toy}

We give a toy example to provide a clear view of multiple challenges brought by potentially very disparate data blocks. This toy example has two data blocks, $X$ ($100\times 100$) and $Y$ ($\mbox{10,000} \times 100$), with patterns corresponding to joint and individual structures. Such data set sizes are reasonable in modern genetic studies, as seen in Section~\ref{subsec:TCGA}. Figure~\ref{fig:jive:toyrawdata} shows colormap views of these matrices, with the value of each matrix entry colored according to the color bar at the bottom of each subplot. The data have been simulated so expected row means are $0$. Therefore mean centering is not necessary in this case. A careful look at the color bar scalings shows the values are almost four orders of magnitude larger for the top matrices. Each column of these matrices is regarded as a common data object and each row is considered as one feature. The number of features is also very different as labeled in the $y$-axis. Each of the two raw data matrices, $X$ and $Y$ in the left panel of Figure~\ref{fig:jive:toyrawdata}, is the sum of joint, individual and noise components shown in the other panels. 

The joint variation for both blocks, second column of panels, presents a contrast between the left and right halves of the data matrix, thus having the same rank-1 score subspace. If for example the left half columns were male and right half were female, this joint variation component could be interpreted as a contrast of gender groups which exists in both data blocks for those features where color appears. 

The $X$ individual variation, third column of panels, partitions the columns into two other groups of size $50$ that are arranged so the row space is orthogonal to that of the joint score subspace. The individual signal for $Y$ contains two variation components, each driven by half of the features. The first component, displayed in the first $5000$ rows, partitions the columns into three groups. The other component is driven by the bottom half of the features and partitions the columns into two groups, both with row spaces orthogonal to the joint. Note that these two individual score subspaces for $X$ and $Y$ are different but not orthogonal. The smallest angle between the individual subspaces is~$45$\textdegree. 

This example presents several challenging aspects, which also appear in real data sets such as TCGA, as studied in Section~\ref{subsec:TCGA}. One is that both the values and the number of the features are orders of magnitude different between $X$ and $Y$. Another important challenge is that because the individual spaces are not orthogonal, the individual signals are correlated.  Correctly handling these in an integrated manner is a major improvement of AJIVE over earlier methods. In particular, normalization is no longer an issue because AJIVE only uses the low-rank initial {\it scores} (represented as the gray boxes in the SVD shown on the left of Figure~\ref{fig:flow}), while signal power appears in the central subblocks and the features only in the left subblocks. Appropriate handing of potential correlation among individual components is done using perturbation theory in Section~\ref{c:jive-s:method}.

The noise matrices, the right panels of Figure~\ref{fig:jive:toyrawdata}, are standard Gaussian random matrices (scaled by 5000 for $X$) which generates a noisy context for both data blocks and thus a challenge for analysis, as shown in the left panels of Figure~\ref{fig:jive:toyrawdata}.

\begin{figure}[htp!]
	\begin{center}
		\includegraphics[scale = 0.75]{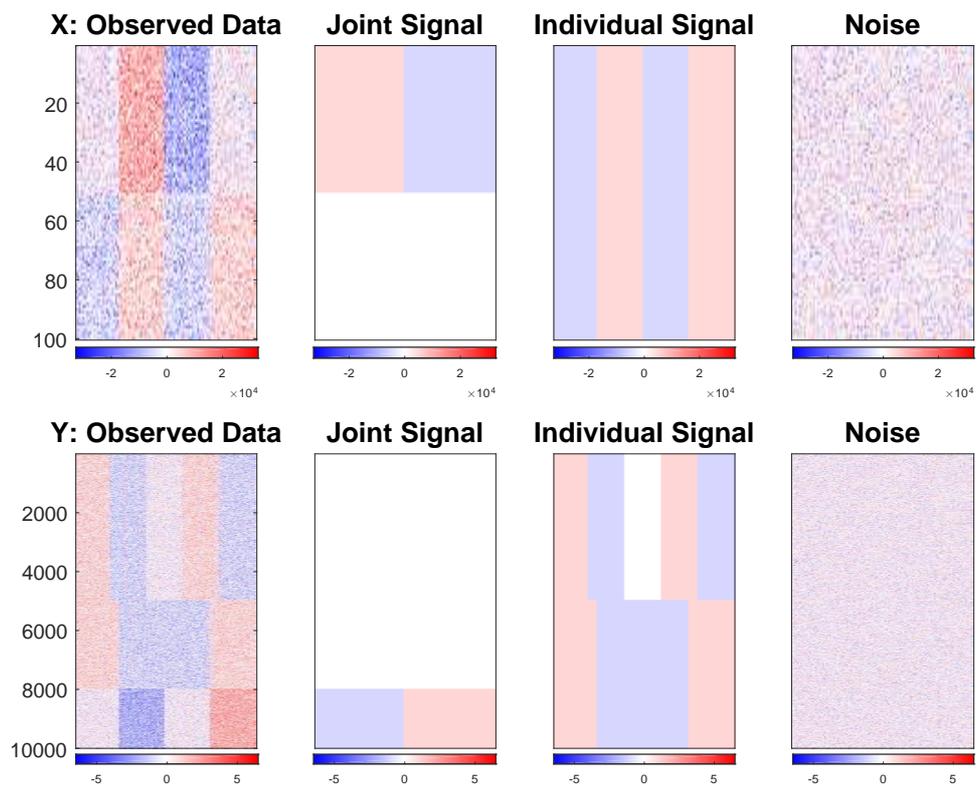}
		\caption[Data blocks $X$ and $Y$ in the toy example]{Data blocks $X$ (top) and $Y$ (bottom) in the toy example. The left panels present the observed data matrices which are a sum of the signal and noise matrices depicted in the remaining panels. Scale is indicated by color bars at the bottom of each sub-plot. These structures are challenging to capture using conventional methods due to very different orders of magnitude and numbers of features.}
		\label{fig:jive:toyrawdata}
	\end{center}
\end{figure}

Simply concatenating $X$ and $Y$ on columns and performing a singular value decomposition on this concatenated matrix completely fails to give a meaningful joint analysis. PLS and CCA might be used to address the magnitude difference in this example and capture the signal components. However, they target common relationships between two data matrices and therefore are unable to simultaneously extract and distinguish the two types of variation. Moreover, because of its sensitivity to the strength of the signal PLS misclassifies correlated individual components as joint components.  The original JIVE of \citet{lock2013joint} also fails on this toy example. Details on all of these can be found in \ref{s:toy_details}. 

In this toy example, the selection of the initial low-rank parameters $r_X = 2$ and $r_Y = 3$ is unambiguous. The left panel of Figure~\ref{fig:jive:toyjiveoutput} shows this AJIVE-approximation well captures the signal variations within both $X$ and $Y$. What's more, our method correctly distinguishes the types of variation showing its robustness against heterogeneity across data blocks and correlation between individual data blocks. The approximations of both joint and individual signal are depicted in the remaining panels. A careful study of the impact of initial rank misspecification on the AJIVE results for this toy example is in Sections~\ref{c:jive-s:method-subsec:step1} and \ref{c:jive-s:method-subsec:step2}.

\begin{figure}[htp!]
	\centering
	\includegraphics[scale = 0.75]{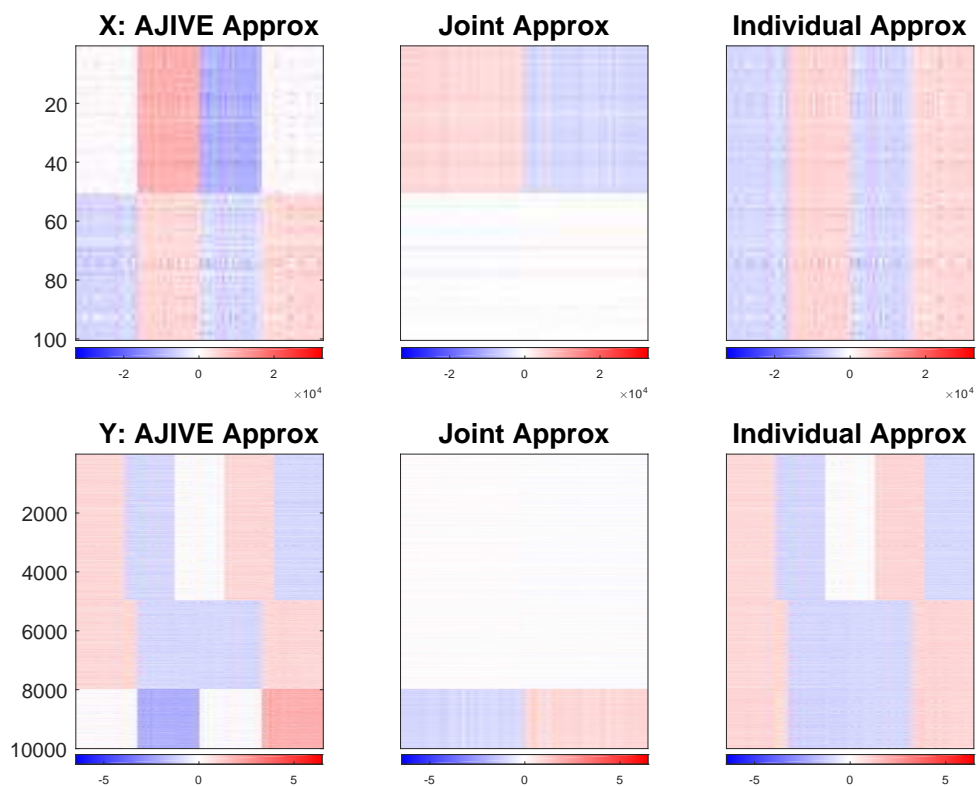}
	\caption[AJIVE approximation of the toy data]{ AJIVE approximation of the data blocks $X$ and $Y$ in the toy example are shown in the first column, with the joint and individual signal matrices depicted in the remaining columns. Both quite diverse types of variations are well captured for each data block by AJIVE, in contrast to other usual methods as seen in \ref{s:toy_details}.}
	\label{fig:jive:toyjiveoutput}
\end{figure}

The rest of this paper is organized as follows. Section~\ref{c:jive-s:method} describes the population model and mathematical details of the estimation approach. 
Results of application to a TCGA breast cancer data set and a mortality data set are presented in Section~\ref{c:jive-s:datanalaysis}. 
Relationships between the proposed AJIVE and other methods from an optimization point of view are discussed in Section~\ref{s:optimization_analysis}. The AJIVE Matlab software, the related Matlab scripts and associated datasets, which can be used to reproduce all the results in this paper, are available at the GitHub repository \url{https://github.com/MeileiJiang/AJIVE_Project}.

\section{Proposed method}
\label{c:jive-s:method}

In this section the details of the new proposed AJIVE are presented. The population model is proposed in Sections~\ref{c:jive-s:method-subsec:model}. The estimation approaches are given in Section \ref{c:jive-s:method-subsec:step1}, \ref{c:jive-s:method-subsec:step2}, and \ref{c:jive-s:method-subsec:step3}. 

\subsection{Population model}
\label{c:jive-s:method-subsec:model}

Matrices $X_1, \ldots,X_K$ each of size $d_k \times n$ are a set of data blocks for study, e.g., the colored blocks on the left of Figure~\ref{fig:flow}. The columns are regarded as data objects, one vector of measurements for each experimental subject, while rows are considered as features. All $X_k$s therefore have the same number of columns and perhaps a different number of rows.

Each $X_k$ is modeled as low-rank true underlying signals $A_k$ perturbed by additive noise matrices $E_k$. Each low-rank signal $A_k$ is the sum of two matrices containing joint and individual variation, denoted as $J_k$ and $I_k$ respectively for each block, viz.
\begin{equation*}
\begin{bmatrix}
X_1 \\
\vdots \\
X_K \\
\end{bmatrix} = \begin{bmatrix}
A_1 \\
\vdots \\
A_K \\
\end{bmatrix}  + \begin{bmatrix}
E_1 \\
\vdots \\
E_K \\
\end{bmatrix}  = \begin{bmatrix}
J_1 \\
\vdots \\
J_K \\
\end{bmatrix}  + \begin{bmatrix}
I_1 \\
\vdots \\
I_K \\ 
\end{bmatrix}  + \begin{bmatrix}
E_1 \\
\vdots \\
E_K \\
\end{bmatrix}.
\end{equation*}
Our approach focuses on the vectors in the row space of our matrices. In this context these vectors are often called \emph{score vectors} and the row space of the matrix is often called \emph{score subspace} ($\subset \mathbb{R}^n$). Therefore, the row spaces of the matrices capturing joint variation, i.e., joint matrices, are defined as sharing a common score subspace denoted as $\row(J)$, viz.
\[
\row(J_1) = \cdots =  \row(J_K) = \row(J).
\]
The individual matrices are individual in the sense that they are orthogonal to the joint space, i.e., $\row(I_{k}) \perp \row(J)$ for all $k \in\{ 1, \ldots, K\}$, and the intersection of their score subspaces is the zero vector space, i.e.,
\[
\bigcap\limits_{k=1}^{K} \row(I_k) = \{\vec{0}\}.
\]
This means that there is no non-trivial common row pattern in every individual score subspaces across blocks. 

To ensure an identifiable variation decomposition we assume $\row(J)\subset \row(A_k)$, which also implies $\row(I_k)\subset \row(A_k)$, for all $k\in \{1,\ldots, K\}$. Note that orthogonality between individual matrices $\{I_1, \ldots, I_K\}$ is \textit{not} assumed as it is not required for the model to be uniquely determined. 

Under these assumptions, the model is identifiable in the following sense.
\begin{lemma}
	\label{lemma-exist}
	Given a set $\{A_1, \ldots, A_K\}$ of matrices, there are unique sets $\{J_1, \ldots, J_K\}$ and $\{I_1, \ldots, I_K\}$ of matrices so that
	\begin{itemize}
		\item [(i)] $A_k$ = $J_k + I_k$, for all $k \in \{ 1, \ldots, K\}$;
		\item [(ii)] $\row(J_k) = \row(J) \subset \row(A_k)$, for all $k \in \{ 1, \ldots, K\}$;
		\item [(iii)] $\row(J) \perp \row(I_k)$, for all $k \in \{ 1, \ldots, K\}$;
		\item [(iv)] $\bigcap\limits_{k=1}^{K} \row(I_k) = \{\vec{0}\}.$
	\end{itemize}
\end{lemma}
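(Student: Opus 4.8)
The plan is to reduce the whole statement to a single subspace, the joint score subspace $\row(J)\subset\mathbb{R}^n$, and to identify it explicitly. I claim the correct choice is the intersection of all the block row spaces, $R:=\bigcap_{k=1}^{K}\row(A_k)$. Once $R$ is fixed, everything else is forced: writing $P$ for the orthogonal projection of $\mathbb{R}^n$ onto $R$, I would set $J_k:=A_k P$ and $I_k:=A_k(I-P)$ for each $k$, so that (i) holds trivially. Since the rows of $J_k$ are the projections of the rows of $A_k$ onto $R$, one has $\row(J_k)=P\bigl(\row(A_k)\bigr)$, and the containment $R\subset\row(A_k)$ upgrades this to $\row(J_k)=R$, giving (ii); likewise $\row(I_k)\subset R^{\perp}$ gives (iii).

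For existence it remains to check (iv), and here is the one computation worth isolating: I would show $\row(I_k)=\row(A_k)\cap R^{\perp}$. The inclusion $\subset$ is immediate, and for $\supset$ note that any $w\in\row(A_k)\cap R^{\perp}$ satisfies $Pw=0$, hence $w=(I-P)w\in\row(I_k)$. Crucially, the splitting $w=Pw+(I-P)w$ keeps both summands inside $\row(A_k)$ precisely because $R\subset\row(A_k)$; this containment assumption is exactly what makes the individual space equal to the orthogonal complement of the joint space inside each block, and I expect it to be the main place where the model hypotheses do real work. Granting the identity, (iv) follows by intersecting over $k$: $\bigcap_k\row(I_k)=\bigl(\bigcap_k\row(A_k)\bigr)\cap R^{\perp}=R\cap R^{\perp}=\{\vec 0\}$.

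For uniqueness I would argue that any admissible joint space must equal $R$. Suppose $\{J_k\},\{I_k\}$ satisfy (i)--(iv) with common joint space $S:=\row(J)$. From (ii), $S\subset\row(A_k)$ for every $k$, so $S\subset R$ at once. Applying the projection $P_S$ onto $S$ to the identity $A_k=J_k+I_k$ and using (ii)--(iii) (rows of $J_k$ lie in $S$, rows of $I_k$ are orthogonal to $S$) yields $J_k=A_kP_S$ and $I_k=A_k(I-P_S)$, so the entire decomposition is a function of the single subspace $S$; the same computation as above, now with $S$ in place of $R$, gives $\row(I_k)=\row(A_k)\cap S^{\perp}$. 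Condition (iv) then reads $\{\vec 0\}=\bigcap_k\row(I_k)=R\cap S^{\perp}$. Since $S\subset R$, the orthogonal splitting $R=S\oplus(R\cap S^{\perp})$ forces $R\cap S^{\perp}=\{\vec 0\}$ to give $S=R$. Hence the joint space is unique, and the projection formulas pin down the $J_k$ and $I_k$ uniquely as well.
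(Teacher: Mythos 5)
Your proof is correct and follows essentially the same route as the paper's: the joint space is identified as $R=\bigcap_{k=1}^{K}\row(A_k)$, the components are obtained by orthogonal projection onto $R$ and its complement, and condition (iv) is what forces any admissible joint space $S$ to equal the full intersection. If anything, your write-up is more explicit than the paper's, which compresses the uniqueness step --- the identity $\row(I_k)=\row(A_k)\cap \row(J)^{\perp}$ and the deduction $S=R$ from $S\subset R$ together with $R\cap S^{\perp}=\{\vec{0}\}$ --- into a single sentence.
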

The proof is provided in \ref{s:proofs}. Lemma~\ref{lemma-exist} is very similar to Theorem 1.1 in~\citet{lock2013joint}. The main difference is that the rank conditions are replaced by conditions on row spaces. In our view, this provides a clearer mathematical framework and more precise understanding of the different types of variation. 

The additive noise matrices, $E_1, \ldots, E_K$, are assumed to follow an isotropic error model where the energy of projection is invariant to direction in both row and column spaces. Important examples include the multivariate standard normal distribution and the matrix multivariate Student $t$ distribution~\citep{kotz2004multivariate}. All singular values of each noise matrix are assumed to be smaller than the smallest singular values of each signal to give identifiability. This assumption on the noise distribution here is weaker than the classical iid Gaussian random matrix, and only comes into play when determining the number of joint components. 

The estimation algorithm, which segments the data into joint and individual components in the presence of noise, has three main steps, as follows.

\begin{description}
\item
\textbf{Step 1: Signal Space Initial Extraction.} Low-rank approximation of each data block, as shown on the left in Figure~\ref{fig:flow}. A novel approach together with careful assessment of accuracy using matrix perturbation theory from linear algebra~\citep{stewart1990matrix}, is provided in Sections~\ref{c:jive-s:method-subsec:step1} and \ref{c:jive-s:method-subsec:step2}. 

\item
\textbf{Step 2: Score Space Segmentation.} Initial determination of joint and individual components, as shown in the center of Figure~\ref{fig:flow}. Our approach to this is based on an extension of Principal Angle Analysis, and an inferential based graphical diagnostic tool. The two block case is discussed in Section~\ref{c:jive-s:method-subsec:step2-2blocks}, with the multi-block case appearing in Section~\ref{c:jive-s:method-subsec:step2-multiblocks}.

\item
\textbf{Step 3: Final Decomposition and Outputs.} Check segmented components still meet initial thresholds in Step 1, and reproject for appropriate outputs, as shown in the right of Figure~\ref{fig:flow}. Details of this are in Section~\ref{c:jive-s:method-subsec:step3}.
\end{description}

\subsection{Step 1: Signal space initial extraction}
\label{c:jive-s:method-subsec:step1}

Even though the signal components $A_1, \ldots, A_K$ are low-rank, the data matrices $X_1, \ldots, X_K$ are usually of full rank due to the presence of noise. SVD works as a signal extraction device in this step, keeping components with singular values greater than selected thresholds individually for each data block, as discussed in Section~\ref{c:jive-s:method-subsec:step1-threshold}. The accuracy of this SVD approximation will be carefully estimated in Section~\ref{c:jive-s:method-subsec:step1-accuracy}, and will play an essential role in segmenting the joint space in Step 2.

\subsubsection{Initial low-rank approximation}
\label{c:jive-s:method-subsec:step1-threshold}

Each signal block $A_k$ is estimated using SVD of $X_k$. Given a threshold $t_k$, the estimator $\tilde{A}_k$ (represented in Figure~\ref{fig:flow} as the boxes with dashed colored boundaries on the left) is defined by setting all singular values below $t_k$ to 0. The resulting rank $\tilde{r}_k$ of $\tilde{A}_k$ is an initial estimator of the signal rank $r_k$. The reduced-rank decompositions of the $\tilde{A}_k$s are
\begin{equation}
\label{equ:step1}
\tilde{A}_k = \tilde{U}_k\tilde{\Sigma}_k\tilde{V}_k^\top,
\end{equation}
where $\tilde{U}_{k}$ contains the left singular vectors that correspond to the largest $\tilde{r}_k$ singular values respectively for each data block. The initial estimate of the signal score space, denoted as $\row(\tilde{A}_{k})$, is spanned by the right singular vectors in $\tilde{V}_{k}$ (shown as gray boxes with colored boundaries on the left of Figure~\ref{fig:flow}).

When selecting these thresholds, one needs to be aware of a bias/variance like trade-off. Setting the threshold too high will provide an accurate estimation of the parts of the joint space that are included in the low-rank approximation. The downside is that significant portions of the joint signal might be thresholded out. This could be viewed as a low-variance high-bias situation. If the threshold is set low, then it is likely that the joint signal is included in all of the blocks. However, the precision of the segmentation in the next step can deteriorate to the point that individual components, or even worse, noise components, can be selected in the joint space. This can be viewed as the low-bias high-variance situation.

Most off-the-shelf automatic procedures for low-rank matrix approximation have as their stated goal signal reconstruction and prediction, which based on our experience tends toward thresholds that are too small, i.e., input ranks that are too large. This is sensible as adding a little bit more noise usually helps prediction but it has bad effects on signal segmentation. We therefore recommend taking a multi-scale perspective and trying several threshold choices, e.g., by considering several relatively big jumps in a scree plot. A useful inferential graphical device to assist with this choice is developed in Section~\ref{c:jive-s:method-subsec:step2}. 

Figure~\ref{fig:toy1singularvalue} shows the scree plots of each data block for the toy example in Section~\ref{c:jive-s:intro}. The left scree plot for $X$ clearly indicates a selection of rank $\tilde{r}_1 = 2$ and the right scree plot for $Y$ points to rank $\tilde{r}_2 = 3$; in both cases those components stand out while the rest of the singular values decay slowly showing no clear jump.

\begin{figure}[t!]
		\centering
		\includegraphics[scale=0.3]{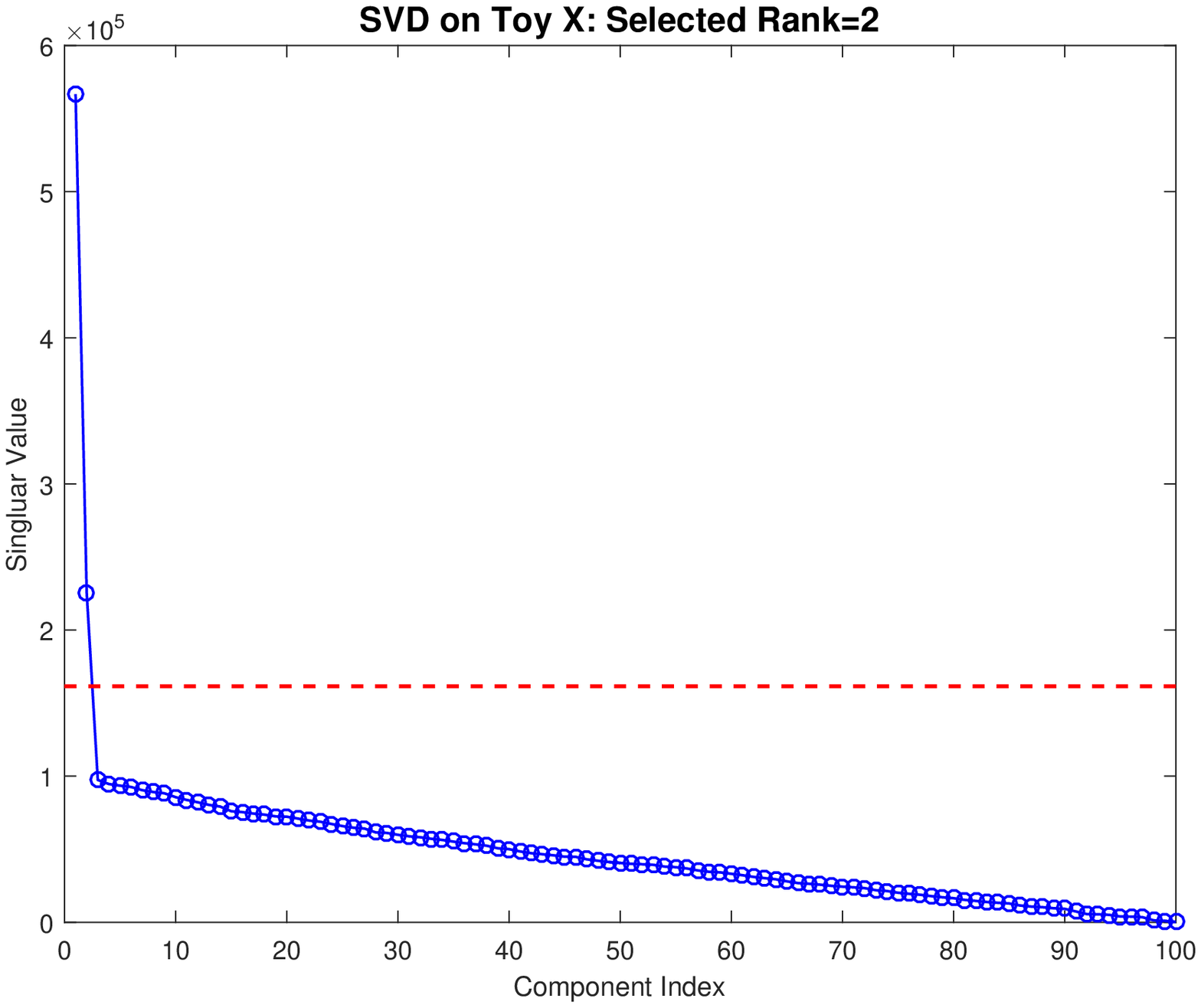}
		\vspace{4ex}
		\centering
		\includegraphics[scale=0.3]{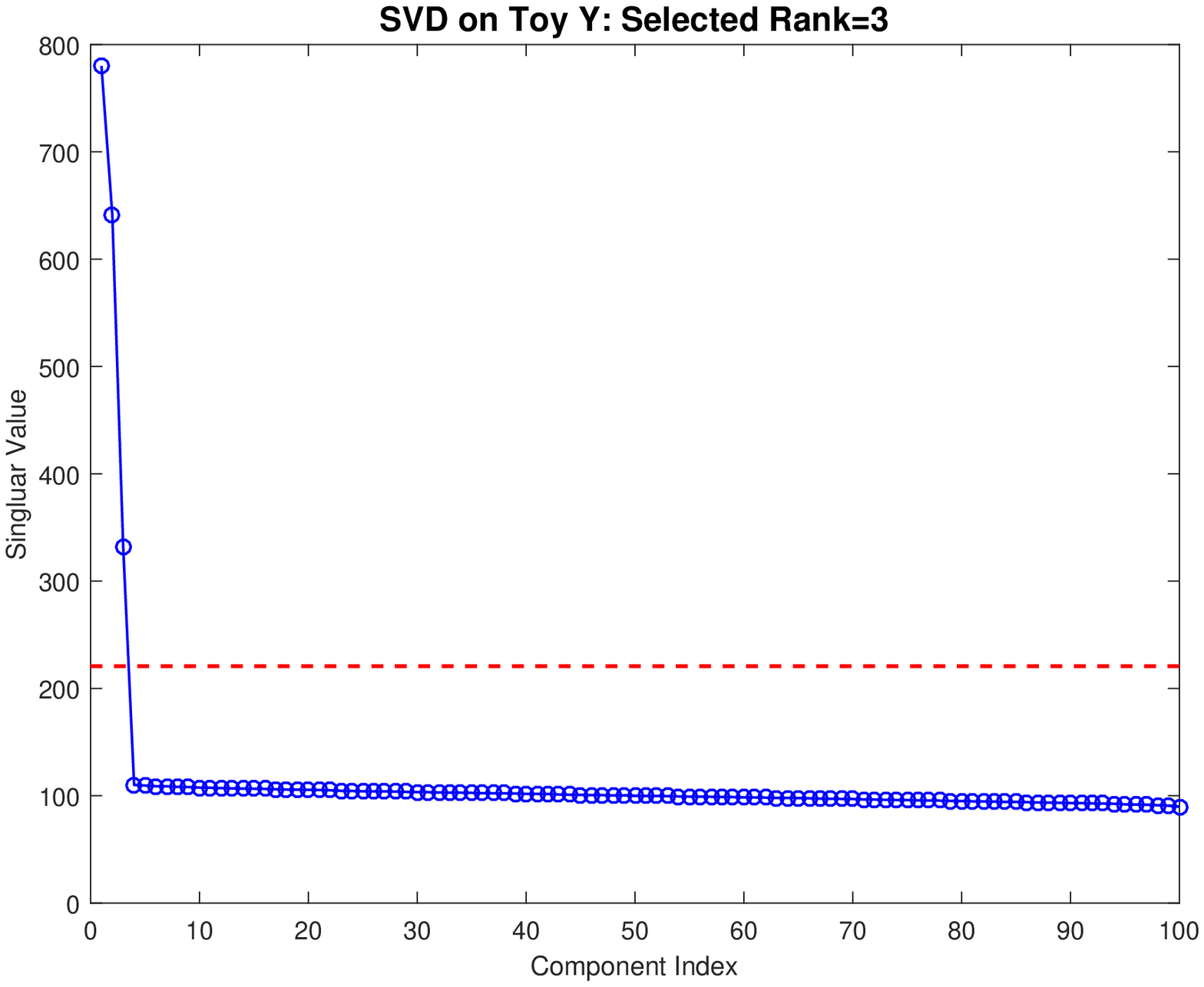}
		\vspace{4ex}
	\caption[Scree plots for the toy data]{Scree plots for the toy data sets $X$ (left) and $Y$ (right). Both plots display the singular values associated with a component in descending order versus the index of the component. The components with singular values above the dashed red threshold line are regarded as the initial signal components in the first step of AJIVE.}
	\label{fig:toy1singularvalue}
\end{figure}

\subsubsection{Approximation accuracy estimation}
\label{c:jive-s:method-subsec:step1-accuracy}

A major challenge is segmentation of the joint and individual variation in the presence of noise which individually perturbs each signal. A first step towards addressing this is a careful study of how well $A_k$ is approximated by $\tilde{A}_k$ using the \emph{Generalized $\sin \theta$ Theorem}~\citep{wedin1972perturbation}.

\paragraph{Pseudometric between subspaces}

To apply the Generalized $\sin\theta$ Theorem, we use the following pseudometric as a notion of distance between theoretical and perturbed subspaces. Recall that $\row(A_k)$, $\row(\tilde{A}_k)$ are respectively the $r_k$- and $ \tilde{r}_k$-dimensional score subspaces of $\mathbb{R}^{n}$ for the matrix $A_k$ and its approximation $\tilde{A}_k$. The corresponding projection matrices are $P_{A_k}$ and $P_{\tilde{A}_k}$, respectively. A pseudometric between the two subspaces can be defined as the difference of the projection matrices under the operator $L^2$ norm, i.e., $\rho \{ \row(A_k), \row(\tilde{A}_k)\} = \|P_{A_k} - P_{\tilde{A}_k}\|$~\citep{stewart1990matrix}. When $r_k = \tilde{r}_k$, this pseudometric is also a distance between the two subspaces.

An insightful understanding of this pseudometric $\rho \{ \row(A_k), \row(\tilde{A}_k)\}$ comes from a principal angle analysis~\citep{jordan1875essai, hotelling1936relations} of the subspaces $\row(A_k)$ and  $\row(\tilde{A}_k)$. Denote the principal angles between $\row(A_k)$ and $\row(\tilde{A}_k)$ as
\begin{equation}
\label{equ:perturb_angle}
	\Theta\{ \row(A_k), \row(\tilde{A}_k)\} = \{\theta_{k,1}, \ldots, \theta_{k, r_k \wedge \tilde{r}_k}\}
\end{equation} 
with $\theta_{k,1} \leq \cdots \leq \theta_{k,r_k \wedge \tilde{r}_k}$. The pseudometric $\rho \{ \row(A_k), \row(\tilde{A}_k)\} $ is equal to the sine of the maximal principal angle, i.e., $\sin\theta_{k,r_k \wedge \tilde{r}_k}$. Thus the largest principal angle between two subspaces measures their closeness, i.e., distance. 

The pseudometric $\rho\{\row(A_k), \row(\tilde{A}_k)\}$ can be also written as 
$$
\rho \{ \row(A_k), \row(\tilde{A}_k)\} = \|(I-P_{A_k})P_{\tilde{A}_k}\| = \|(I-P_{\tilde{A}_k})P_{A_k}\|,
$$
which gives another useful understanding of this definition. It measures the relative deviation of the signal variation from the theoretical subspace. Accordingly, the similarity/closeness between the subspaces and its perturbation can be written as $\|P_{A_k}P_{\tilde{A}_k}\|$ and is equal to the cosine of the maximal principal angle defined above, i.e., $\cos\theta_{k,r_k \wedge \tilde{r}_k}$. Hence, $\sin^2\theta_{k,r_k \wedge \tilde{r}_k}$ indicates the proportion of signal deviation and $\cos^2\theta_{k,r_k \wedge \tilde{r}_k}$ tells the proportion of remaining signal in the theoretical subspace.

\paragraph{Wedin bound}

For a signal matrix $A_k$ and its perturbation $X_k = A_k + E_k$, the generalized $\sin \theta$ theorem provides a bound for the distance between the rank $\tilde{r}_k$ $(\leq r_k)$ singular subspaces of $A_k$ and $X_k$. This bound quantifies how the theoretical singular subspaces are affected by noise.

\begin{theorem}[Wedin, 1972]
	\label{thm-wedin}
	Let $A_k$ be a signal matrix with rank $r_k$. Letting $A_{k, 1} = U_{k, 1} \Sigma_{k, 1} V_{k, 1}^{\top} $ denote the rank $\tilde{r}_k$ SVD of $A_k$, where $\tilde{r}_k \leq r_k$, write $A_k = A_{k, 1} + A_{k, 0}$. For the perturbation $X_k = A_k + E_k$, a corresponding decomposition can be made as $X_k = \tilde{A}_{k, 1} + \tilde{E}_k$, where $\tilde{A}_{k, 1} = \tilde{U}_{k, 1} \tilde{\Sigma}_{k, 1} \tilde{V}_{k, 1}^{\top}$ is the rank $\tilde{r}_k$ SVD of $X_k$. Assume that there exists an $\alpha \geq 0$ and a $\delta > 0$ such that for $\sigma_{\min}(\tilde{A}_{k, 1})$ and $\sigma_{\max}(A_{k, 0})$ denoting appropriate minimum and maximum singular values
	$$ \sigma_{\min}(\tilde{A}_{k, 1}) \geq \alpha + \delta \quad \text{and} \quad \sigma_{\max}(A_{k, 0}) \leq \alpha.$$
	Then the distance between the row spaces of $\tilde{A}_{k, 1}$ and $A_{k, 1}$ is bounded by
	$$ \rho \{ \row(\tilde{A}_{k, 1}), \row(A_{k, 1})\} \leq \frac{\max{ (\|E_k \tilde{V}_{k, 1}\|, \|E_k^{\top} \tilde{U}_{k, 1}\|} )}{\delta} \wedge 1.$$ 
\end{theorem}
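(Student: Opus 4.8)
The plan is to recognize the statement as the classical Wedin $\sin\theta$ theorem and to follow a residual-based argument, adapted to the row-space pseudometric $\rho$ introduced above. First I would translate the left-hand side into a principal angle: since $\rho\{\row(\tilde A_{k,1}),\row(A_{k,1})\}=\|(I-P_{A_{k,1}})P_{\tilde A_{k,1}}\|=\sin\theta_{\max}$ and $P_{\tilde A_{k,1}}=\tilde V_{k,1}\tilde V_{k,1}^\top$, it suffices to bound $\|(V_{k,1}^\perp)^\top\tilde V_{k,1}\|$, where $V_{k,1}^\perp$ is an orthonormal basis of the orthogonal complement of $\row(A_{k,1})$ in $\mathbb{R}^n$. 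The key structural remark is that this complement splits into $\row(A_{k,0})$, whose singular values are at most $\alpha$, together with the null space of $A_k$, on which $A_k$ acts as $0\le\alpha$; this is exactly where the hypothesis $\sigma_{\max}(A_{k,0})\le\alpha$ will enter.

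Next I would form two residual identities from the perturbed SVD relations $X_k\tilde V_{k,1}=\tilde U_{k,1}\tilde\Sigma_{k,1}$ and $X_k^\top\tilde U_{k,1}=\tilde V_{k,1}\tilde\Sigma_{k,1}$. Substituting $X_k=A_k+E_k$ gives $A_k\tilde V_{k,1}=\tilde U_{k,1}\tilde\Sigma_{k,1}-E_k\tilde V_{k,1}$ and $A_k^\top\tilde U_{k,1}=\tilde V_{k,1}\tilde\Sigma_{k,1}-E_k^\top\tilde U_{k,1}$, so the residuals $R=-E_k\tilde V_{k,1}$ and $S=-E_k^\top\tilde U_{k,1}$ have norms equal precisely to the two quantities inside the $\max$ of the claimed bound. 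I would then left-multiply the first identity by $(U_{k,1}^\perp)^\top$ and the second by $(V_{k,1}^\perp)^\top$. Because $(U_{k,1}^\perp)^\top A_{k,1}=0$ and $(V_{k,1}^\perp)^\top A_{k,1}^\top=0$, only the $A_{k,0}$ contribution survives, and it factors as $(V_{k,1}^\perp)^\top A_k^\top=\big((V_{k,1}^\perp)^\top V_{k,0}\big)\Sigma_{k,0}U_{k,0}^\top$ with operator norm at most $\sigma_{\max}(A_{k,0})\le\alpha$ (and symmetrically on the other side).

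Writing $G:=(U_{k,1}^\perp)^\top\tilde U_{k,1}$ and $H:=(V_{k,1}^\perp)^\top\tilde V_{k,1}$, the crux is that the cross terms produced above involve $U_{k,0}^\top\tilde U_{k,1}$ and $V_{k,0}^\top\tilde V_{k,1}$, which are \emph{sub-blocks} of $G$ and $H$ respectively (since $U_{k,0}\subset\mathrm{span}(U_{k,1}^\perp)$ and $V_{k,0}\subset\mathrm{span}(V_{k,1}^\perp)$), so their norms are dominated by $\|G\|$ and $\|H\|$. Combining this with $\|H\tilde\Sigma_{k,1}\|\ge\sigma_{\min}(\tilde A_{k,1})\|H\|\ge(\alpha+\delta)\|H\|$ yields the coupled pair
\[
(\alpha+\delta)\,\|H\|\le\alpha\,\|G\|+\|S\|,\qquad (\alpha+\delta)\,\|G\|\le\alpha\,\|H\|+\|R\|.
\]
Setting $m=\max(\|G\|,\|H\|)$ and $r=\max(\|R\|,\|S\|)$, each inequality gives $(\alpha+\delta)m\le\alpha m+r$, so the $\alpha$-weighted cross terms cancel against the lower bound $\alpha+\delta$ and leave $\delta m\le r$. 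Hence $\sin\theta_{\max}=\|H\|\le m\le\max(\|E_k\tilde V_{k,1}\|,\|E_k^\top\tilde U_{k,1}\|)/\delta$, and the bound is capped by $1$ because any sine is at most $1$, producing the stated $\wedge 1$.

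The step I expect to be the main obstacle is the coupling itself: realizing that the off-diagonal blocks $U_{k,0}^\top\tilde U_{k,1}$ and $V_{k,0}^\top\tilde V_{k,1}$ are sub-blocks of the very matrices $G$ and $H$ one is trying to bound, which is what closes the two inequalities into each other and makes the gap $\delta$ appear in the denominator. For the operator $L^2$ norm used here this closes cleanly by taking maxima, so no full Sylvester-equation inversion is needed; however, care is required to keep the left (column-space) and right (row-space) angles coupled throughout rather than attempting to bound $\|H\|$ in isolation, and to verify that the null space of $A_k$ inside $V_{k,1}^\perp$ only strengthens the estimate since $A_k$ contributes nothing there.
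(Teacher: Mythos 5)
Your proof is correct, but there is nothing in the paper to compare it against: Theorem~\ref{thm-wedin} is stated as an attributed classical result (Wedin, 1972; see also Stewart and Sun, 1990), and the paper's appendix proves only Lemmas~\ref{lemma-exist}, \ref{lemma-bound} and \ref{lemma-bound-multi} --- the theorem itself is simply quoted. What you have reconstructed is, in all essentials, the classical residual argument for the generalized $\sin\theta$ theorem from the cited sources. The two residual identities $A_k\tilde V_{k,1}=\tilde U_{k,1}\tilde\Sigma_{k,1}-E_k\tilde V_{k,1}$ and $A_k^\top\tilde U_{k,1}=\tilde V_{k,1}\tilde\Sigma_{k,1}-E_k^\top\tilde U_{k,1}$ are valid because $(\tilde U_{k,1},\tilde\Sigma_{k,1},\tilde V_{k,1})$ are exact singular triples of $X_k$, so the discarded part $\tilde E_k$ annihilates $\tilde V_{k,1}$ and $\tilde U_{k,1}$; projecting onto the complements of $\col(U_{k,1})$ and $\row(A_{k,1})$ kills $A_{k,1}$ and leaves cross terms bounded by $\sigma_{\max}(A_{k,0})\le\alpha$ times $\|U_{k,0}^\top\tilde U_{k,1}\|$ and $\|V_{k,0}^\top\tilde V_{k,1}\|$; and the coupled pair of inequalities closes under the maximum, giving $\delta\,\max(\|G\|,\|H\|)\le\max(\|R\|,\|S\|)$, which is exactly how the gap $\delta$ between $\sigma_{\min}(\tilde A_{k,1})$ and $\sigma_{\max}(A_{k,0})$ ends up in the denominator. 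Your identification of the pseudometric with $\|H\|$ is also sound here, since under $\sigma_{\min}(\tilde A_{k,1})\ge\alpha+\delta>0$ both row spaces have dimension $\tilde r_k$, so $\rho$ coincides with the sine of the largest principal angle. You also correctly flag the one step that carries the whole proof: the domination $\|U_{k,0}^\top\tilde U_{k,1}\|\le\|G\|$ and $\|V_{k,0}^\top\tilde V_{k,1}\|\le\|H\|$ (write $U_{k,0}=U_{k,1}^\perp C$ with $C$ having orthonormal columns, so $U_{k,0}^\top\tilde U_{k,1}=C^\top G$), without which one only gets the uncoupled and useless bound $(\alpha+\delta)\|H\|\le\alpha+\|S\|$. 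If you write this up formally, that observation and the dimension-equality remark are the only two details that need to be made explicit; otherwise the argument is complete.
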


In practice we do not observe $A_{k, 0}$ thus $\delta$ cannot be estimated in general. A special case of interest for AJIVE is $\tilde{r}_k = r_k$, in which case $A_{k, 0} = 0, A_k = A_{k, 1}$. The following is an adaptation of the generalized $\sin \theta$ theorem to this case.

\begin{corollary}[Bound for correctly specified rank]
	\label{thm-wedin-full}
	For each $k \in \{ 1, \ldots , K\}$, the signal matrix $A_k$ is perturbed by additive noise $E_k$. Let $\theta_{k, \tilde{r}_k}$ be the largest principal angle for the subspace of signal $A_k$ and its approximation $\tilde{A}_k$, where $\tilde{r}_k = r_k$. Denote the SVD of $\tilde{A}_k$ as $\tilde{U}_k\tilde{\Sigma}_k\tilde{V}_k^\top$. The distance between the subspaces of $A_k$ and $\tilde{A}_k$, $\rho\{ \row(A_k), \row(\tilde{A}_k)\}$, i.e., sine of $\theta_{k,\tilde{r}_k}$, is bounded above by
	\begin{equation}
	\label{equ:wedin-correct}
	\rho \{ \row(A_k), \row(\tilde{A}_k)\} = \sin\theta_{k,\tilde{r}_k} \leq \frac{\textnormal{max}(\|E_k\tilde{V}_k\|, \|E_k^\top\tilde{U}_k\|)}{\sigma_{\textnormal{min}}(\tilde{A}_k)} \wedge 1.
	\end{equation}
\end{corollary}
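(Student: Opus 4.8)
The plan is to obtain this corollary as a direct specialization of the Generalized $\sin\theta$ Theorem (Theorem~\ref{thm-wedin}) to the case $\tilde{r}_k = r_k$, so that the only real work is verifying that the hypotheses of that theorem can be met with a concrete admissible choice of the parameters $\alpha$ and $\delta$.

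First I would record what the correctly-specified-rank assumption does to the decomposition $A_k = A_{k,1} + A_{k,0}$ appearing in Theorem~\ref{thm-wedin}. Since $A_{k,1}$ is the rank $\tilde{r}_k$ SVD of $A_k$ and here $\tilde{r}_k = r_k = \rank(A_k)$, the truncation retains all of $A_k$; hence $A_{k,0} = 0$ and $A_{k,1} = A_k$. Consequently $\row(A_{k,1}) = \row(A_k)$, and on the perturbed side $\tilde{A}_{k,1} = \tilde{A}_k$ with $\row(\tilde{A}_{k,1}) = \row(\tilde{A}_k)$, so the left-hand side of the Wedin bound is exactly the quantity $\rho\{\row(A_k), \row(\tilde{A}_k)\}$ appearing in the corollary.

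Next I would choose the gap parameters. Because $A_{k,0} = 0$, we have $\sigma_{\max}(A_{k,0}) = 0$, so $\alpha = 0$ is admissible, satisfying $\sigma_{\max}(A_{k,0}) \le \alpha$. With $\alpha = 0$ the remaining requirement $\sigma_{\min}(\tilde{A}_{k,1}) \ge \alpha + \delta$ reduces to $\delta \le \sigma_{\min}(\tilde{A}_k)$, and the tightest conclusion is obtained by taking the largest feasible gap $\delta = \sigma_{\min}(\tilde{A}_k)$. Substituting this $\delta$ into the conclusion of Theorem~\ref{thm-wedin} yields
\[
\rho\{\row(A_k), \row(\tilde{A}_k)\} \le \frac{\max(\|E_k\tilde{V}_k\|,\, \|E_k^\top\tilde{U}_k\|)}{\sigma_{\min}(\tilde{A}_k)} \wedge 1,
\]
which is the asserted inequality \eqref{equ:wedin-correct}. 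To match the stated form, I would finally invoke the characterization established in the pseudometric discussion above, where the principal angles are as in \eqref{equ:perturb_angle}: the pseudometric $\rho\{\row(A_k), \row(\tilde{A}_k)\}$ equals $\sin\theta_{k, r_k\wedge\tilde{r}_k}$, and since $r_k = \tilde{r}_k$ this is precisely $\sin\theta_{k,\tilde{r}_k}$, closing the chain of (in)equalities in the statement.

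This argument is essentially a mechanical substitution into the already-proved Wedin bound, so there is no deep obstacle. The only point requiring genuine care is the verification at the start of the third paragraph: namely, confirming that the correctly-specified rank forces $A_{k,0} = 0$, which is what legitimizes the choice $\alpha = 0$ and in turn permits taking $\delta$ all the way up to $\sigma_{\min}(\tilde{A}_k)$. Once that admissibility is secured, the bound follows immediately.
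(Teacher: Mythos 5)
Your proposal is correct and follows essentially the same route as the paper: the paper itself derives this corollary by noting that $\tilde{r}_k = r_k$ forces $A_{k,0} = 0$ and $A_k = A_{k,1}$, and then applying Theorem~\ref{thm-wedin} with $\alpha = 0$ and $\delta = \sigma_{\min}(\tilde{A}_k)$, exactly as you do. Your additional step identifying $\rho\{\row(A_k),\row(\tilde{A}_k)\}$ with $\sin\theta_{k,\tilde{r}_k}$ via the pseudometric discussion is also the paper's intended reading.
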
	
	
In this case the bound is driven by the maximal value of noise energy in the column and row spaces and by the estimated smallest signal singular value. This is consistent with the intuition that a deviation distance, i.e., a largest principal angle, is small when the signal is strong and perturbations are weak. 

In general, it can be very challenging to correctly estimate the true rank of $A_k$. If the true rank $r_k$ is not correctly specified, then different applications of the Wedin bound are useful. In particular, when $A_{k, 0}$ is not 0, i.e., $\tilde{r}_k < r_k$, insights come from replacing $E_k$ by $E_k + A_{k, 0}$ in the Wedin bound.

\begin{corollary}[Bound for under-specified rank]
	\label{thm-wedin-under}
	For each $k \in \{ 1, \ldots , K\}$, the signal matrix $A_k$ with rank $r_k$ is perturbed by additive noise $E_k$. Let $\tilde{A}_k = \tilde{U}_k\tilde{\Sigma}_k\tilde{V}_k^\top$ be the rank $\tilde{r}_k$ SVD approximation of $A_k$ from the perturbed matrix, where $\tilde{r}_k < r_k$.
	Denote $A_k =  A_{k, 1} + A_{k, 0}$, where $A_{k, 1}$ is the rank $\tilde{r}_k$ SVD of $A$. Then the distance between $\row(A_{k, 1})$ and $\row(\tilde{A}_k)$ is bounded above by
	$$
	\rho \{ \row(A_{k, 1}), \row(\tilde{A}_k)\} \leq \frac{\max{ (\|(E_k + A_{k, 0}) \tilde{V}_k\|, \|(E_k + A_{k, 0})^{\top} \tilde{U}_k\| )}}{\sigma_{\min}(\tilde{A}_k)} \wedge 1.
	$$
\end{corollary}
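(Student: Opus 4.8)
The plan is to reduce the under-specified case to the correctly-specified case already settled in Corollary~\ref{thm-wedin-full}, using exactly the regrouping suggested in the text: treat the retained part of the signal as the new signal and fold the discarded part into the noise. Concretely, starting from the decomposition $A_k = A_{k,1} + A_{k,0}$, I would rewrite the perturbed matrix as
\[
X_k = A_k + E_k = A_{k,1} + (A_{k,0} + E_k),
\]
and regard $A_{k,1}$ as the signal and $E_k' := A_{k,0} + E_k$ as the effective perturbation. The observed matrix $X_k$ is unchanged by this regrouping, and $\tilde{A}_k$ remains, by its very construction, the rank-$\tilde{r}_k$ SVD of $X_k$.

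The key observation is that this regrouping converts the under-specified problem into a correctly-specified one. Since $\tilde{r}_k < r_k$ and $A_{k,1}$ is the top-$\tilde{r}_k$ truncation of $A_k$, the matrix $A_{k,1}$ has rank exactly $\tilde{r}_k$; hence the rank of the new signal $A_{k,1}$ equals the rank of its approximation $\tilde{A}_k$, which is precisely the hypothesis $\tilde{r}_k = r_k$ required by Corollary~\ref{thm-wedin-full}. Invoking that corollary with signal $A_{k,1}$, noise $E_k'$, and approximation $\tilde{A}_k$ gives
\[
\rho\{\row(A_{k,1}), \row(\tilde{A}_k)\} \leq \frac{\max(\|E_k'\tilde{V}_k\|, \|(E_k')^{\top}\tilde{U}_k\|)}{\sigma_{\min}(\tilde{A}_k)} \wedge 1,
\]
and substituting back $E_k' = A_{k,0} + E_k$ yields exactly the claimed bound.

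The one point that genuinely needs care — and which I expect to be the only real obstacle — is verifying that the separation hypothesis of the generalized $\sin\theta$ theorem (Theorem~\ref{thm-wedin}) survives the relabeling, since the effective noise $E_k' = A_{k,0} + E_k$ need not be small; indeed $A_{k,0}$ may carry substantial signal energy. This is resolved by tracking the Wedin decomposition of the re-framed problem: because the new signal $A_{k,1}$ is itself of rank $\tilde{r}_k$, its own discarded-signal block vanishes, so one may take $\alpha = 0$ and $\delta = \sigma_{\min}(\tilde{A}_k) > 0$. Consequently no bound on the magnitude of $E_k'$ is needed, and the truncation $\wedge 1$ absorbs any case in which the numerator exceeds $\sigma_{\min}(\tilde{A}_k)$. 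Assembling these observations completes the argument.
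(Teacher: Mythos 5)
Your proof is correct and follows essentially the same route as the paper, which justifies this corollary precisely by the regrouping $X_k = A_{k,1} + (A_{k,0} + E_k)$, i.e., ``replacing $E_k$ by $E_k + A_{k,0}$ in the Wedin bound.'' Your additional check that the separation hypothesis of Theorem~\ref{thm-wedin} holds with $\alpha = 0$ and $\delta = \sigma_{\min}(\tilde{A}_k)$ (since the re-framed signal $A_{k,1}$ has rank exactly $\tilde{r}_k$) is exactly the point that makes the reduction to Corollary~\ref{thm-wedin-full} legitimate.
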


For the other type of initial rank misspecification, $\tilde{r}_k > r_k$, we augment $A_k$ with appropriate noise components to be able to use the Wedin bound.

\begin{corollary}[Bound for over-specified rank]
	\label{thm-wedin-over}
	For each $k \in \{ 1, \ldots , K\}$, the signal matrix $A_k = U_k \Sigma_k V_k^\top$ with rank $r_k$ is perturbed by additive noise $E_k$. Let $\tilde{A}_k = \tilde{U}_k\tilde{\Sigma}_k\tilde{V}_k^\top$ be the rank $\tilde{r}_k$ SVD of $X_k$, where $\tilde{r}_k > r_k$.
    Let $E_0$ be the rank $\tilde{r}_k - r_k$ SVD of $(I - U_k U_k^\top) E_k (I - V_k V_k^\top)$. Then the pseudometric between $\row(A_k)$ and $\row(\tilde{A}_k)$ is bounded above by
	$$
	\rho \{ \row(A_k), \row(\tilde{A}_k)\} \leq \frac{\max{(\|(E_k - E_0) \tilde{V}_k\|, \|(E_k - E_0)^{\top} \tilde{U}_k\|)}}{\sigma_{\min}(\tilde{A}_k)} \wedge 1.
	$$
\end{corollary}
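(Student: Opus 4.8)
The plan is to reduce the over-specified case to the correctly specified case of Corollary~\ref{thm-wedin-full} by \emph{augmenting} the signal with the leftover noise directions that the rank-$\tilde{r}_k$ SVD will inevitably pick up. Concretely, I would set $\hat{A}_k := A_k + E_0$, where $E_0$ is the rank $\tilde{r}_k - r_k$ SVD of $(I - U_k U_k^\top)E_k(I - V_k V_k^\top)$ as in the statement. Writing $X_k = \hat{A}_k + (E_k - E_0)$ then exhibits $X_k$ as the perturbation of a signal of rank exactly $\tilde{r}_k$ by the noise $E_k - E_0$, and $\tilde{A}_k$ remains the rank-$\tilde{r}_k$ SVD of this same matrix $X_k$.

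First I would establish the two structural facts about $\hat{A}_k$ on which everything hinges. Because $E_0$ is a truncated SVD of the \emph{doubly projected} noise $(I - U_kU_k^\top)E_k(I - V_kV_k^\top)$, its column space lies in $\col(A_k)^\perp$ and its row space lies in $\row(A_k)^\perp$; equivalently $A_k E_0^\top = 0$ and $A_k^\top E_0 = 0$. Consequently $\hat{A}_k^\top \hat{A}_k = A_k^\top A_k + E_0^\top E_0$ is a sum of two positive semidefinite matrices with orthogonal ranges, so $\row(\hat{A}_k) = \row(A_k) \oplus \row(E_0)$ is an orthogonal direct sum. This yields (i) $\rank(\hat{A}_k) = r_k + (\tilde{r}_k - r_k) = \tilde{r}_k$, matching the approximation rank, and (ii) the nesting $\row(A_k) \subseteq \row(\hat{A}_k)$, equivalently $P_{\hat{A}_k} P_{A_k} = P_{A_k}$.

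With these in hand, applying Corollary~\ref{thm-wedin-full} to the correctly specified pair $(\hat{A}_k, E_k - E_0)$ --- legitimate since $\rank(\hat{A}_k) = \tilde{r}_k$ equals the rank of its SVD approximation $\tilde{A}_k$ --- immediately gives
\begin{equation*}
\rho\{\row(\hat{A}_k), \row(\tilde{A}_k)\} \leq \frac{\max(\|(E_k - E_0)\tilde{V}_k\|, \|(E_k - E_0)^\top \tilde{U}_k\|)}{\sigma_{\min}(\tilde{A}_k)} \wedge 1,
\end{equation*}
which is exactly the right-hand side of the claimed bound. It remains to replace $\row(\hat{A}_k)$ by $\row(A_k)$ on the left. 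Here the dimensions differ ($r_k < \tilde{r}_k$), so I would use the one-sided form of the pseudometric, $\rho\{\row(A_k), \row(\tilde{A}_k)\} = \|(I - P_{\tilde{A}_k})P_{A_k}\|$, i.e.\ the sine of the largest of the $r_k$ principal angles. Inserting $P_{A_k} = P_{\hat{A}_k}P_{A_k}$ and using submultiplicativity together with $\|P_{A_k}\| = 1$,
\begin{equation*}
\|(I - P_{\tilde{A}_k})P_{A_k}\| = \|(I - P_{\tilde{A}_k})P_{\hat{A}_k}P_{A_k}\| \leq \|(I - P_{\tilde{A}_k})P_{\hat{A}_k}\| = \rho\{\row(\hat{A}_k), \row(\tilde{A}_k)\},
\end{equation*}
and chaining the two displays finishes the proof.

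The main obstacle I anticipate is the bookkeeping around the dimension mismatch rather than any deep estimate: one must be careful that the pseudometric $\rho$ in the over-specified regime is read as the sine of the largest of only $r_k \wedge \tilde{r}_k = r_k$ principal angles, so that it coincides with $\|(I - P_{\tilde{A}_k})P_{A_k}\|$ and not with the symmetric $\|P_{A_k} - P_{\tilde{A}_k}\|$, which would be forced toward $1$ by the surplus dimension of $\row(\tilde{A}_k)$. The complementary point is that the orthogonality of $E_0$ to the signal is precisely what makes the ranks add and forces the nesting $\row(A_k) \subseteq \row(\hat{A}_k)$; everything else is a direct appeal to Corollary~\ref{thm-wedin-full}.
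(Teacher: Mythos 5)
Your proof is correct and follows exactly the route the paper intends: the paper's only stated justification for this corollary is that one ``augments $A_k$ with appropriate noise components to be able to use the Wedin bound,'' which is precisely your construction $\hat{A}_k = A_k + E_0$ perturbed by the noise $E_k - E_0$. Your write-up supplies the details the paper leaves implicit --- the orthogonality $A_k^\top E_0 = 0$, $A_k E_0^\top = 0$ giving $\rank(\hat{A}_k) = \tilde{r}_k$ and $\row(A_k) \subseteq \row(\hat{A}_k)$, and the one-sided reading $\rho = \|(I - P_{\tilde{A}_k})P_{A_k}\|$ needed for the final comparison --- and all of these steps are sound.
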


The bounds in Corollaries~\ref{thm-wedin-full}--\ref{thm-wedin-over} provide many useful insights. However, these bounds still cannot be used directly since we do not observe the error matrices $E_1, \ldots, E_K$. A re-sampling based estimator of the Wedin bounds is provided in the next paragraph. As seen in Figure~\ref{fig:wedin_bound_toydata}, this estimator appropriately adapts to each of the above three cases. Moreover, Figure~\ref{fig:wedin_bound_toydata} also indicate that the Wedin bound for over-specified rank is usually very conservative. 

\begin{figure}
	\vspace{0.1in}
	\begin{minipage}[b]{\linewidth}
		\centering
		\includegraphics[scale= 0.45]{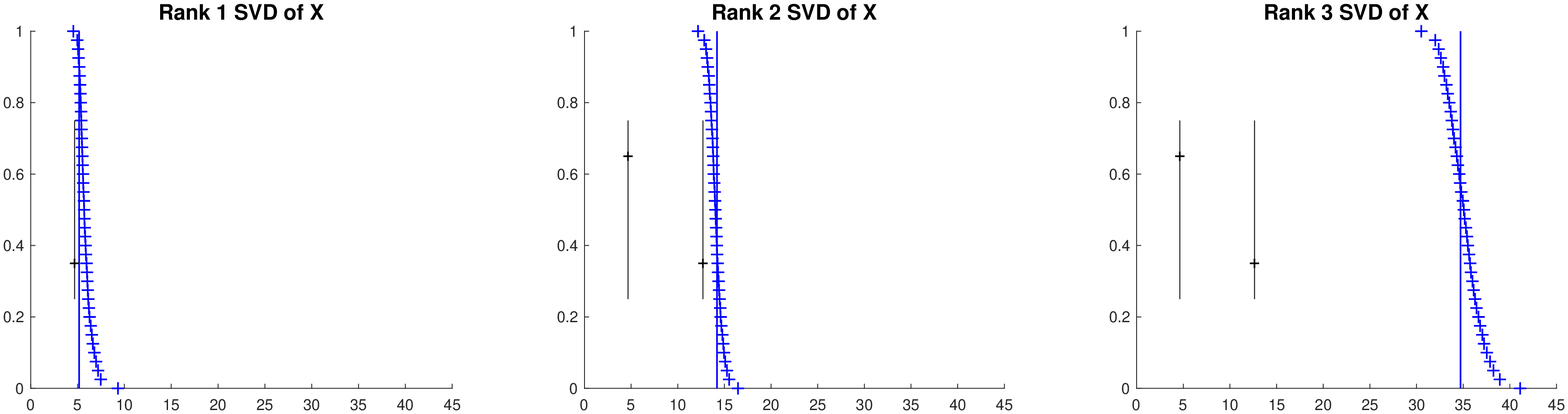}
		\vspace{4ex}
	\end{minipage}
	\begin{minipage}[b]{\linewidth}
		\centering
		\includegraphics[scale= 0.45]{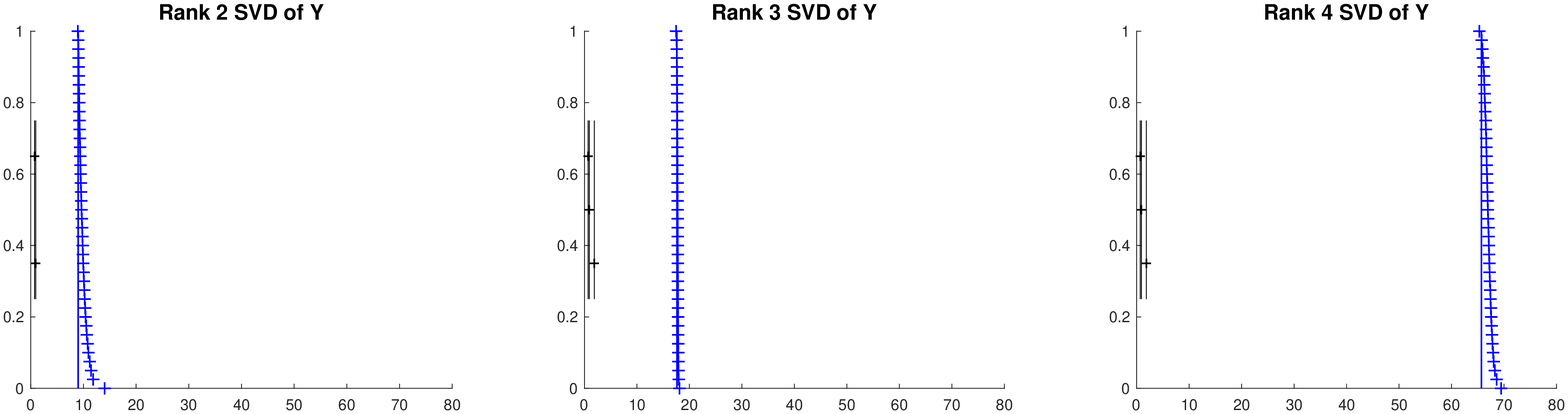}
		\vspace{4ex}
	\end{minipage}
	\caption{Principal angle plots between each singular subspace of the signal matrix $A_{k,1}$ and its estimator $\tilde{A}_k$ for the toy dataset. Graphics for $X$ are on the upper row, with $Y$ on the lower row. The left, middle and right columns are the under-specified, correctly specified and over-specified signal matrix rank cases respectively. Each $x$-axis represents the angle. The $y$-axis shows the values of the survival function of the resampled distribution, which are shown as blue plus signs in the figure. The vertical blue solid line is the theoretical Wedin bound, showing this bound is well estimated. The vertical black solid line segments represent the principal angles $\theta_{k,1}, \ldots, \theta_{k, r_k \wedge \tilde{r}_k}$ between $\row(A_{k, 1})$ and $\row(\tilde{A}_k)$. The distance between the black and blue lines reveals when the Wedin bound is tight.}
	\label{fig:wedin_bound_toydata}
\end{figure}

\paragraph{Estimation and evaluation of the Wedin bound}

As mentioned above, the perturbation bounds of each $\theta_{k, r_k \wedge \tilde{r}_k}$ require the estimation of terms $\|E_k\tilde{V}_k\|$, $\|E_k^\top\tilde{U}_k\|$ for $k \in \{ 1, 2\}$. These terms are measurements of energies of the noise matrices projected onto the signal column and row spaces. Since an isotropic error model is assumed, the \emph{distributions} of energy of the noise matrices in arbitrary fixed directions are equal. Thus, if we sample random subspaces of dimension $\tilde{r}_k$, that are orthogonal to the estimated signal $\tilde{A}_k$, and use the observed residual $\tilde{E}_k = X_k - \tilde{A}_k$, this should provide a good estimator of the distribution of the unobserved terms $\|E_k\tilde{V}_k\|$, $\|E_k^\top\tilde{U}_k\|$.

In particular, consider the estimation of the term $\|E_k\tilde{V}_k\|$. We draw a random subspace of dimension $\tilde{r}_k$ that is orthogonal to $\tilde{V}_k$, denoted as $V_k^{\star}$. The observed data block $X_k$ is projected onto the subspace spanned by $V_k^{\star}$, written as $X_k V_k^{\star}$. The distribution (with respect to the $V_k^{\star}$ variation) of the operator $L^2$ norm $\|X_k V_k^{\star}\| = \|\tilde{E}_k V_k^{\star}\|$ approximates the distribution of the unknown $\|E_k\tilde{V}_k\|$ because both measure noise energy in essentially random directions. Similarly the estimation of $\|E_k^\top \tilde{U}_k\|$ is approximated by $\|X_k^\top U_k^{\star}\|$, where $U_k^{\star}$ is a random $\tilde{r}_k$-dimensional subspace orthogonal to $\tilde{U}_k$. These distributions are used to estimate the Wedin bound by generating 1000 replications of $\|X_k V_k^{\star}\|$ and $\|X_k^\top U_k^{\star}\|$, and plugging these into \eqref{equ:wedin-correct}. The quantiles of the resulting distributions are used as prediction intervals for the unknown theoretical Wedin bound. Note this random subspace sampling scheme provides a distribution with smaller variance than simply sampling from the remaining singular values of $X_k$, i.e., using 1000 subspaces each generated by a random sample of $\tilde{r}_k$ remaining singular vectors.

There are two criteria for evaluating the effectiveness of the estimator. First is how well the resampled distributions approximate the underlying theoretical Wedin bounds. This is addressed in Figure~\ref{fig:wedin_bound_toydata}, which is based on the toy example in Section~\ref{c:jive-s:intro-subsec:toy}. For each of the matrices $X$ and $Y$ (top and bottom rows), the under, correctly, and over specified signal rank cases (Corollaries \ref{thm-wedin-under}, \ref{thm-wedin-full} and \ref{thm-wedin-over}, respectively) are carefully investigated. In each case the theoretical Wedin bound (calculated using the true underlying quantities, that are only known in a simulation study) are shown as vertical blue lines. Our resampling approach provides an estimated distribution, the survival function of which is shown using blue plus signs. This indicates remarkably effective estimation of the Wedin bound in all cases.

The second more important criterion is how well the prediction interval covers the actual principal angles between $\row(A_k)$ and $\row(\tilde{A}_k)$. These angles are shown as vertical black line segments in Figure~\ref{fig:wedin_bound_toydata}. For the square matrix $X$, in the under and correctly specified case (top, left, and center), the Wedin bound seems relatively tight. In all other cases, the Wedin bound is conservative.

Figure~\ref{fig:wedin_bound_toydata} shows one realization of the noise in the toy example. A corresponding simulation study is summarized in Table~\ref{t:smallsimulation}. For this we generated 10,000 independent copies of the data sets $X$ ($100\times 100$, true signal rank $r_1 = 2$) and $Y$ ($\mbox{10,000} \times 100$, true signal rank $r_2 = 3$). Then for several low-rank approximations (columns of Table~\ref{t:smallsimulation}) we calculated the estimate of the angle between the true signal and the low-rank approximation. Table~\ref{t:smallsimulation} reports the percentage of the times the corresponding quantile of the resampled estimate is bigger than the true angle for the matrix $X$. When the rank is correctly specified, i.e., $\tilde{r}_1 = r_1 = 2$, we see that the performance for the square matrix $X$ is satisfactory as the empirical percentages are close to the nominal values. When the rank is misspecified, the empirical upper bound is conservative. Corresponding empirical percentages for the high dimension, low sample size data set $Y$ are all 100\%, and thus are not shown. This is caused by the fact that Wedin bound can be very conservative if the matrix is far from square. As seen in Figure~\ref{fig:diagnostic_angle_plot} this can cause identification of spurious joint components. This motivates our development of a diagnostic plot in Section~\ref{c:jive-s:method-subsec:step2}. Recent works of \citet{cai2016rate} and \citet{o2013random} may provide potential approaches for improvement of the Wedin bound.
\begin{table}[tb]
	\caption{Coverages of the prediction intervals of the true angle between the signal $\row(A_{k, 1})$ and its estimator $\row(\tilde{A}_k)$ for the matrix $X$ in the toy example. Rows are nominal levels. Columns are ranks of approximation (where 2 is the correct rank). The simulation based on 10000 realizations of $X$ shows good performance for this square matrix.}\label{t:smallsimulation}
	\begin{center}
		\begin{tabular}{c|ccc}
			&1 & \bf{2} & 3 \\
			\hline
			50\%  & 91.9\% &   \bf{63.6\%} &  100.0\% \\
			90\% &100.0\% &  \bf{89.6\%} & 100.0\%\\
			95\% & 100.0\% &  \bf{93.7\%} & 100.0\%\\
			99\% & 100.0\% &  \bf{98.0\%} & 100.0\%\\
			\hline
		\end{tabular}
	\end{center}
\end{table}

\subsection{Step 2: Score space segmentation}
\label{c:jive-s:method-subsec:step2}

\subsubsection{Two-block case}
\label{c:jive-s:method-subsec:step2-2blocks}

For a clear introduction to the basic idea of score space segmentation into joint and individual components, the two-block special case ($K=2$) is first studied. The goal is to use the low-rank approximations $\tilde{A}_k$ from Eq.~(\ref{equ:step1}) to obtain estimates of the common joint and individual score subspaces. Due to the presence of noise, the components of $\row(\tilde{A}_{1})$ and $\row(\tilde{A}_{2})$ corresponding to the underlying joint space, no longer are the same, but should have a relatively small angle. Similarly, the components corresponding to the underlying individual spaces are expected to have a relatively large angle. This motivates the use of principal angle analysis to separate the joint from the individual components. 

\paragraph{Principal angle analysis}

One of the ways of  computing the principal angles between $\row(\tilde{A}_{1})$ and $\row(\tilde{A}_{2})$ is to perform SVD on a concatenation of their right singular vector matrices~\citep{miao1992principal}, i.e.,
\begin{equation}
\label{equ:JointM2}
M \triangleq \begin{bmatrix}
\tilde{V}_{1}^\top \\
\tilde{V}_{2}^\top \\
\end{bmatrix} = U_{M}\Sigma_{M}V_{M}^\top,
\end{equation}
where the singular values, $\sigma_{M,i}$, on the diagonal of $\Sigma_M$, determine the principal angles, $\Phi \{\row(\tilde{A}_1), \row(\tilde{A}_2)\}$ = $\{\phi_1, \ldots, \phi_{\tilde{r}_1 \wedge \tilde{r}_2}\}$, where, for each $i \in \{1, \ldots, \tilde{r}_1 \wedge \tilde{r}_2\}$,
\begin{equation}
\label{equ: principalangle}
\phi_i = \arccos\{(\sigma_{M,i})^2-1\}.
\end{equation}

This SVD decomposition can be understood as a tool that finds pairs of directions in the two subspaces $\row(\tilde{A}_1)$ and $\row(\tilde{A}_2)$ of minimum angle, sorted in increasing order. These angles are shown as vertical black line segments in our main diagnostic graphic introduced in Figure~\ref{fig:diagnostic_angle_plot}.\ The first $\tilde{r}_J$ column vectors in $V_M$ will form the orthonormal basis of the estimated joint space, $\row(J) \subseteq \mathbb{R}^n$. A deeper investigation of the relationship between $V_M$ and the canonical correlation vectors in $U_M$ appears in Section~\ref{s:CCAPPA_optimization}. Next we determine which angles are small enough to be labeled as joint components, i.e., the selection of $\tilde{r}_J$. 

\paragraph{Random direction bound}

In order to investigate which principal angles correspond to random directions, we need to estimate the distribution of principal angles generated by random subspaces. This distribution only depends on the initial input ranks of each data block, $\tilde{r}_k$, and the dimension of the row spaces, $n$. We obtain this distribution by simulation. In particular, $\tilde{V}_1$ and $\tilde{V}_2$ are replaced in \eqref{equ: principalangle} by random subspaces, i.e., each is right multiplied by an independent random orthonormal matrix. The distribution of the smallest principal angle, corresponding to the largest singular value, indicates angles potentially driven by pure noise. We recommend the 5th percentile of the angle distribution as cutoff in practice. Principal angles larger than this are not included in the joint component, which provide 95\% confidence that the selected joint space does not have pure noise components. This cutoff is prominently shown in Figure~\ref{fig:diagnostic_angle_plot} as the vertical dot-dashed red line. The cumulative distribution function of the underlying simulated distribution is shown as red circles.

When the individual spaces are not orthogonal, a sharper threshold based on the Wedin bounds is available.

\paragraph{Threshold based on the Wedin bound}

The following lemma provides a bound on the largest allowable principal angle of the joint part of the initial estimated spaces. 

\begin{lemma}
	\label{lemma-bound}
	Let $\phi$ be the largest principal angle between two subspaces that are each a perturbation of the common row space within $\row(\tilde{A}_1)$ and $\row(\tilde{A}_2)$. That angle is bounded by 
$	\sin\phi \leq \sin(\theta_{1,\tilde{r}_1 \wedge r_1} + \theta_{2,\tilde{r}_2 \wedge r_2} )
$
	in which $\theta_{1,\tilde{r}_1 \wedge r_1}$ and $\theta_{2,\tilde{r}_2 \wedge r_2}$ are the angles given in Eq.~\eqref{equ:perturb_angle}.
\end{lemma}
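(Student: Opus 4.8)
The plan is to view the two perturbed common spaces as images of the true joint space $\row(J)$ under projection onto the two initial estimates, and then chain the single-block perturbation bounds through a triangle inequality for the largest principal angle. Write $\theta_1=\theta_{1,\tilde r_1\wedge r_1}$ and $\theta_2=\theta_{2,\tilde r_2\wedge r_2}$, let $S_0=\row(J)$ (of dimension $r_J$), and define $S_k$, the perturbation of the common space living inside $\row(\tilde A_k)$, as the orthogonal projection $P_{\tilde A_k}S_0$ for $k\in\{1,2\}$. Then $\phi=\theta_{\max}(S_1,S_2)$, where $\theta_{\max}(\cdot,\cdot)$ denotes the largest principal angle, and the goal is $\sin\phi\le\sin(\theta_1+\theta_2)$.

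First I would control the displacement of $\row(J)$ inside each block. Since $\row(J)\subseteq\row(A_k)$, any unit $v\in S_0$ satisfies $P_{A_k}v=v$, so $\|(I-P_{\tilde A_k})v\|=\|(I-P_{\tilde A_k})P_{A_k}v\|\le\|(I-P_{\tilde A_k})P_{A_k}\|=\rho\{\row(A_k),\row(\tilde A_k)\}=\sin\theta_k$, using the pseudometric identity recorded before Corollary~\ref{thm-wedin-full}. Hence every unit vector of $S_0$ makes an angle at most $\theta_k$ with $\row(\tilde A_k)$, and therefore with its projection into $S_k$; this gives $\theta_{\max}(S_0,S_k)\le\theta_k$ for $k\in\{1,2\}$.

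The core step is a triangle inequality for the largest principal angle between equidimensional subspaces. I would use the characterization $\theta_{\max}(S,T)=\max_{x\in S,\,\|x\|=1}\angle(x,T)$, where $\angle(x,T)=\arccos\|P_Tx\|\in[0,\pi/2]$ is the acute angle to the nearest direction $P_Tx$. Choosing a unit $x\in S_1$ that attains $\phi$, let $x_0$ be the normalized projection of $x$ onto $S_0$ and $y$ the normalized projection of $x_0$ onto $S_2$; by the previous step these satisfy $\angle(x,x_0)\le\theta_1$ and $\angle(x_0,y)\le\theta_2$. Since angles between unit vectors obey the spherical (geodesic) triangle inequality, $\angle(x,y)\le\angle(x,x_0)+\angle(x_0,y)\le\theta_1+\theta_2$, and because $y\in S_2$ we obtain $\phi=\angle(x,S_2)\le\angle(x,y)\le\theta_1+\theta_2$. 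Applying $\sin$, which is increasing on $[0,\pi/2]$, yields $\sin\phi\le\sin(\theta_1+\theta_2)$ whenever $\theta_1+\theta_2\le\pi/2$; if the sum exceeds $\pi/2$ the stated bound holds trivially since $\sin\phi\le 1$.

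The main obstacle is the triangle inequality of the third paragraph: one must check that reducing the largest-principal-angle distance to a statement about individual unit vectors is legitimate, which relies on the equal-dimension identity $\sin\theta_{\max}(S,T)=\|(I-P_T)P_S\|$ together with its maximization-over-$S$ form, and that the vector-to-subspace angles stay acute so the $\sin$ monotonicity argument is valid. The perturbation bounds of the second paragraph and the operator-norm identity for $\rho$ are then routine.
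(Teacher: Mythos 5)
Your argument is correct in the regime where the lemma is actually valid, namely $\theta_1+\theta_2\le\pi/2$ (writing $\theta_k=\theta_{k,\tilde r_k\wedge r_k}$), and it takes a genuinely different route from the paper. The paper works entirely at the operator level: with $P_1,P_2,P$ the projections onto the two perturbed joint spaces and onto $\row(J)$, it starts from $\sin\phi=\|(I-P_1)P_2\|$, inserts $I=P+(I-P)$, and uses submultiplicativity together with the identifications $\|(I-P_1)P\|=\sin\theta_1$, $\|(I-P_1)(I-P)\|=\cos\theta_1$, $\|PP_2\|=\cos\theta_2$ to land directly on $\sin\theta_1\cos\theta_2+\cos\theta_1\sin\theta_2=\sin(\theta_1+\theta_2)$. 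You instead work at the vector level: you make the ``perturbed joint spaces'' concrete as $S_k=P_{\tilde A_k}\row(J)$, derive $\theta_{\max}(S_0,S_k)\le\theta_k$ from the pseudometric identity recorded before Corollary~\ref{thm-wedin-full} (a fact the paper simply builds into its notation), and then chain a maximizing unit vector through nearest-point projections using the spherical triangle inequality to obtain the stronger angle inequality $\phi\le\theta_1+\theta_2$, from which the sine bound follows by monotonicity. Your route buys rigor: the paper's two cosine ``identities'' are not correct as stated, since norms of products of projections are cosines of \emph{smallest} principal angles, so $\|PP_2\|\ge\cos\theta_2$ (the wrong direction for an upper bound), and $\|(I-P_1)(I-P)\|=1$ whenever the two orthogonal complements intersect, which is typical since $n>2r_J$; an honest version of the operator argument therefore only yields the weaker bound $\sin\phi\le\sin\theta_1+\sin\theta_2$. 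Your geodesic argument genuinely delivers the sharper sine-sum form, at the modest cost of the equal-dimension and injectivity caveats you flag, which hold precisely when $\theta_k<\pi/2$.

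The one real flaw is your final sentence: the case $\theta_1+\theta_2>\pi/2$ is not trivial, because then $\sin(\theta_1+\theta_2)<1$, so $\sin\phi\le1$ does not suffice. In fact the stated bound can fail there: in $\mathbb{R}^3$ take $\row(J)=\mathrm{span}(e_1)$ and perturbed spaces spanned by $\cos\theta\,e_1+\sin\theta\,e_2$ and $\cos\theta\,e_1+\sin\theta\,e_3$ with $\theta_1=\theta_2=\theta=3\pi/8$; then $\cos\phi=\cos^2\theta$ gives $\sin\phi\approx0.989$ while $\sin(\theta_1+\theta_2)=\sin(3\pi/4)\approx0.707$. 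So no argument can close that case --- the paper's own proof does not handle it either, as its inequalities break down exactly at the cosine identities noted above. The lemma should simply be read, and is only ever used, in the small-angle regime $\theta_1+\theta_2\le\pi/2$, where your proof is complete; it would be better to state that restriction explicitly than to claim the remaining case is trivial.
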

The proof is provided in \ref{s:proofs}. As with the theoretical Wedin bound, the unknown $\theta_{1,\tilde{r}_1 \wedge r_1}$ and $\theta_{2,\tilde{r}_2 \wedge r_2}$ are replaced by distribution estimators of the Wedin bounds. The survival function of the distribution estimator of this upper bound on $\phi$ is shown in Figure~\ref{fig:diagnostic_angle_plot} using blue plus signs. The vertical dashed blue line is the 95th percentile of this distribution, giving 95\% confidence that angles larger do not correspond to joint components of the lower rank approximations in Step 1. The joint rank $\tilde{r}_J$ is selected to be the number of principal angles, $\phi_i$ in \eqref{equ: principalangle}, that are smaller than both the 5th percentile of the random direction distribution and the 95th percentile of the resampled Wedin bound distribution.

Figure~\ref{fig:diagnostic_angle_plot} illustrates how this diagnostic graphic provides many insights that are useful for initial rank selection. This considers several candidates of initial ranks. Recall for Section~\ref{c:jive-s:intro-subsec:toy}, this toy example has one joint component, one individual $X$ component, and two individual $Y$ components. The row subspaces of their individual components are not orthogonal and the true principal angle (only known in simulation study) is 45\textdegree. Furthermore, PCA of $Y$ reveals that 79.6\% of the joint component appears in the third principal component.
	
The upper left panel of Figure~\ref{fig:diagnostic_angle_plot} shows the under specified rank case of $\tilde{r}_1 = \tilde{r}_2 = 2$. The principal angles (black lines) are larger than the Wedin bound (blue dashed line), so we conclude neither is joint variation. This is sensible since the true joint signal is mostly contained in the 3rd $Y$ component. However, both are smaller than the random direction bound (red dashed line), so we conclude each indicates presence of correlated individual spaces.

The correctly specified rank case of $\tilde{r}_1 = 2, \tilde{r}_2 = 3$ is studied in the upper right panel of Figure~\ref{fig:diagnostic_angle_plot}. Now the smallest angle is smaller than the blue Wedin bound, suggesting a joint component. The second principal angle is about 45\textdegree, which is the angle between the individual spaces. This is above the blue Wedin bound, so it is not joint structure.

The lower left panel considers the over specified initial rank of $\tilde{r}_1 = \tilde{r}_2 = 3$. The over specification results in a loosening of the blue Wedin bound, so that now we can no longer conclude 45\textdegree~is not joint, i.e., $\tilde{r}_J = 2$ cannot be ruled out for this choice of ranks. Note that there is a third principal angle, larger than the red random direction bound, which thus cannot be distinguished from pure noise, which make sense because $A_1$ has only rank $r_1 = 2$.

A case where the Wedin bound is useless is shown in the lower right. Here the initial ranks are $\tilde{r}_1 = 2$ and $\tilde{r}_2 = 4$, which results in the blue Wedin bound being actually larger than the red random direction bound. In such cases, the Wedin bound inference is too conservative to be useful. While not always true, the fact that this can be caused by over specification gives a suggestion that the initial ranks may be too large. Further analysis of this is an interesting open problem.

\begin{figure}[t!]
	\vspace{0.1in}
	\begin{minipage}[b]{0.5\linewidth}
		\centering
		\includegraphics[width=0.8\linewidth]{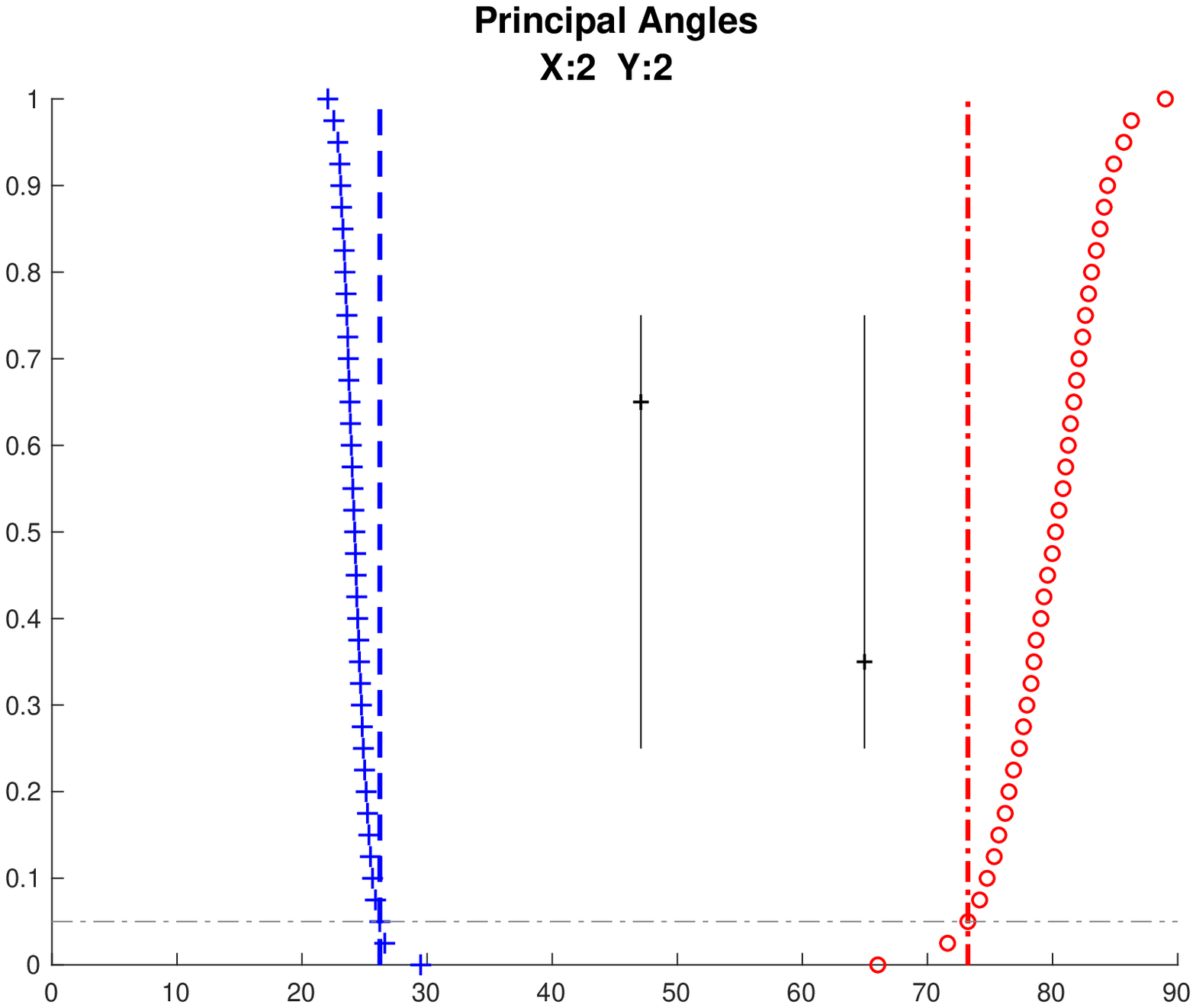}
		\vspace{4ex}
	\end{minipage}
	\begin{minipage}[b]{0.5\linewidth}
		\centering
		\includegraphics[width=0.8\linewidth]{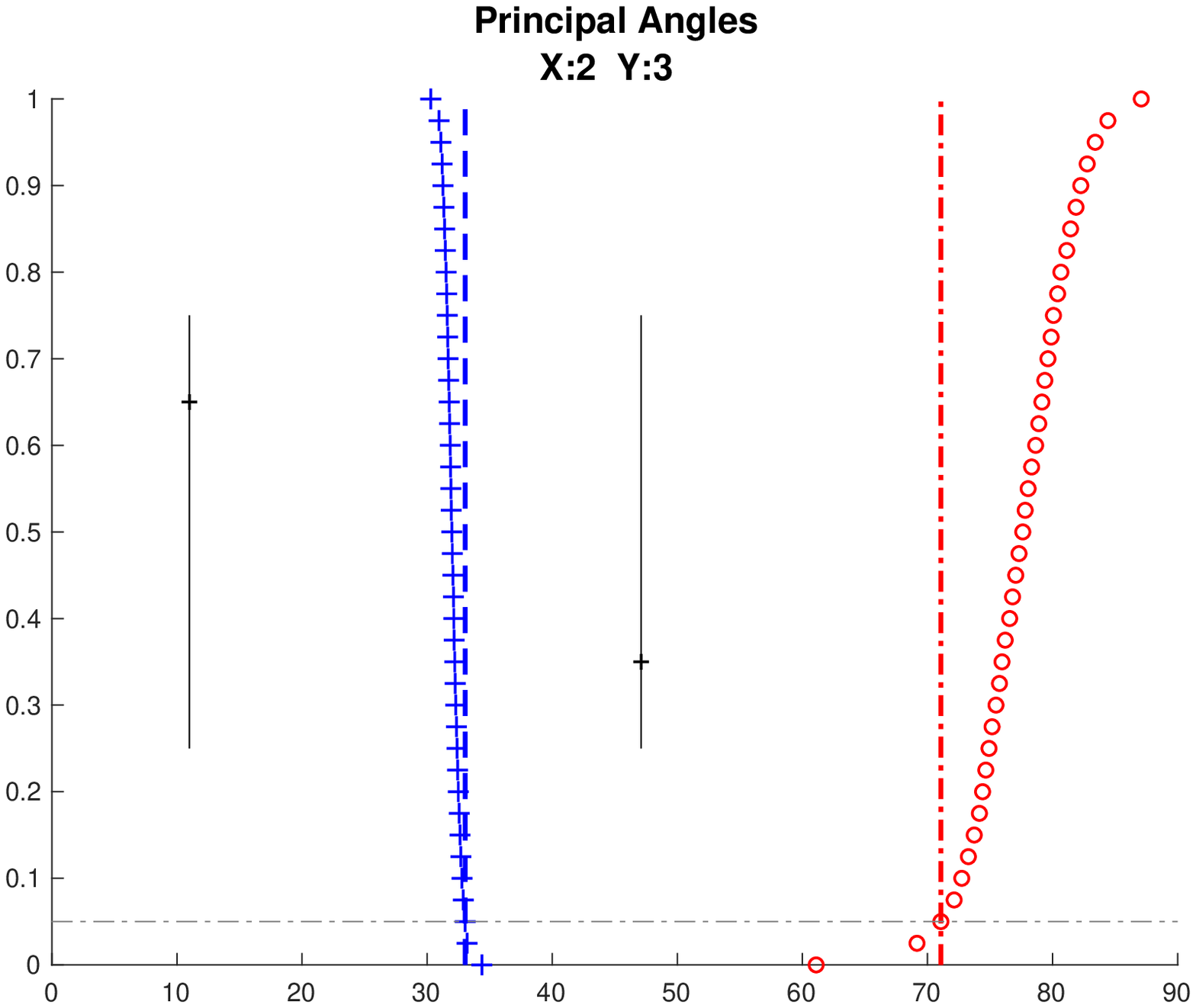}
		\vspace{4ex}
	\end{minipage}
	
	\begin{minipage}[b]{0.5\linewidth}
		\centering
		\includegraphics[width=0.8\linewidth]{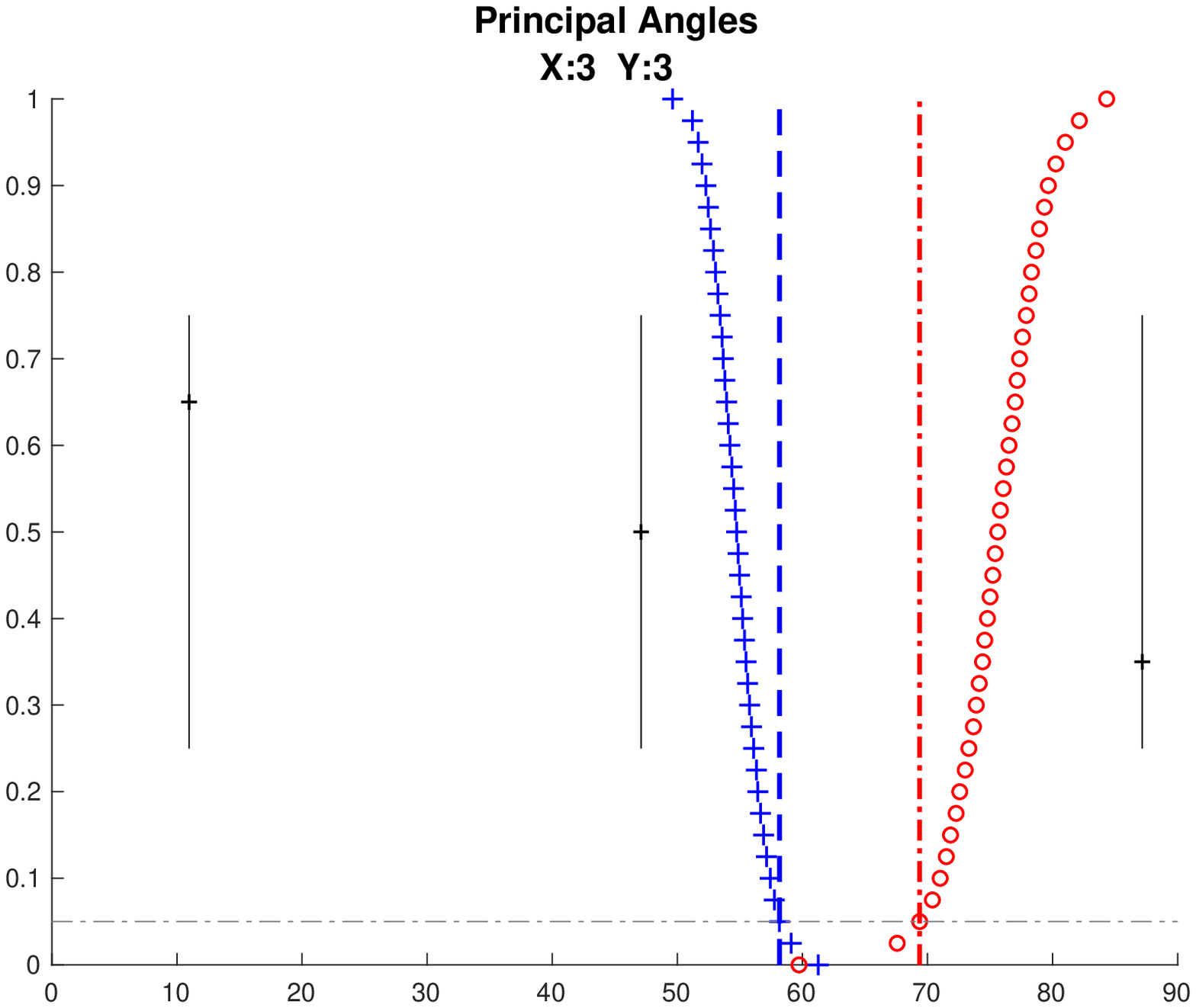}
		\vspace{4ex}
	\end{minipage}
	\begin{minipage}[b]{0.5\linewidth}
		\centering
		\includegraphics[width=0.8\linewidth]{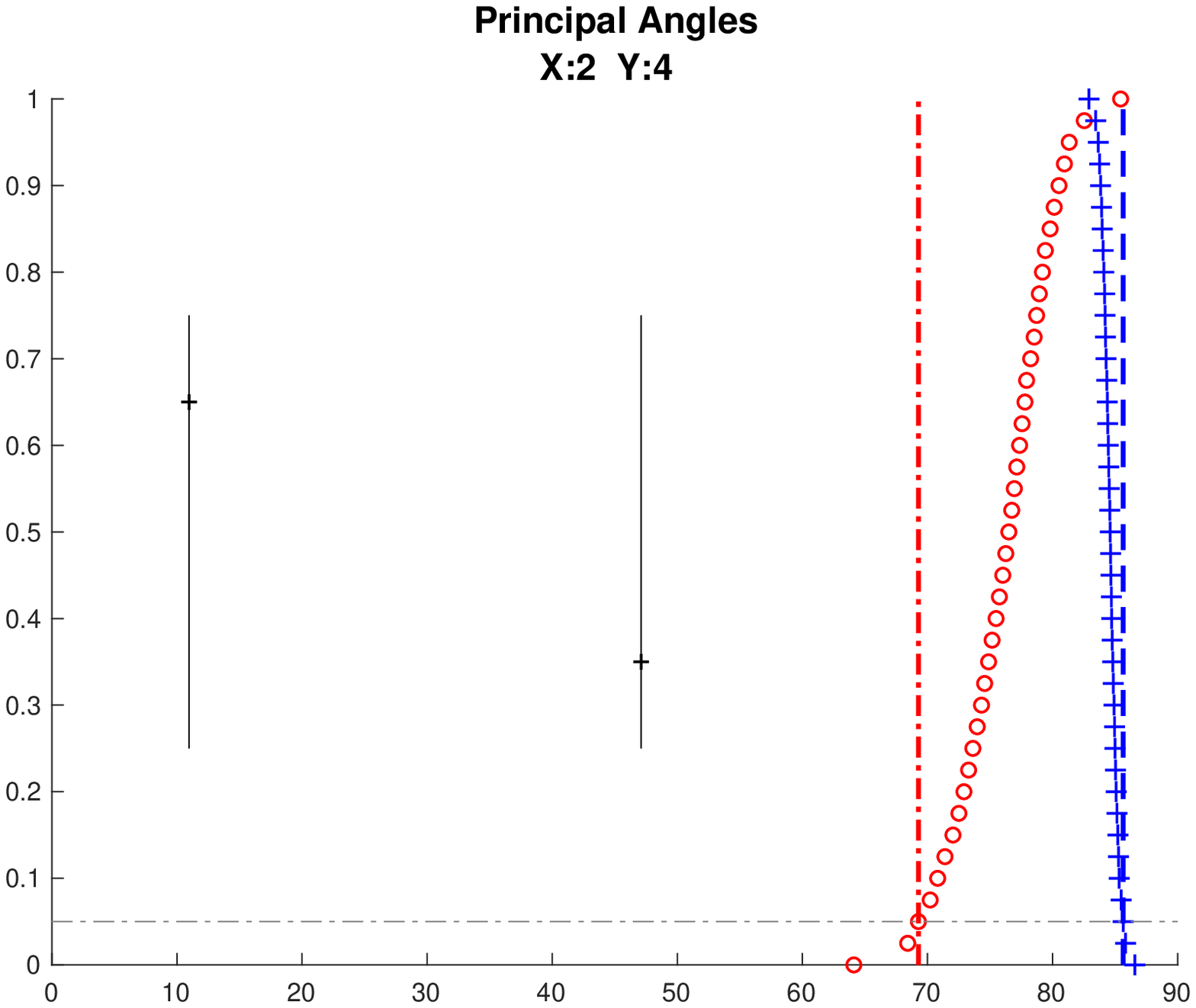}
		\vspace{4ex}
	\end{minipage}
	\caption{Principal angles and angle bounds used for segmentation in Step 2 of AJIVE for various input ranks. In each subfigure, the $x$-axis shows the angle and the $y$-axis shows the probabilities of the simulated distributions. The vertical black line segments are the values of the principal angles between $\row(\tilde{A}_1)$ and $\row(\tilde{A}_2)$, $\phi_1, \ldots, \phi_{\tilde{r}_1 \wedge \tilde{r}_2}$. The red circles show the values of the cumulative distribution function of the random direction distribution; the red dot-dashed line shows the 5th percentile of these angles. The blue plus signs show the values of the survival functions of the resampled Wedin bounds; the blue dashed line is the 95th percentile of the distribution. This figure contains several diagnostic plots, which provide guidance for rank selection. See Section \ref{c:jive-s:method-subsec:step2-2blocks} for details.}
	\label{fig:diagnostic_angle_plot}
\end{figure}

\subsubsection{Multi-block case}
\label{c:jive-s:method-subsec:step2-multiblocks}
To generalize the above idea to more than two blocks, we focus on singular values rather than on principal angles in Eq.~(\ref{equ: principalangle}). In other words, instead of finding an upper bound on an angle, we will focus on a corresponding lower bound on the remaining energy as expressed by the sum of the squared singular values. Hence, an analogous SVD will be used for studying the closeness of multiple initial signal score subspace estimates. 

For the vertical concatenation of right singular vector matrices
\begin{equation}
\label{equ:JointM}
M \triangleq \begin{bmatrix}
\tilde{V}_1^\top\\
 \vdots\\ 
 \tilde{V}_K^\top
\end{bmatrix} = U_{M}\Sigma_{M}V_{M}^\top,
\end{equation} 
SVD sorts the directions within these $K$ subspaces in increasing order of amount of deviation from the theoretical joint direction. The squared singular value $\sigma_{M,i}^2$ indicates the total amount of variation explained in the common direction $V_{M,i}^\top$ in the score subspace of $\mathbb{R}^n$. A large value of $\sigma_{M,i}^2$ (close to $K$) suggests that there is a set of $K$ basis vectors within each subspace that are close to each other and thus are potential noisy versions of a common joint score vector. As in Section~\ref{c:jive-s:method-subsec:step2-2blocks}, the random direction bound and the Wedin bound for these singular values are used for segmentation of this joint and individual components in the multi-block case. 

\paragraph{Random direction bound}

The extension of the random direction bound in Section~\ref{c:jive-s:method-subsec:step2-2blocks} is straightforward. The distribution of the largest squared singular value in \eqref{equ:JointM} generated by random subspaces is also obtained by simulation. As in the two block case, each $\tilde{V}_k$ in $M$ is replaced by an independent random subspace, i.e., right multiplied by an independent orthonormal matrix. The simulated distribution of the largest singular value of $M$ indicates singular values potentially driven by pure noise. For the toy example, the values of the survival function of this distribution are shown as red circles in Figure~\ref{fig:diagnostic_ssvbound_plot}, a singular value analog of Figure~\ref{fig:diagnostic_angle_plot}. The 5th percentile of this distribution, shown as the vertical red dot-dashed line in Figure~\ref{fig:diagnostic_ssvbound_plot}, is used as the random direction bound for squared singular value, which provides 95\% confidence that the squared singular values larger than this cutoff are not generated by random subspaces.

\paragraph{Threshold based on the Wedin bound}

Next is the lower bound for segmentation of the joint space based on the Wedin bound. 

\begin{lemma}
	\label{lemma-bound-multi}
	Let $\theta_{k, \tilde{r}_k \wedge r_k}$ be the largest principal angle between the theoretical subspace $\row(A_k)$ and its estimation $\row(\tilde{A}_k)$ for $K$ data blocks from Eq.~\eqref{equ:perturb_angle}. The squared singular values ($\sigma_{M,i}^2$) corresponding to the estimates of the joint components satisfy
	\begin{equation}
	\label{equ:Mbound}
	\sigma_{M,i}^2 \geq K - \sum_{k=1}^{K}\sin^2\theta_{k, \tilde{r}_k \wedge r_k} \geq K - \sum_{k=1}^{K}\left\{\frac{\max(\|E_k\tilde{V}_k\|, \|E_k^\top\tilde{U}_k\|)}{\sigma_{\min}(\tilde{A}_k)} \wedge 1\right\}^2.
	\end{equation}
\end{lemma}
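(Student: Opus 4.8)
The second inequality in \eqref{equ:Mbound} is immediate from Corollary~\ref{thm-wedin-full}: that corollary gives $\sin\theta_{k,\tilde r_k\wedge r_k}\le\{\max(\|E_k\tilde V_k\|,\|E_k^\top\tilde U_k\|)/\sigma_{\min}(\tilde A_k)\}\wedge 1$ for each $k$, and squaring (both sides are nonnegative) and summing over $k$ yields the claim. The mathematical content is therefore the first inequality, and my plan is to recognize the squared singular values of $M$ as eigenvalues of a sum of projections, and then lower bound the associated Rayleigh quotient uniformly on the joint space. First I would observe that since each $\tilde V_k$ has orthonormal columns, $\tilde V_k\tilde V_k^\top=P_{\tilde A_k}$ is the orthogonal projection onto $\row(\tilde A_k)$, so that
\begin{equation*}
M^\top M=\sum_{k=1}^K\tilde V_k\tilde V_k^\top=\sum_{k=1}^KP_{\tilde A_k}.
\end{equation*}
Hence the $\sigma_{M,i}^2$ are exactly the ordered eigenvalues of the positive-semidefinite operator $\sum_k P_{\tilde A_k}$, and for any unit vector $v\in\mathbb R^n$ the Rayleigh quotient is $\|Mv\|^2=\sum_{k=1}^K\|P_{\tilde A_k}v\|^2$, the total projected energy of $v$ across the $K$ estimated score subspaces.

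The key step is to lower bound this quantity on the joint space. Let $v\in\row(J)$ be a unit vector. Since $\row(J)\subseteq\row(A_k)$ we have $P_{A_k}v=v$, so invoking the operator-norm form of the pseudometric recalled in Section~\ref{c:jive-s:method-subsec:step1-accuracy},
\begin{equation*}
\|(I-P_{\tilde A_k})v\|=\|(I-P_{\tilde A_k})P_{A_k}v\|\le\|(I-P_{\tilde A_k})P_{A_k}\|=\sin\theta_{k,\tilde r_k\wedge r_k}.
\end{equation*}
By the Pythagorean identity, $\|P_{\tilde A_k}v\|^2=1-\|(I-P_{\tilde A_k})v\|^2\ge 1-\sin^2\theta_{k,\tilde r_k\wedge r_k}$, and summing over $k$ gives $\|Mv\|^2\ge K-\sum_{k=1}^K\sin^2\theta_{k,\tilde r_k\wedge r_k}$ for \emph{every} unit vector $v\in\row(J)$.

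Finally I would convert this uniform lower bound into a bound on the largest singular values via the Courant--Fischer min-max theorem. Writing $r_J=\dim\row(J)$ for the joint rank and taking $\row(J)$ as the $r_J$-dimensional test subspace,
\begin{equation*}
\sigma_{M,r_J}^2\ge\min_{\substack{v\in\row(J)\\ \|v\|=1}}\|Mv\|^2\ge K-\sum_{k=1}^K\sin^2\theta_{k,\tilde r_k\wedge r_k}.
\end{equation*}
Since $\sigma_{M,1}^2\ge\cdots\ge\sigma_{M,r_J}^2$, the same lower bound holds for each of the $r_J$ largest squared singular values, which are precisely those the lemma labels as the estimates of the joint components. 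This establishes the first inequality.

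I expect the main obstacle to lie in the bookkeeping of this last step rather than in any hard estimate: one must argue that $\|(I-P_{\tilde A_k})P_{A_k}\|=\sin\theta_{k,\tilde r_k\wedge r_k}$ controls $\|(I-P_{\tilde A_k})v\|$ simultaneously for all $v\in\row(J)$ (not merely along one worst-case principal direction), which relies on $\row(J)\subseteq\row(A_k)$, and that the $r_J$ squared singular values extracted by the min-max bound are exactly the ones segmented as joint. Once these identifications are made, the termwise Pythagorean bound and the application of Corollary~\ref{thm-wedin-full} are routine.
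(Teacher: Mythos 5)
Your proof is correct, and its core is the same as the paper's: both arguments reduce to the key inequality $\|P_{\tilde A_k}v\|^2 \geq \cos^2\theta_{k,\tilde r_k\wedge r_k} = 1-\sin^2\theta_{k,\tilde r_k\wedge r_k}$ for unit vectors $v\in\row(J)$, justified by $\row(J)\subseteq\row(A_k)$ and the operator-norm form of the pseudometric, followed by summation over $k$ and a variational characterization of the squared singular values of $M$. Where you differ is in that final variational step, and your version is the stronger one. The paper runs a sequential greedy argument: it characterizes $\sigma_{M,i}^2$ as a constrained maximum of $\|MQ\|_F^2$ over rank-one projections $Q$ orthogonal to the previous optima $Q_1,\ldots,Q_{i-1}$, and then must argue at each stage that some joint direction survives inside $Q_1^\perp\cap\cdots\cap Q_{i-1}^\perp$ — a step the paper handles only informally ("there exist a non-empty joint subspace such that..."). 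You instead identify $\sigma_{M,i}^2$ as the eigenvalues of $M^\top M=\sum_k P_{\tilde A_k}$ and apply Courant--Fischer with $\row(J)$ itself as the $r_J$-dimensional test subspace, which delivers the bound for all of the top $r_J$ squared singular values in one stroke and removes the bookkeeping about orthogonality to previous optima entirely. One caveat applies equally to both arguments: the identity $\|(I-P_{\tilde A_k})P_{A_k}\| = \sin\theta_{k,\tilde r_k\wedge r_k}$ that you (following the paper's Section on the pseudometric) invoke holds when $\tilde r_k\geq r_k$, but in the under-specified case $\tilde r_k<r_k$ the left side is generically $1$, so the bound there requires the additional assumption — made explicit only as a conditional clause in the paper's proof — that the perturbed joint directions are actually retained in every $\tilde A_k$; your proof inherits this limitation from the paper rather than introducing a new gap.
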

The proof is provided in \ref{s:proofs}. This lower bound is independent of the variation magnitudes. This property makes AJIVE insensitive to scale heterogeneity across each block when extracting joint variation information.   

As in Section~\ref{c:jive-s:method-subsec:step1-accuracy}, all the terms $\|E_k\tilde{V}_k\|$, $\|E_k^\top\tilde{U}_k\|$ are resampled to derive a distribution estimator for the lower bound in \eqref{equ:Mbound}, which can provide a prediction interval as well. Figure~\ref{fig:diagnostic_ssvbound_plot} shows the values of the cumulative distribution function of this upper bound as blue plus signs for the toy example. As in the two-block case, if there are $\tilde{r}_J$ singular values larger than both this lower bound and the random direction bound, the first $\tilde{r}_J$ right singular vectors are used as the basis of the estimator of $\row(J)$.  

In the two-block case, Figure~\ref{fig:diagnostic_ssvbound_plot} contains essentially the same information as Figure~\ref{fig:diagnostic_angle_plot}, thus the same insights are available. Since principal angles between multiple subspaces are not defined, Figure~\ref{fig:diagnostic_ssvbound_plot} provides appropriate generalization to the multi-block case, see Figure~\ref{fig:tcga_ssvbound_dist}.

\begin{figure}[t!]
	\vspace{0.1in}
	\begin{minipage}[b]{0.5\linewidth}
		\centering
		\includegraphics[width=0.8\linewidth]{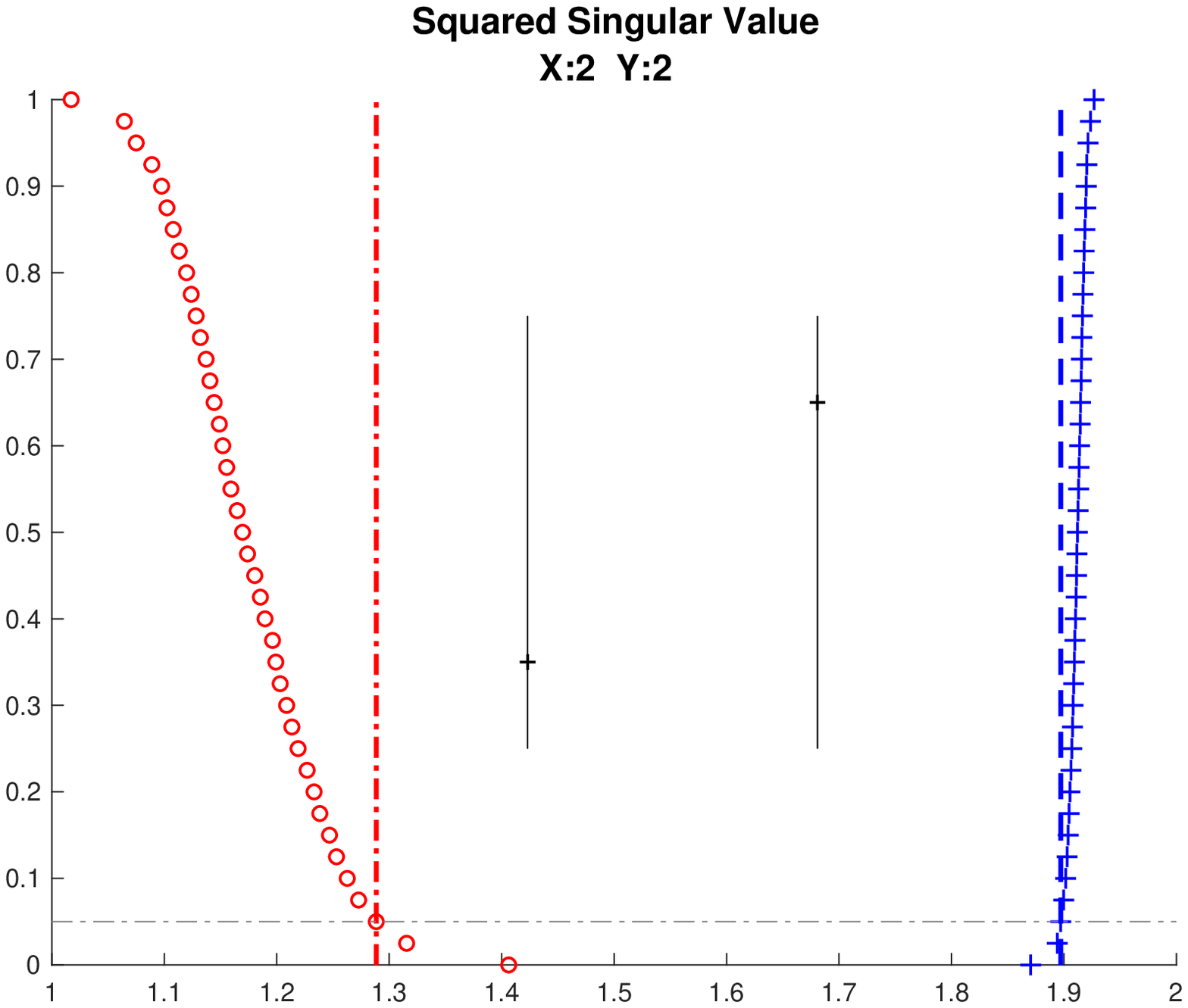}
		\vspace{4ex}
	\end{minipage}
	\begin{minipage}[b]{0.5\linewidth}
		\centering
		\includegraphics[width=0.8\linewidth]{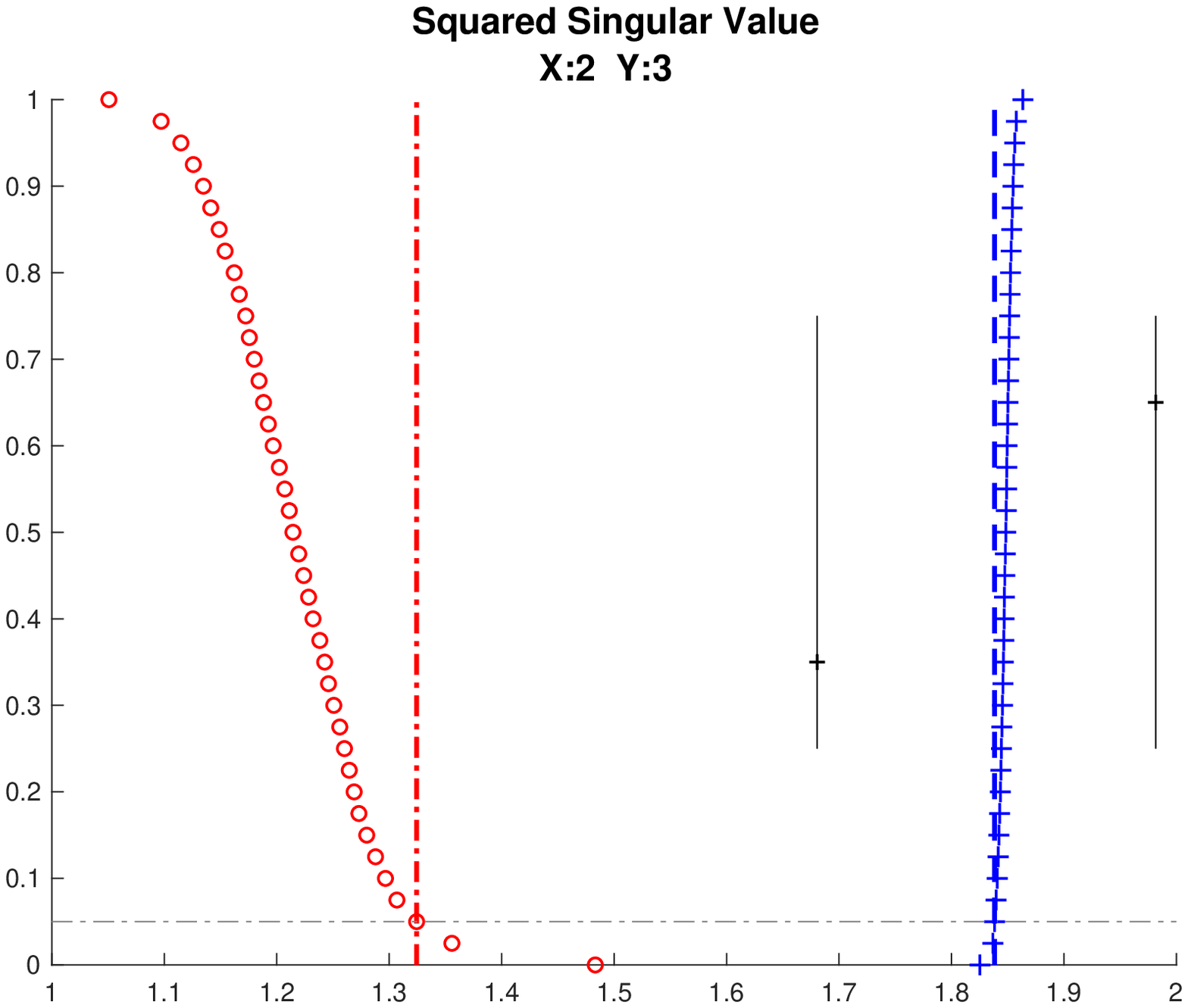}
		\vspace{4ex}
	\end{minipage}
	
	\begin{minipage}[b]{0.5\linewidth}
		\centering
		\includegraphics[width=0.8\linewidth]{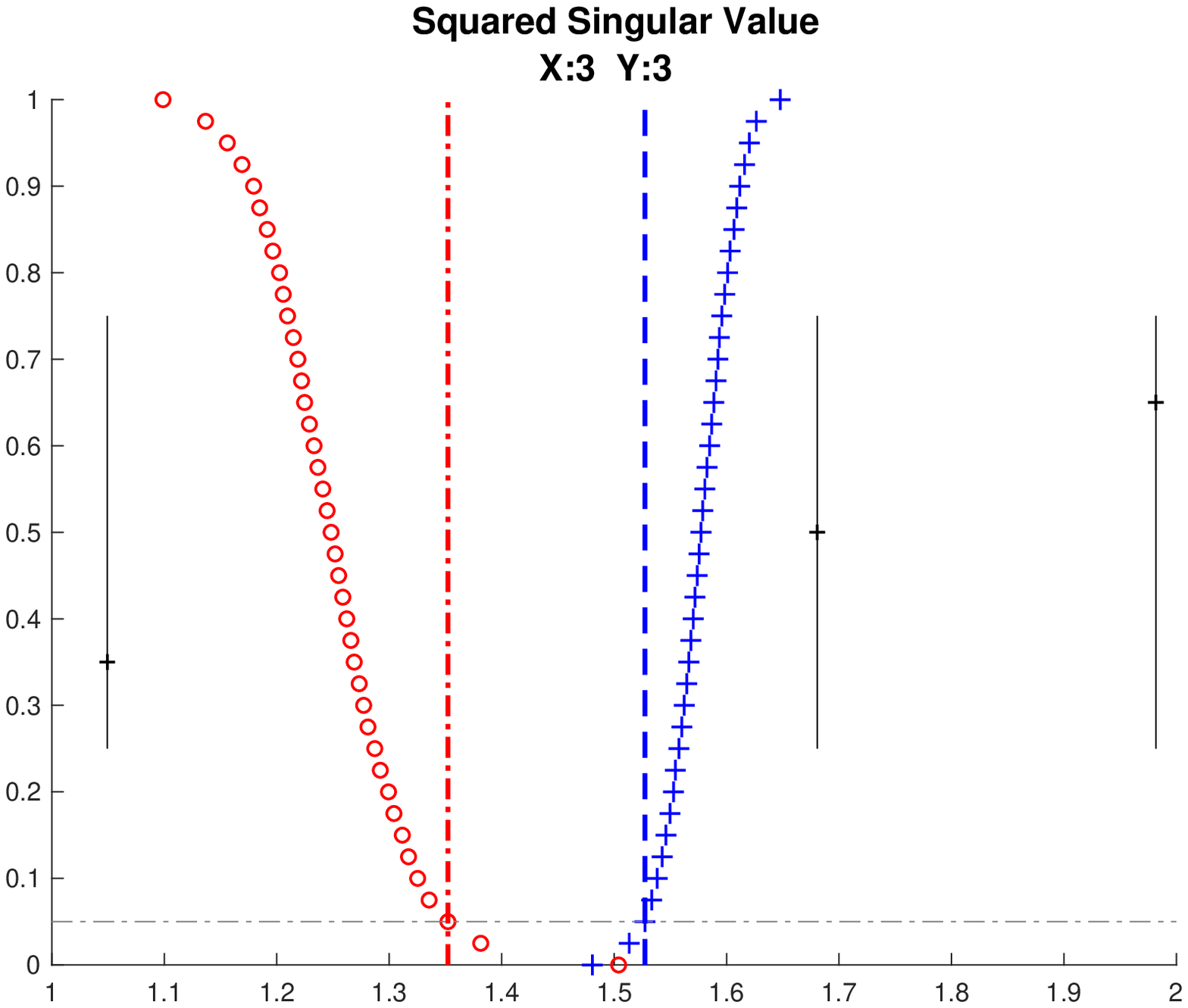}
		\vspace{4ex}
	\end{minipage}
	\begin{minipage}[b]{0.5\linewidth}
		\centering
		\includegraphics[width=0.8\linewidth]{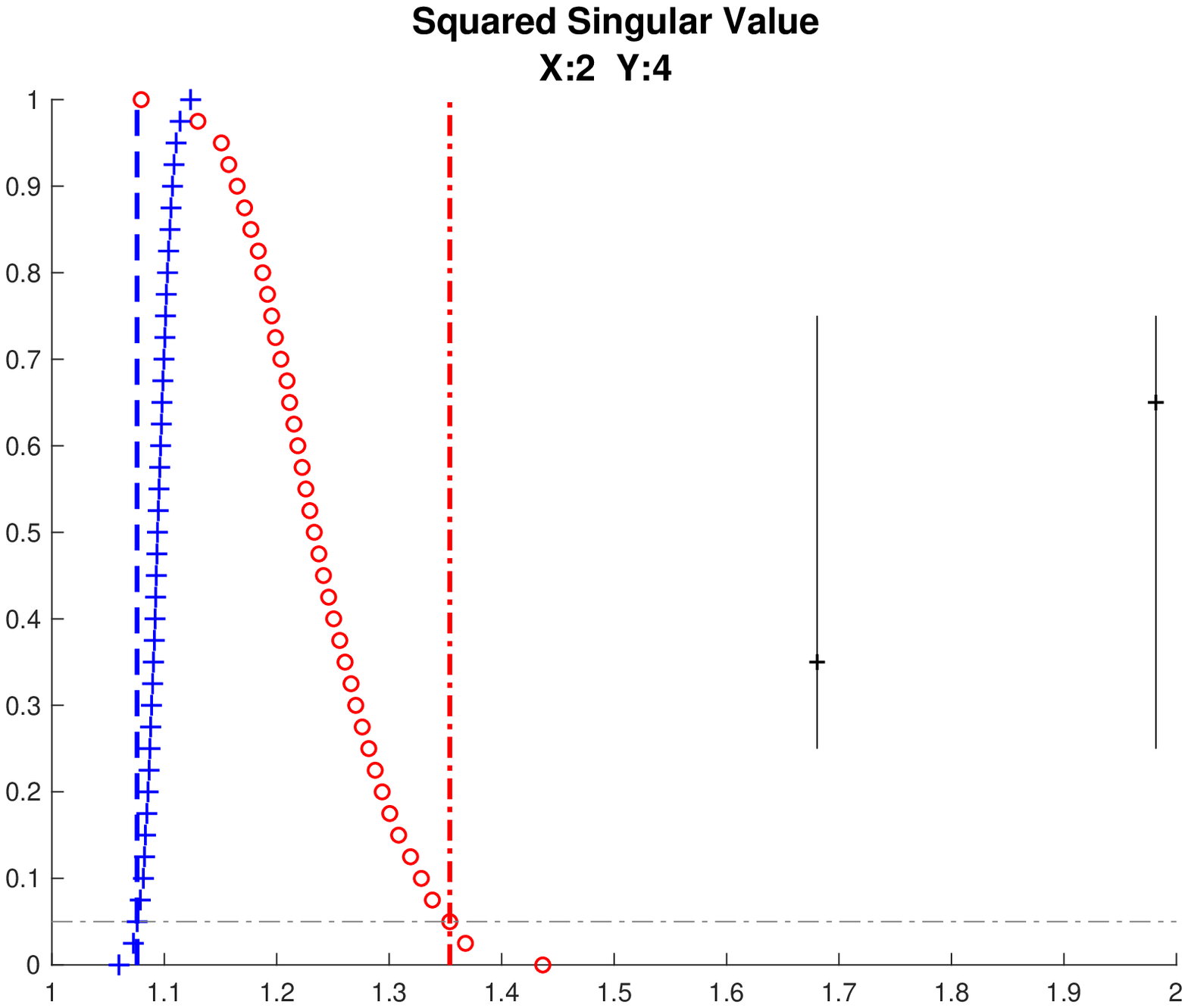}
		\vspace{4ex}
	\end{minipage}
	\caption{Squared singular values in \eqref{equ:JointM} and bounds for Step 2 of AJIVE for various rank choices. The black vertical line segments shows the first $\tilde{r}_1 \wedge \tilde{r}_2$ squared singular values of $M$ in equation~\eqref{equ:JointM}. The values of the survival function of the random direction bounds are shown as the red circles and the red dot-dashed line is the 95th percentile of this distribution, which is the random direction bound. The values of the c.d.f of the Wedin bound are shown as the blue plus signs and the 5th percentile (blue dashed line) is used for a prediction interval for the Wedin bound. In the two-block case presented here this contains the essentially same information as in Figure~\ref{fig:diagnostic_angle_plot}. For the multi-block case it is the major diagnostic graphic.}
	\label{fig:diagnostic_ssvbound_plot}
\end{figure}

\subsection{Step 3: Final decomposition and outputs}
\label{c:jive-s:method-subsec:step3}

Based on the estimate of the joint row space, matrices containing joint variation in each data block can be reconstructed by projecting $X_{k}$ onto this estimated space. Define the matrix $\tilde{V}_{J}$ as $[\vec{v}_{M,1}, \ldots, \vec{v}_{M, \tilde{r}_J}]$, where $\vec{v}_{M,i}$ is the $i$th column in the matrix $V_M$. To ensure that all components continue to satisfy the identifiability constraints from Section~\ref{c:jive-s:method-subsec:step1-threshold}, we check that, for all the blocks, each $\|X_k\vec{v}_{M,i}\|$ is also above the corresponding threshold used in Step~1. If the constraint is not satisfied for any block, that component is removed from $\tilde{V}_{J}$. A real example of this happens in Section~\ref{subsec:TCGA}. An important point is that this removal can happen even when there is a common joint structure in all but a few blocks. 

Denote $\hat{V}_J$ as the matrix $\tilde{V}_{J}$ after this removal and $\hat{r}_J$ as the final joint rank. The projection matrix onto the final estimated joint space $\row(\hat{J})$ is $P_{J}=\hat{V}_{J}\hat{V}_{J}^\top$, represented as the red rectangle in Figure~\ref{fig:flow}. The estimate of the joint variation matrices in block $k \in \{ 1, \ldots, K\}$ is
$\hat{J}_{k} = X_{k}P_{J}$.

The row space of joint structure is orthogonal to the row spaces of each individual structure. Therefore, the original data blocks are projected to the orthogonal space of $\row(\hat{J})$. The projection matrix onto the orthogonal space of $\row(\hat{J})$ is $P_{J}^{\perp} = I-P_{J}$ and the projections of each data block are denoted as $X_{k}^{\perp}$ respectively for each block, i.e.,
$X_{k}^{\perp}=X_{k}P_{J}^{\perp}$.
These projections are represented as the circled minus signs in Figure~\ref{fig:flow}.

Finally we rethreshold this projection by performing SVD on $X_{1}^{\perp}, \ldots, X_{K}^{\perp}$. The components with singular values larger than the first thresholds from Section~\ref{c:jive-s:method-subsec:step1-threshold} are kept as the individual components, denoted as $\hat{I}_{1}, \ldots, \hat{I}_{K}$. The remaining components of each SVD are regarded as an estimate of the noise matrices.

By taking a direct sum of the estimated row spaces of each type of variation, denoted by $\oplus$, the estimated signal row spaces are
\begin{equation*}
\row(\hat{A}_k) = \row(\hat{J}) \oplus \row(\hat{I}_k)
\end{equation*}
with rank $\hat{r}_k = \hat{r}_{J} + \hat{r}_{I_k}$ respectively for each $k \in\{ 1, \ldots, K\}$.

Due to this adjustment of directions of the joint components, these final estimates of signal row spaces may be different from those obtained in the initial signal extraction step. Note that even the estimates of rank $\hat{r}_k$ might also differ from the initial estimates $\tilde{r}_k$.

Given the variation decompositions of each AJIVE component, as shown on the right side of Figure~\ref{fig:flow}, several types of post AJIVE representations are available for representing the joint and individual variation patterns. There are three important matrix representations of the information in the AJIVE joint output, i.e., the boxes on the right in Figure~\ref{fig:flow}, with differing uses in post AJIVE analyses. 
\begin{enumerate}
	\item \emph{Full matrix representation}. For applications where the original features are the main focus (such as finding driving genes), the full $d_k \times n$ matrix representations $\hat{J}_{k}$ and $\hat{I}_k$ with $k \in \{ 1, \ldots, K\}$ are most useful. Thus this AJIVE output is the product of all three blocks in each dashed box on the right side of Figure~\ref{fig:flow}. Examples of these outputs are shown in the two right columns of Figure~\ref{fig:jive:toyjiveoutput}.
	
	\item \emph{Block specific representation}. For applications where the relationships between subjects are the main focus (such as discrimination between subtypes) large computational gains are available by using lower dimensional representations. These are based on SVDs as indicated in the right side of Figure~\ref{fig:flow}, i.e., for each $k \in \{ 1, \ldots, K\}$,
	\begin{equation}
	\label{equ:BSS}
	\hat{J}_k = \hat{U}^{k}_J \hat{\Sigma}^{k}_J \hat{V}^{k\top}_J, \quad \hat{I}_{k} = \hat{U}^{k}_I \hat{\Sigma}^{k}_I \hat{V}^{k\top}_I.
	\end{equation}
	
	The resulting AJIVE outputs include the joint and individual {\it Block Specific Score} (BSS) matrices $\hat{\Sigma}^{k}_J \hat{V}^{k\top}_J$ $(\hat{r}_J \times n)$, $\hat{\Sigma}^{k}_I \hat{V}^{k\top}_I$ $(\hat{r}_{I_k} \times n)$, respectively.
	This results in no loss of information when rotation invariant methods are used. The corresponding {\it Block Specific Loading} matrices are $\hat{U}^{k}_J$ $(d_k \times \hat{r}_J)$ and $\hat{U}^{k}_I$ $(d_k \times \hat{r}_{I_k})$.
	
	\item \emph{Common normalized representation}. Although $\row(\hat{V}_J^{k \top})$ in \eqref{equ:BSS} are the same, the matrices are different. In particular, the rows in \eqref{equ:BSS} can be completely different across $k$, because they are driven by the pattern of the singular values in each $\hat{\Sigma}_J^k$. In some applications, correspondence of components across data blocks is important. In this case the analysis should use a common basis of $\row(\hat{J})$, namely $\hat{V}_J^\top$ $(\hat{r}_J \times n)$, called the {\it Common Normalized Scores} (CNS). This is shown as the gray rectangular near the center of Figure~\ref{fig:flow}. To get the corresponding loadings, we regress $\hat{J}_k$ on each score vector in $\hat{V}_J^\top$ (which is computed as $\hat{J}_k \hat{V}_J$) following by normalization. By doing this, there is no guarantee of orthogonality between CNS loading vectors. However, the loadings are linked across blocks by their common scores. For studying scale free individual spaces, use the \emph{Individual Normalized Scores} (INS)  $\hat{V}^{k \top}_I$ ($\hat{r}_{I_k} \times n$). The individual loading matrices $\hat{U}^{k}_I$ are the same as the block specific individual loadings.
\end{enumerate}

The relationship between Block Specific Representation and Common Normalized Representation is analogous to that of the traditional covariance, i.e., PLS, and correlation, i.e., CCA, modes of analysis. The default output in the AJIVE software is the Common Normalized Representation.

\section{Data analysis}
\label{c:jive-s:datanalaysis}

In this section, we apply AJIVE to two real data sets, TCGA breast cancer in Section~\ref{subsec:TCGA} and Spanish mortality in Section~\ref{subsec:mortality}. 

\subsection{TCGA Data}
\label{subsec:TCGA}
A prominent goal of modern cancer research, of which The Cancer Genome Atlas~\citep{cancer2012comprehensive} is a major resource, is the combination of biological insights from multiple types of measurements made on common subjects.

TCGA provides prototypical data sets for the application of AJIVE. Here we study the 616 breast cancer tumor samples from~\citet{ciriello2015comprehensive}, which had a common measurement set. For each tumor sample, there are measurements of $16615$ gene expression features (GE), $24174$ copy number variations features (CN), $187$ reverse phase protein array features (RPPA) and $18256$ mutation features (Mutation). These data sources have very different dimensions and scalings.

The tumor samples are classified into four molecular subtypes: Basal-like, HER2, Luminal A and Luminal B. An integrative analysis targets the association among the features of these four disparate data sources that jointly quantify the differences between tumor subtypes. In addition, identification of driving features for each source and subtype is obtained from studying loadings. 

Scree plots were used to find a set of interesting candidates for the initial ranks selected in Step 1. Various combinations of them were investigated using the diagnostic graphic. Four interesting cases are shown in Figure~\ref{fig:tcga_ssvbound_dist}. The upper left panel of Figure~\ref{fig:tcga_ssvbound_dist} is a case where the input ranks are too small, resulting in no joint components being identified, i.e., all the black lines are smaller than the dashed blue estimated Wedin bound. The upper right panel shows a case where only one joint component is identified. In addition to the joint component identified in the upper right panel, the lower left panel contains a second potential joint component close to the Wedin bound. The lower right panel shows a case where the Wedin bound becomes too small since the input ranks are too large. Many components are suggested as joint here, but these are dubious because the Wedin bound is smaller than the random direction bound. Between the two viable choices, in the upper right and the lower left, we investigate the latter in detail, as it best highlights important fine points of the AJIVE algorithm. In particular, we choose low-rank approximations of dimensions $20$ (GE), $16$ (CN), $15$ (RPPA) and $27$ (Mutation). However, detailed analysis of the upper right panel results in essentially the same final joint component. After selection of the threshold in Step~1, it took AJIVE 298 seconds (5.0 minutes, on Macbook Pro Mid 2012, 2.9 GHz) to finish Steps~2 and 3.

\begin{figure}[htb!]
	\vspace{0.1in}
	\begin{minipage}[b]{0.5\linewidth}
		\centering
		\includegraphics[width=0.8\linewidth]{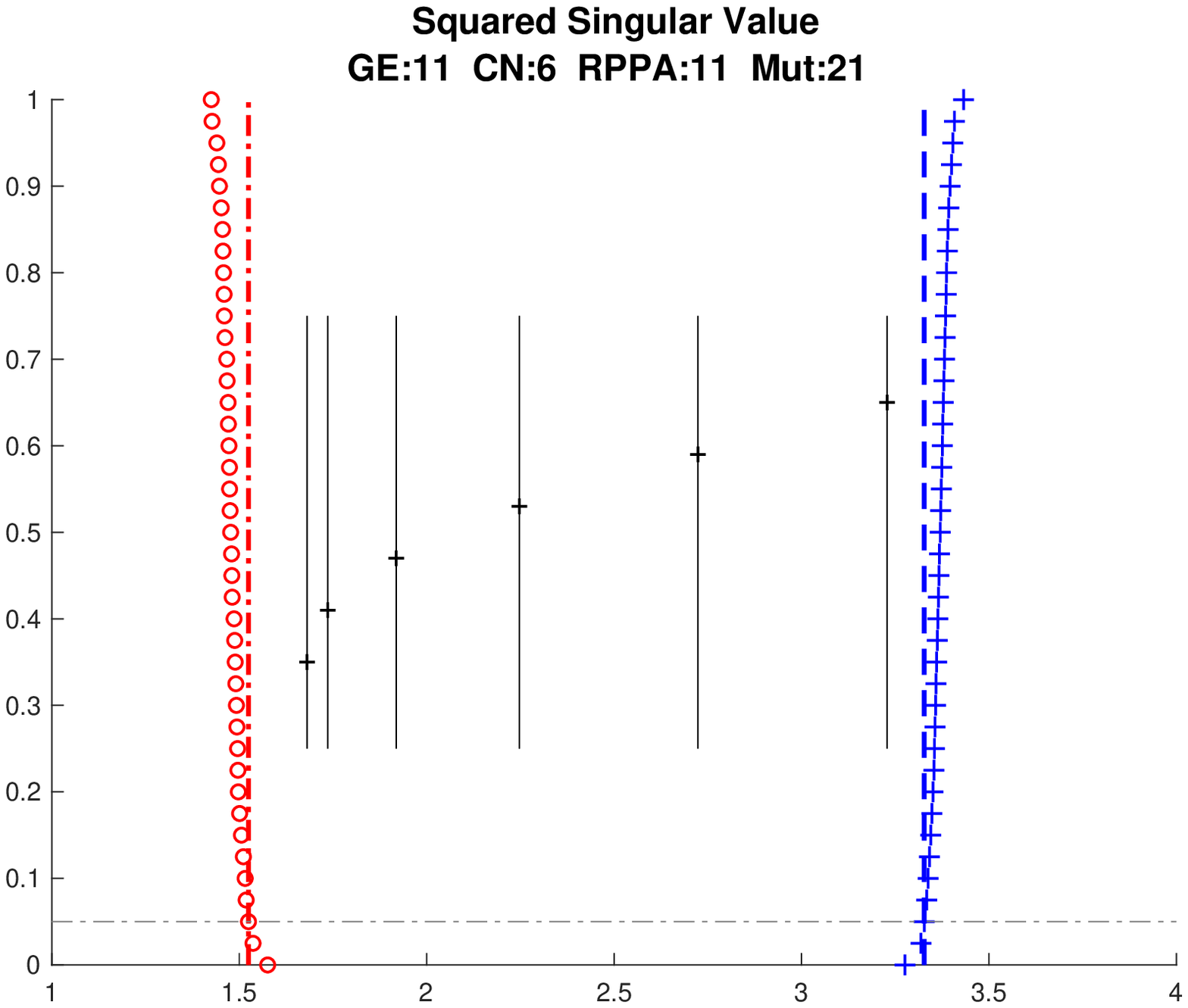}
		\vspace{4ex}
	\end{minipage}
	\begin{minipage}[b]{0.5\linewidth}
		\centering
		\includegraphics[width=0.8\linewidth]{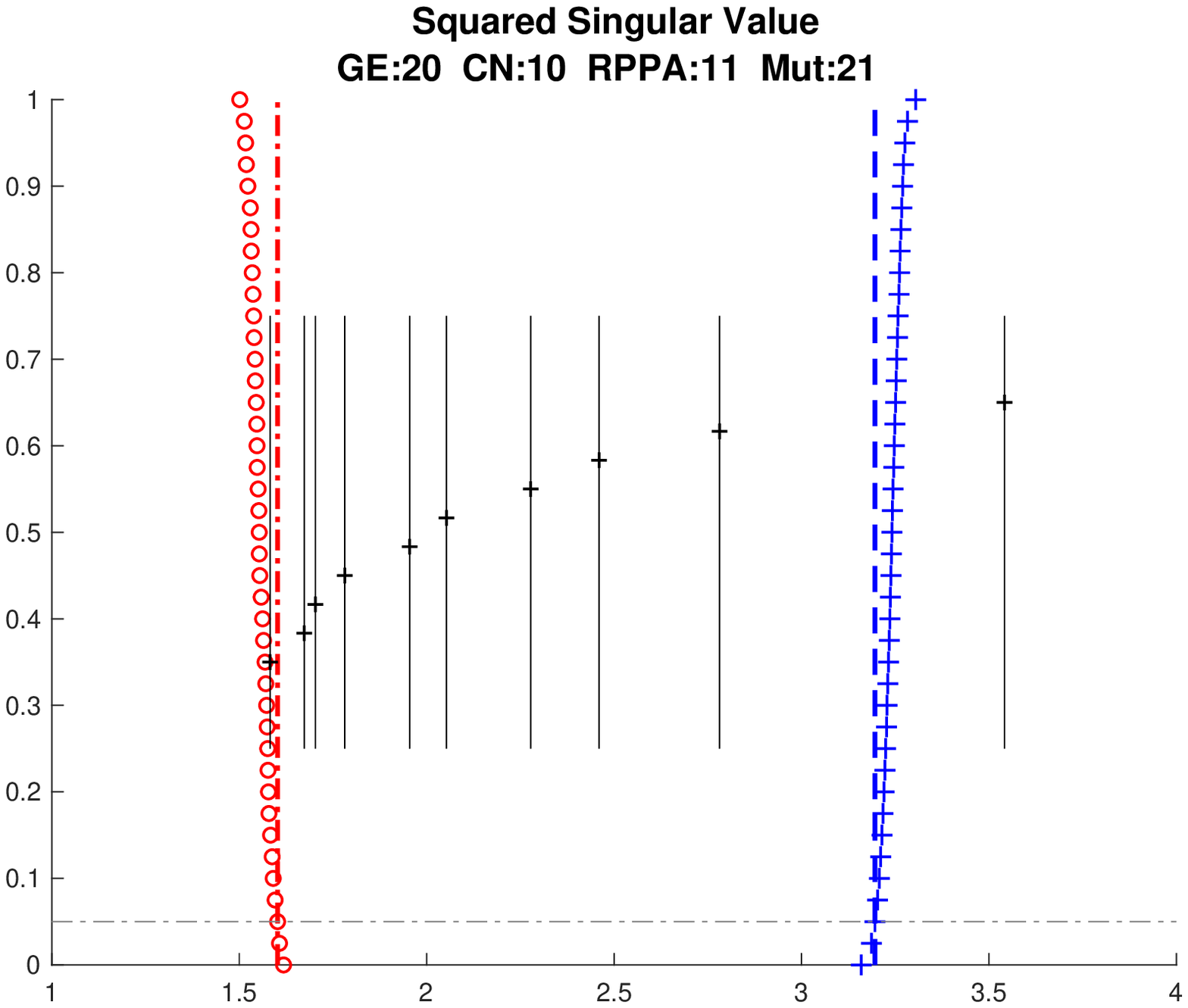}
		\vspace{4ex}
	\end{minipage}
	
	\begin{minipage}[b]{0.5\linewidth}
		\centering
		\includegraphics[width=0.8\linewidth]{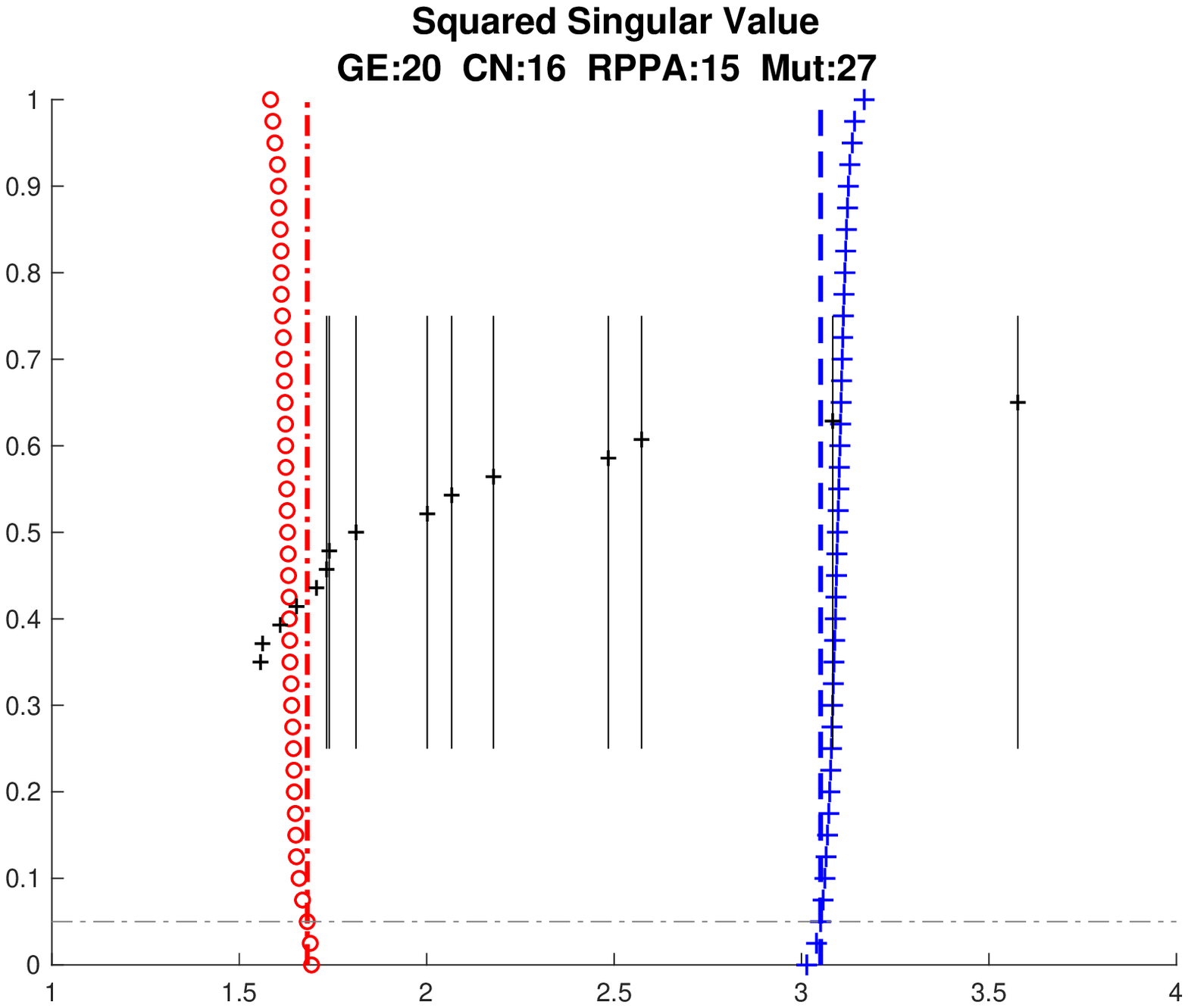}
		\vspace{4ex}
	\end{minipage}
	\begin{minipage}[b]{0.5\linewidth}
		\centering
		\includegraphics[width=0.8\linewidth]{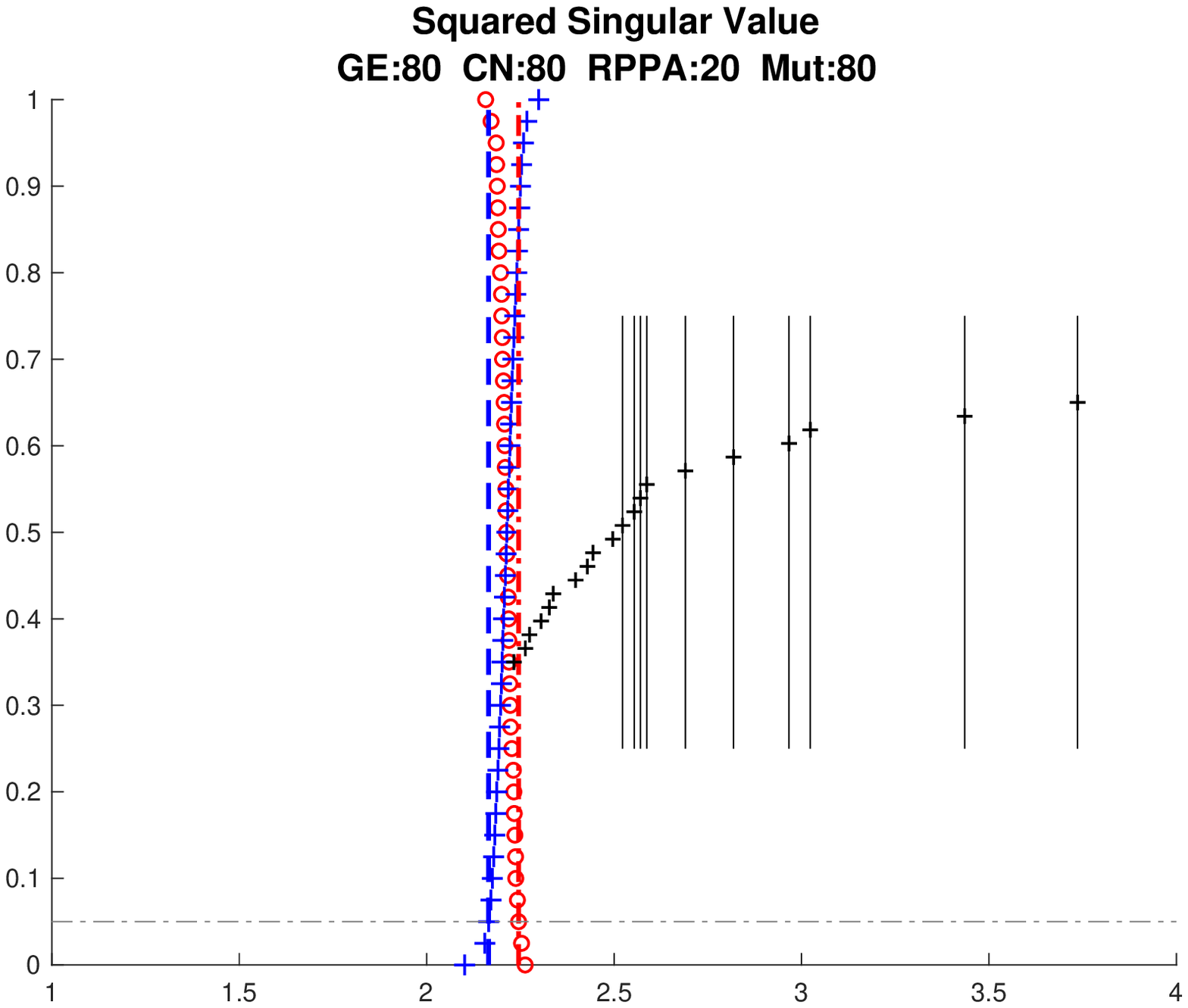}
		\vspace{4ex}
	\end{minipage}
	\caption{Squared singular value diagnostic graphics for TCGA dataset over various rank choices. Indicates that there are one joint component among four data blocks and one joint component among three data blocks.}
	\label{fig:tcga_ssvbound_dist}
\end{figure}

In the second AJIVE step, the one sided 95\% prediction interval suggested selection of two joint components. However, the third step indicated dropping one joint component, because the norm of the projection of the mutation data on that direction, i.e., the second CNS, is below the threshold from Step 1. This result of one joint component was consistent with the expectation of cancer researchers, who believe the mutation component has only one interesting mode of variation. A careful study of all such projections shows that the other data types, i.e., GE, CN and RPPA, do have a common second joint component as discussed at the end of this section. The association between the CNS and genetic subtype differences is visualized in the left panel of Figure~\ref{fig:tcgamultijive:joint1}. The dots are a jitter plot of the patients, using colors and symbols to distinguish the subtypes (Blue for Basal-like, cyan for HER2, red for Luminal A and magenta for Luminal B). Each symbol is a data point whose horizontal coordinate is the value and vertical coordinate is the height based on data ordering. The curves are Gaussian kernel density estimates, i.e., smoothed histograms, which show the distribution of the subtypes.

\begin{figure}[htb!]
	\centering
	\vspace{4ex}
	\begin{minipage}[b]{0.5\textwidth}
		\centering
		\includegraphics[scale=0.42]{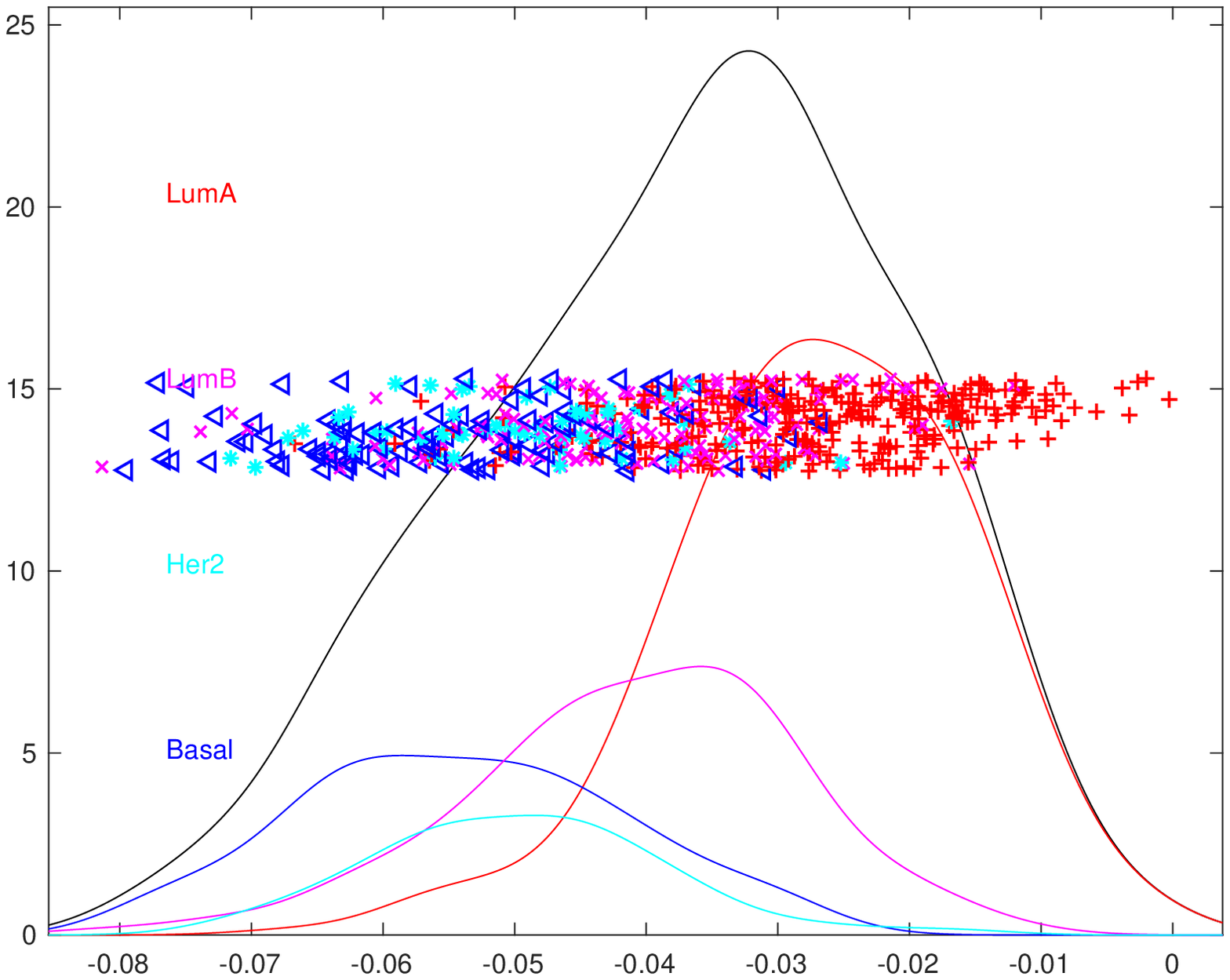}
	\end{minipage}
	\begin{minipage}[b]{0.5\textwidth}
		\centering
		\includegraphics[scale=0.42]{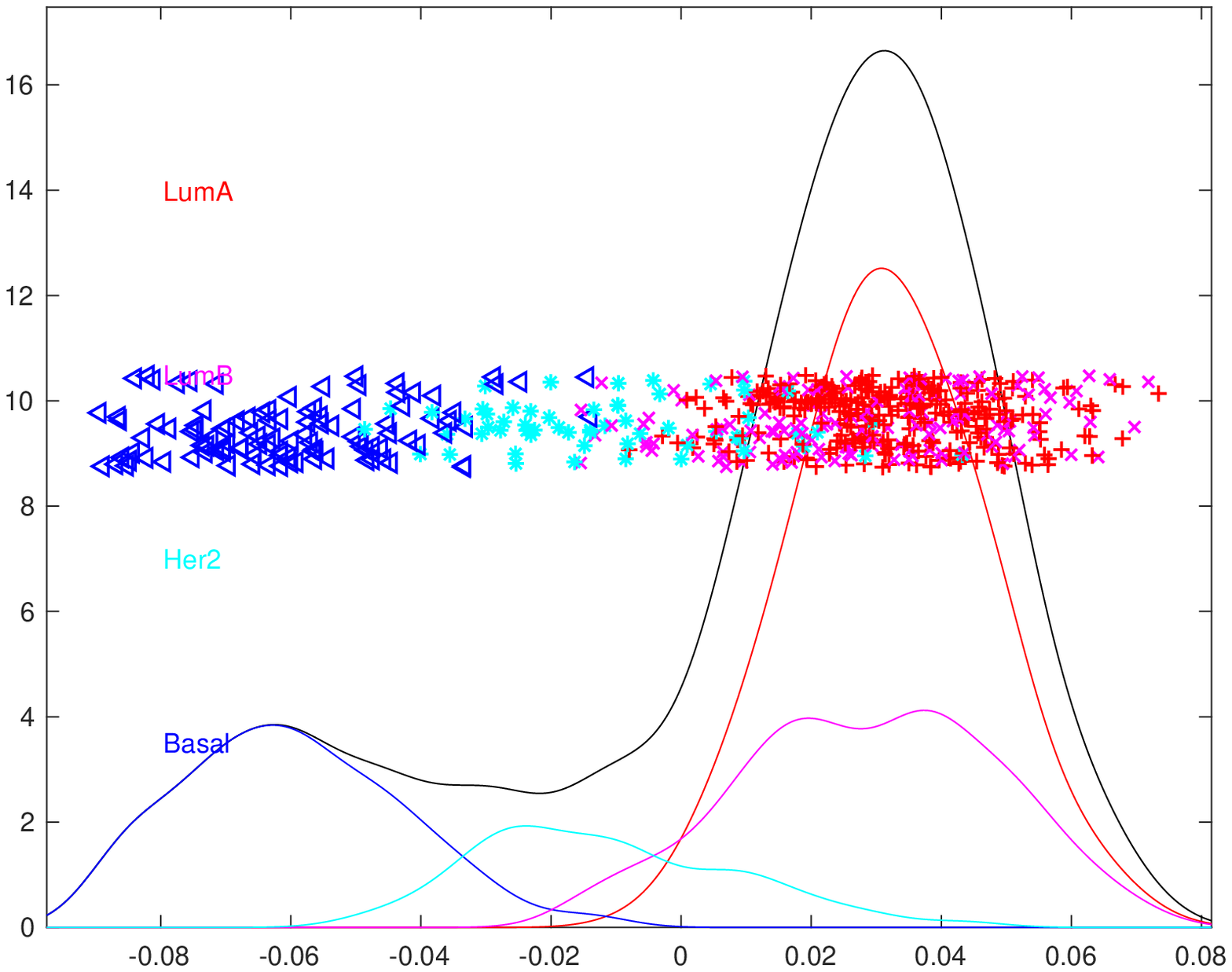}
	\end{minipage}
	\caption{Left: Kernel density estimates of the CNS among GE, CN, RPPA and mutation. The clear separation among Luminal A, versus Her2 and Basal indicates that these four data blocks share a very strong Luminal A property captured in this joint variation component; Right: The CNS from applying AJIVE to the individual matrices of GE, CN, and RPPA. The clear separation indicates that these contain a joint variation component that is consistent with the subtype difference between Basal versus the others.}
	\label{fig:tcgamultijive:joint1}
\end{figure}

The clear separation among density estimates suggest that this joint variation component is strongly connected with the subtype difference between Luminal A versus the other subtypes. To quantify this subtype difference, a test is performed using the CNS of this joint component evaluated by the DiProPerm hypothesis test~\citep{wei2016direction} based on 100 permutations. Strength of the evidence is usually measured by permutation $p$-values. However, in this context empirical $p$-values are frequently zero. Thus a more interpretable measure of strength of the evidence is the DiProPerm $z$-score. This is $26.54$ for this CNS. An area under the receiver operating characteristic (ROC) curve (AUC)~\citep{hanley1982meaning} of $0.878$, is also obtained to reflect the classification accuracy. These numbers confirm the strong Luminal~A property shared by these four data types. 

A further understanding can be obtained by identifying the feature set of each data type which jointly works with the others in characterizing the Luminal A property. By studying the loading coefficients, important mutation features TP53,TTN and PIK3CA are identified which are well known features from previous studies. Similarly the strong role played by GATA3 in RPPA is well known, and is connected with the large GATA3 mutation loading. A less well known result of this analysis is the genes appearing with large GE loadings. Many of these were not flagged in earlier studies, which had focused on subgroup separation, instead of joint behavior. 

As noted in the discussion of Step~2 above, all four data types have only one significant joint component. However, the individual components for all of GE, CN and RPPA seem to have $3$-way joint components. This is investigated by performing a second AJIVE analysis. In particular, we apply the second and third step to the three individual variation matrices from the initial analysis. Notice that all individual matrices are low-rank and thus the first step is not necessary. The AJIVE analysis results in one joint variation component which is displayed in the right panel of Figure~\ref{fig:tcgamultijive:joint1}. This joint variation component clearly shows the differences among Basal, HER2 and Luminal subtypes. In particular, a subtype difference between Basal-like versus the others is quantified using the DiProPerm $z$-score ($31.60$) and the AUC ($0.996$). Considering the fact that the AUC of the classification between Basal-like versus the others using all the original separate GE features is $0.999$, this single joint component contains almost all the variation information for separating Basal-like from the others. This hierarchical application of AJIVE reveals an important joint component that is specific to GE, CN and RPPA but not to Mutation. 

\subsection{Spanish mortality data}
\label{subsec:mortality}

A quite different data set from the Human Mortality Database is studied here, which consists of both Spanish males and females. For each gender data block, there is a matrix of \emph{mortality}, defined as the number of people who died divided by the total, for a given age group and year. Because mortality varies by several orders of magnitude, the $\log_{10}$ mortality is studied here. Each row represents an age group from 0 to 95, and each column represents a year between 1908 and 2002. In order to associate the historical events with the variations of mortality, columns, i.e., mortality as a function of age, are considered as the common set of data objects of each gender block. \citet{marron2014overview} performed analysis on the male block and showed interesting interpretations related to Spanish history. Here we are looking for a deeper analysis which integrates both males and females by exploring joint and individual variation patterns. 

AJIVE is applied to the two gender blocks centered by subtracting the mean of each age group. The principal angle diagnostic graphics introduced in Section~\ref{c:jive-s:method-subsec:step2} are provided for this mortality dataset over various rank choices in Figure~\ref{fig:mortality_angle_diagnostic} to guide the selection of initial ranks in Step 1. The upper left panel shows the case $\tilde{r}_1= \tilde{r}_2 = 1$. The only principal angle is larger than the 95th percentile of the resampled Wedin bound and thus we conclude that no joint space is identified. The upper right panel shows the effect of increasing the initial rank choices to $\tilde{r}_1= \tilde{r}_2 = 2$. In this case, the first principal angle becomes smaller. Because it is smaller than the Wedin bound it is identified as a joint component. The second principal angle is still larger than the Wedin bound. Thus we concluded that only one joint component is identified in this case. In the lower left panel we increase the input rank of male mortality to 3. The second principal angle becomes much smaller, in particular smaller than the Wedin bound, and thus is also labeled as joint component. This indicates that the third principal component of male mortality contains joint information. The lower right panel shows the case where $\tilde{r}_1 = 4, \tilde{r}_2 = 5.$ In this case the two smallest principal angle are unchanged (and still joint). Two more principal angle appear. One is larger than the random direction bound, and thus cannot be distinguished from pure noise. The other is just inside the boundary of the much increased Wedin bound suggesting correlation among individual components. Based on these, the choice $\tilde{r}_1 = 3, \tilde{r}_2 = 2$ is used in the subsequent analysis.

\begin{figure}[htb!]
	\vspace{0.1in}
	\begin{minipage}[b]{0.5\linewidth}
		\centering
		\includegraphics[width=0.8\linewidth]{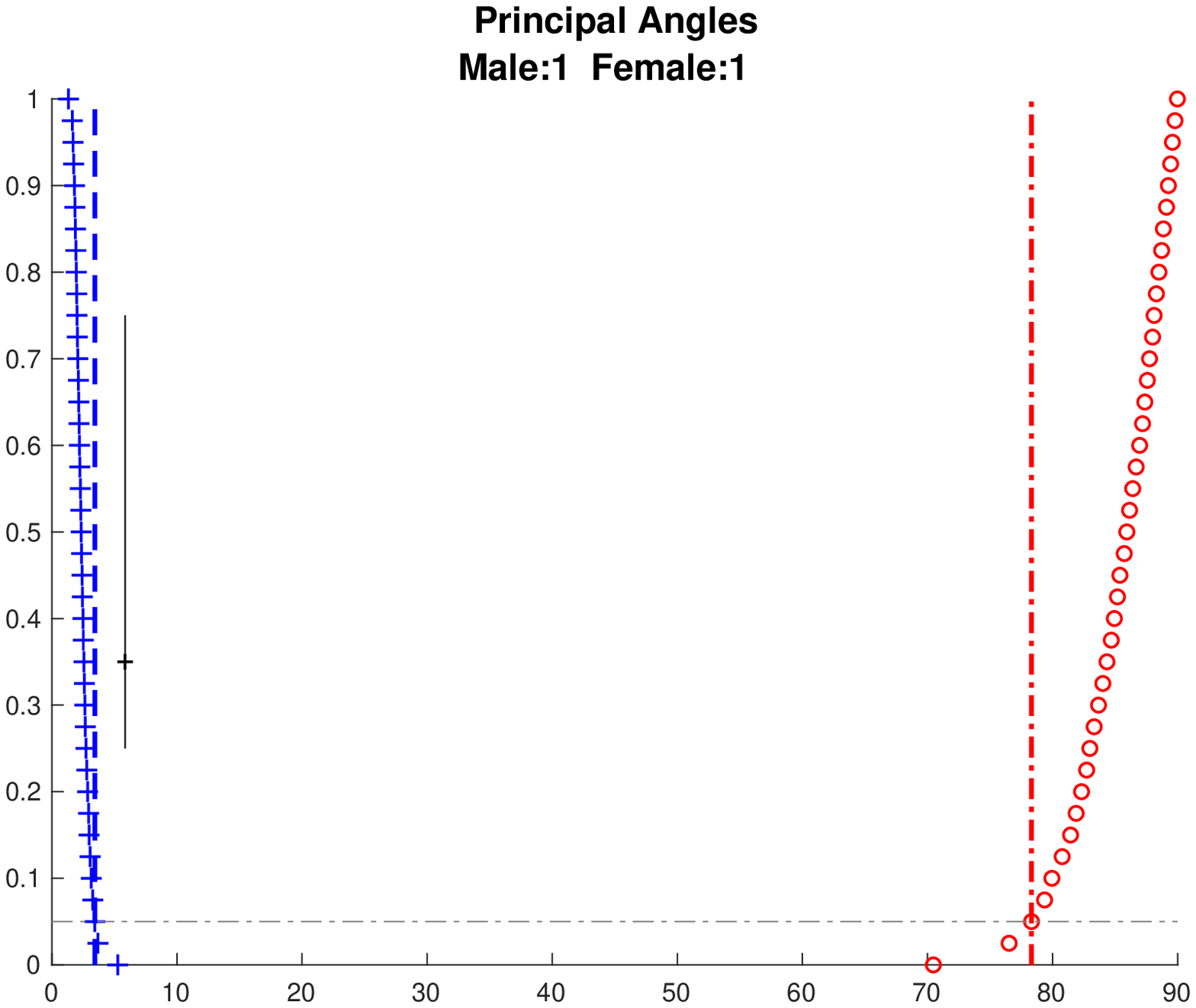}
		\vspace{4ex}
	\end{minipage}
	\begin{minipage}[b]{0.5\linewidth}
		\centering
		\includegraphics[width=0.8\linewidth]{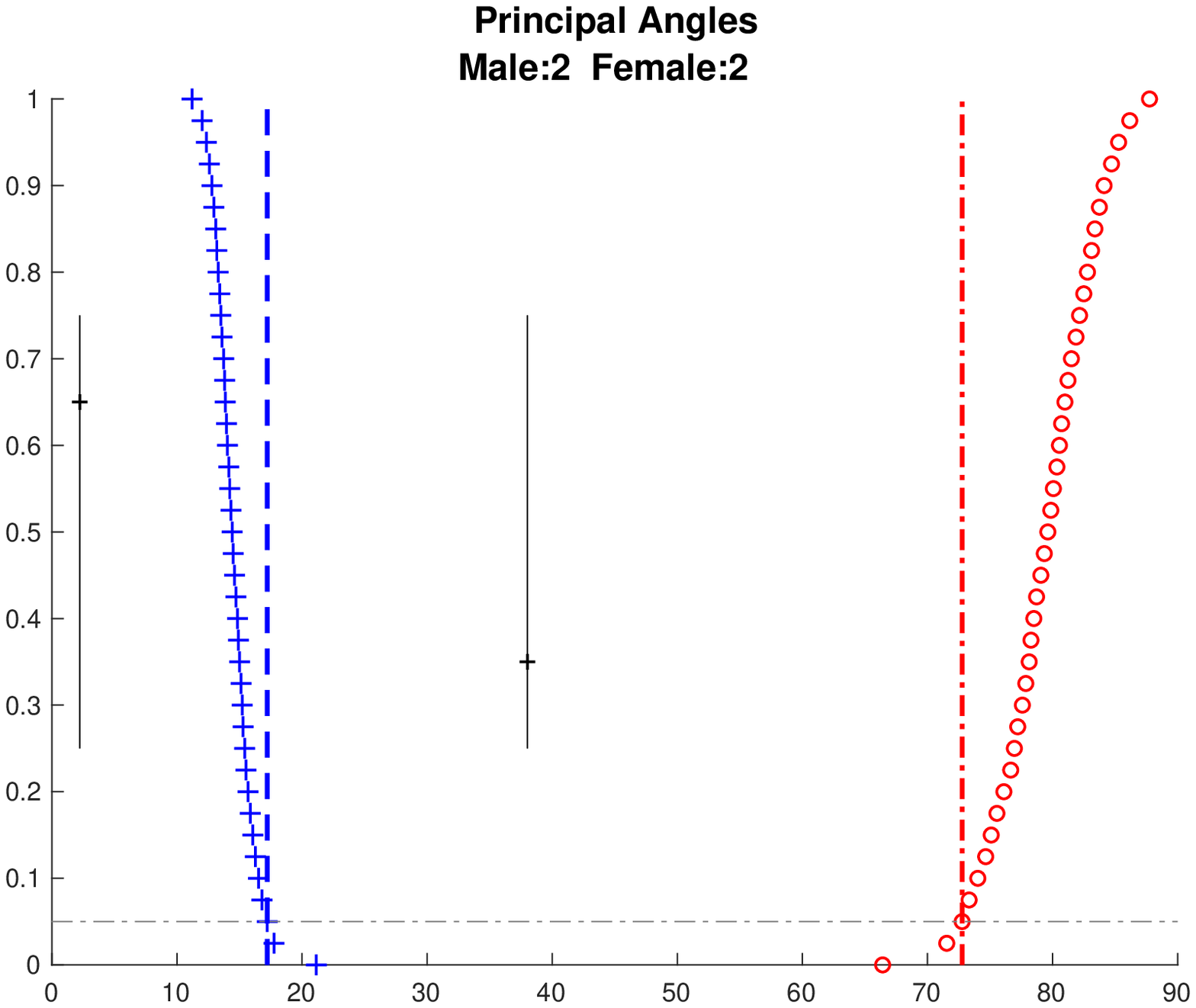}
		\vspace{4ex}
	\end{minipage}
	
	\begin{minipage}[b]{0.5\linewidth}
		\centering
		\includegraphics[width=0.8\linewidth]{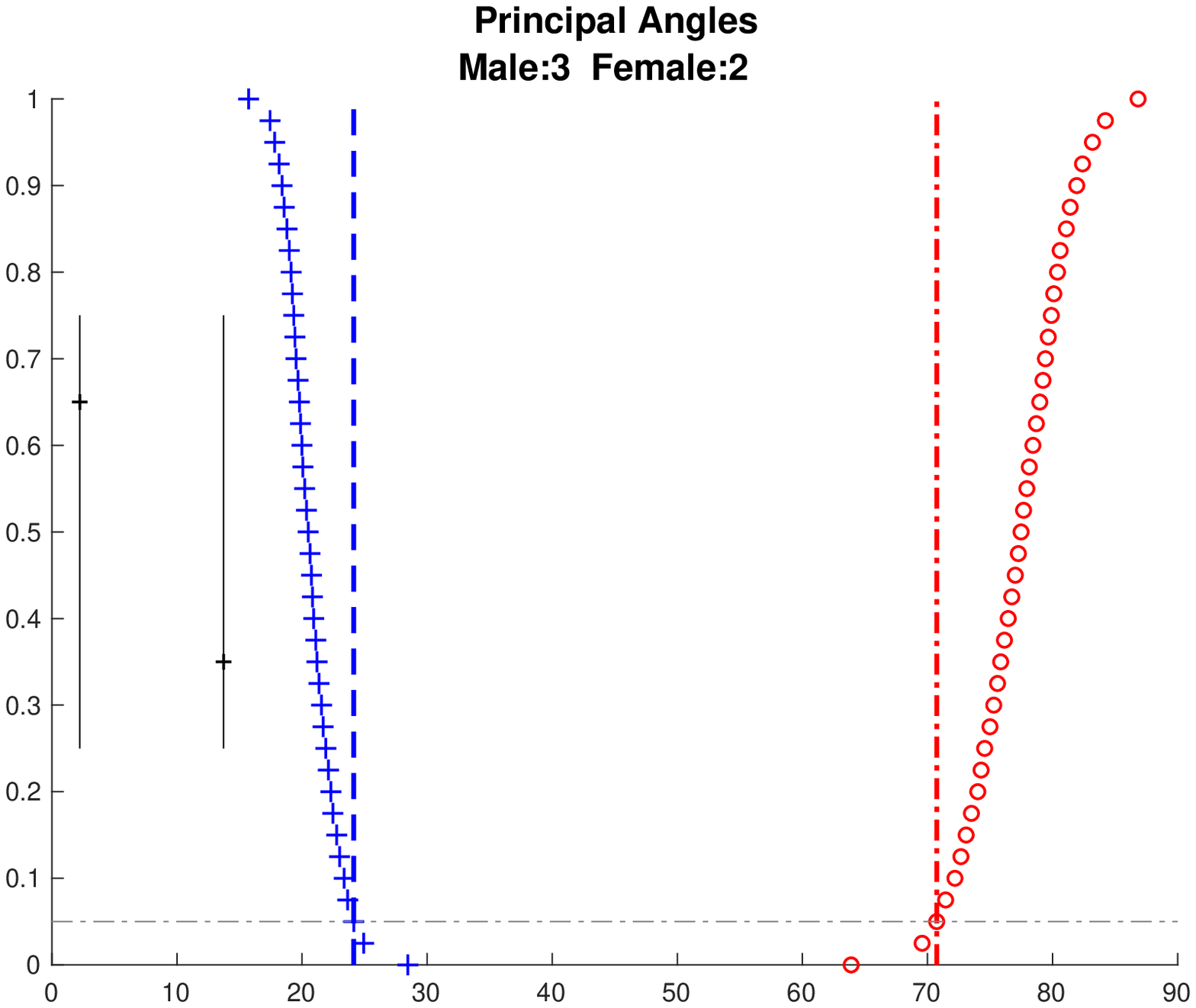}
		\vspace{4ex}
	\end{minipage}
	\begin{minipage}[b]{0.5\linewidth}
		\centering
		\includegraphics[width=0.8\linewidth]{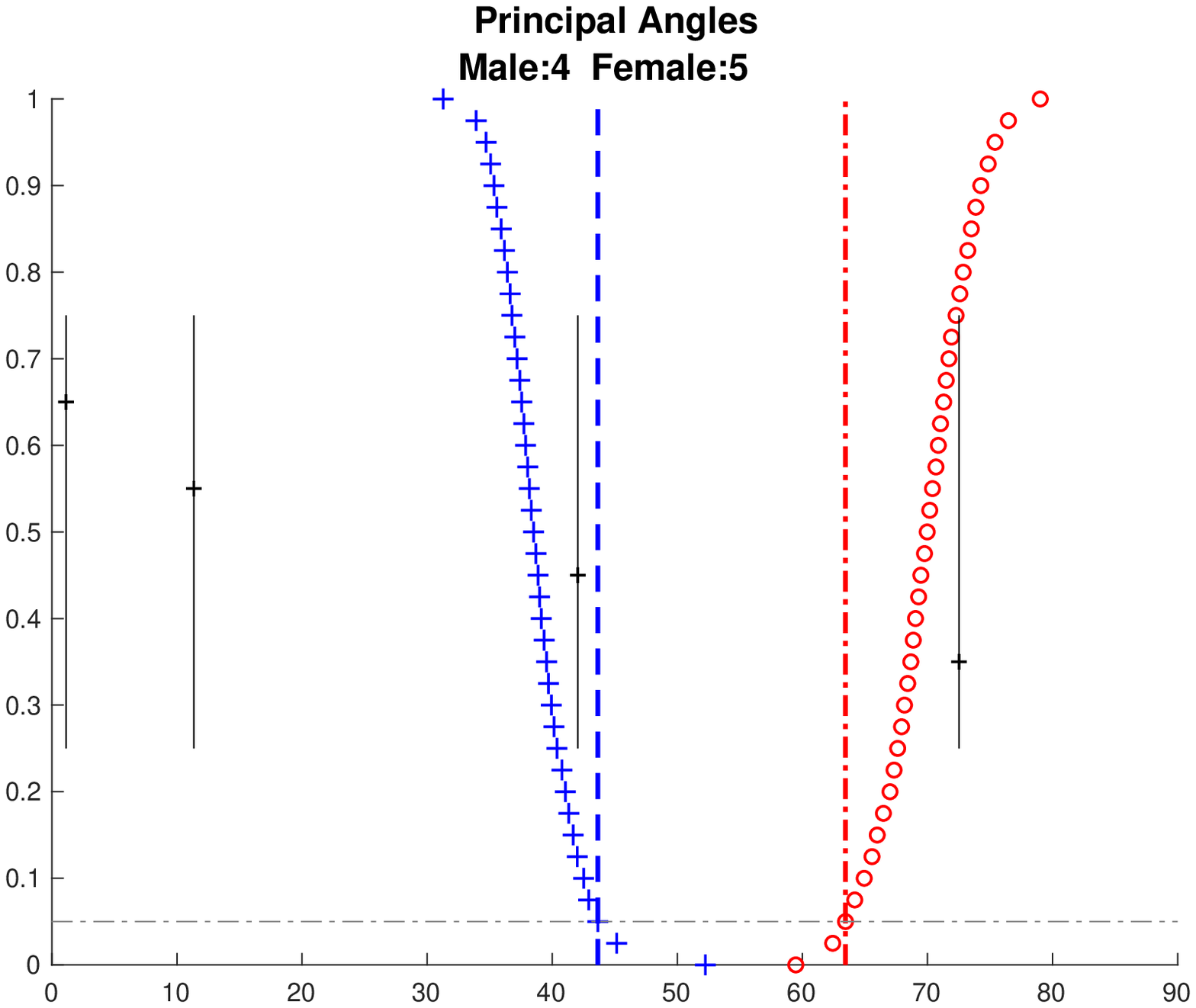}
		\vspace{4ex}
	\end{minipage}
	\caption{Principal angle diagnostic graphics for Spanish mortality data set over various rank choices. Provides the rationale of the rank choice, $\tilde{r}_1 = 3, \tilde{r}_2 = 2$.}
	\label{fig:mortality_angle_diagnostic}
\end{figure}

The resulting AJIVE gives two joint components and one individual component for the male. Since the loading matrices provide important information on the effect of different age groups, block specific analysis together with loading matrices is most informative here. 

Figure~\ref{fig:mortalityjoint1} shows a view of the first joint components for the males (left) and females (right) that is very different from the heat map views used in Section~\ref{c:jive-s:intro-subsec:toy}. While these components are matrices, additional insights come from plotting the rows of the matrices as curves over year (top) and the columns as curves over age (bottom). The curves over year (top) are colored using a heat color scheme, indexing age (black = 0 through red = 40 to yellow = 95 as shown in the vertical color bar on the bottom left). The curves over age (bottom) are colored using a rainbow color scheme (magenta = 1908 through green = 1960 to red = 2002, shown in the horizontal color bar in the top) and use the vertical axis as domain with horizontal axis as range to highlight the fact that these are column vectors. Additional visual cues to the matrix structure are the horizontal rainbow color bar in the top panel, showing that year indexes columns of the data matrix and the vertical heat color bar (bottom) showing that age indexes rows of the component matrix. Because this is a single component, i.e., a rank-one approximation of the data, each curve is a multiple of a single eigenvector. The corresponding coefficients are shown on the right. In conventional PCA/SVD terminology, the upper block specific coefficients are called \emph{loadings}, and are in fact the entries of the left eigenvectors (colored using the heat color scale on the bottom). Similarly, the lower coefficients are called \emph{scores} and are the entries of the right eigenvectors, colored using the rainbow bar shown in the top. 

The scores plots together with the rows as curves plots in Figure~\ref{fig:mortalityjoint1} indicate a dramatic improvement in mortality over time for both males and females. The scores plots are bimodal indicating rapid overall improvement in mortality around the 1950s. This is also visible as the steepest part in the rows as curves plot. Thus the first mode of joint variation is driven by overall improvement in mortality. In addition to the overall improvement, the rows as curves and scores plots also show two major mortality events, the global flu pandemic of 1918 and the Spanish Civil war in the late 1930s. The loading plots together with the columns as curves plots present the different impacts of this common variation on different age groups for males and females.  The loadings plot for males suggests the improvement in mortality is gradually increasing from older towards younger age groups. In contrast, the female block has a bimodal kernel density estimate of the loadings. This shows that females of child bearing age have received large benefits from improving health care. This effect is similarly visible from comparing the female versus male columns as curves. 

\begin{figure}[ht!]
	\begin{minipage}[b]{0.5\textwidth}
		\centering
		\includegraphics[scale=0.5]{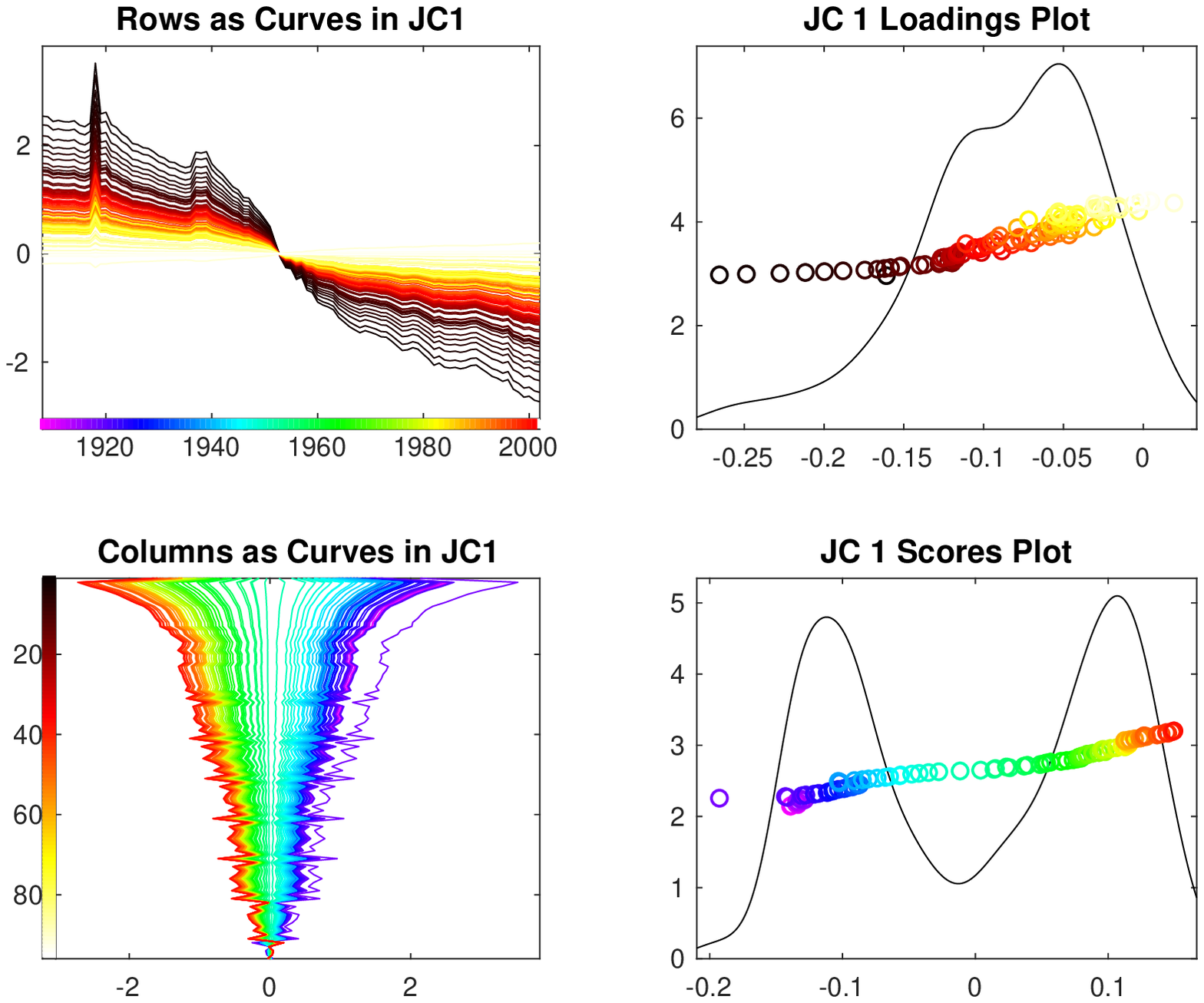}
	\end{minipage}
	\begin{minipage}[b]{0.5\linewidth}
		\centering
		\includegraphics[scale=0.5]{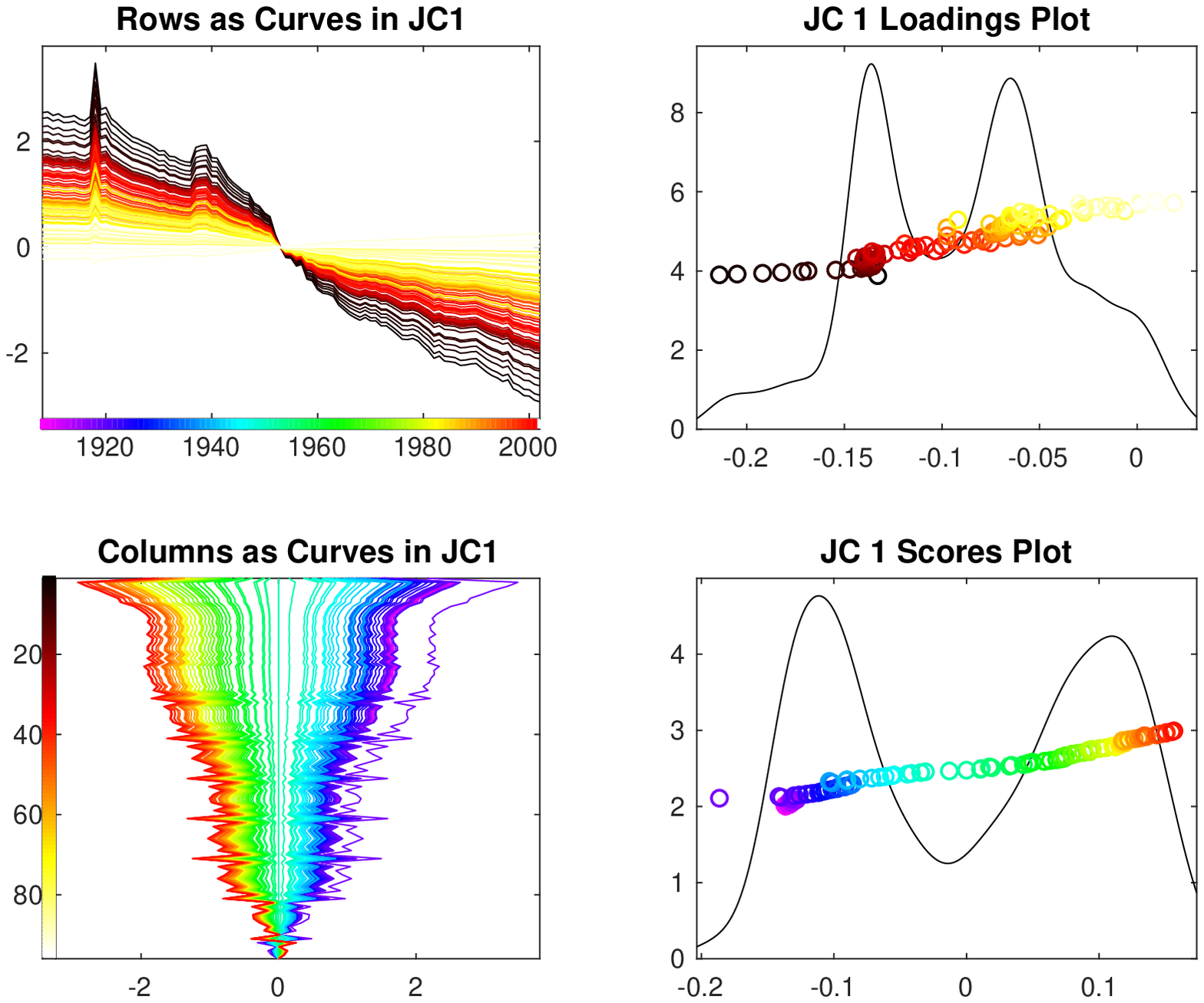}
	\end{minipage}
	\caption{The first block specific joint components of male (left panel) and female (right panel) contain the common modes of variation caused by the overall improvement across different age groups, as can be seen from the scores plots in the right bottom of each panel. The dramatic decrease happened around the 1950s shown in the columns plots. The degree of decrease varies over age groups.}
	\label{fig:mortalityjoint1}
\end{figure}

The second block specific components of joint variation within each gender are similarly visualized in Figure~\ref{fig:mortalityjoint2}. This common variation reflects the contrast between the years around 1950 and the years around 1980 which can be told from the curves in the left top and the colors in the right bottom subplots in both male and female panels. In the scores plots, the green circles, seen on the left end, represent the years around 1950 when automobile penetration started. And the orange to red circles on the right end correspond to recent years, and much improved car and road safety. The loadings plot for males shows that these automobile events had a stronger influence on the 20--45 years old males in terms of both larger values and a second peak in the kernel density estimate. Although this contrast can also be seen in the loadings plot of females, it is not as strong as for the male block. Both loadings plots show an interesting outlier, the babies of age zero. We speculate this shows an improvement in post-natal care that coincidently happened around the same time.  

\begin{figure}[ht!]	
	\begin{minipage}[b]{0.5\textwidth}
		\centering
		\includegraphics[scale=0.5]{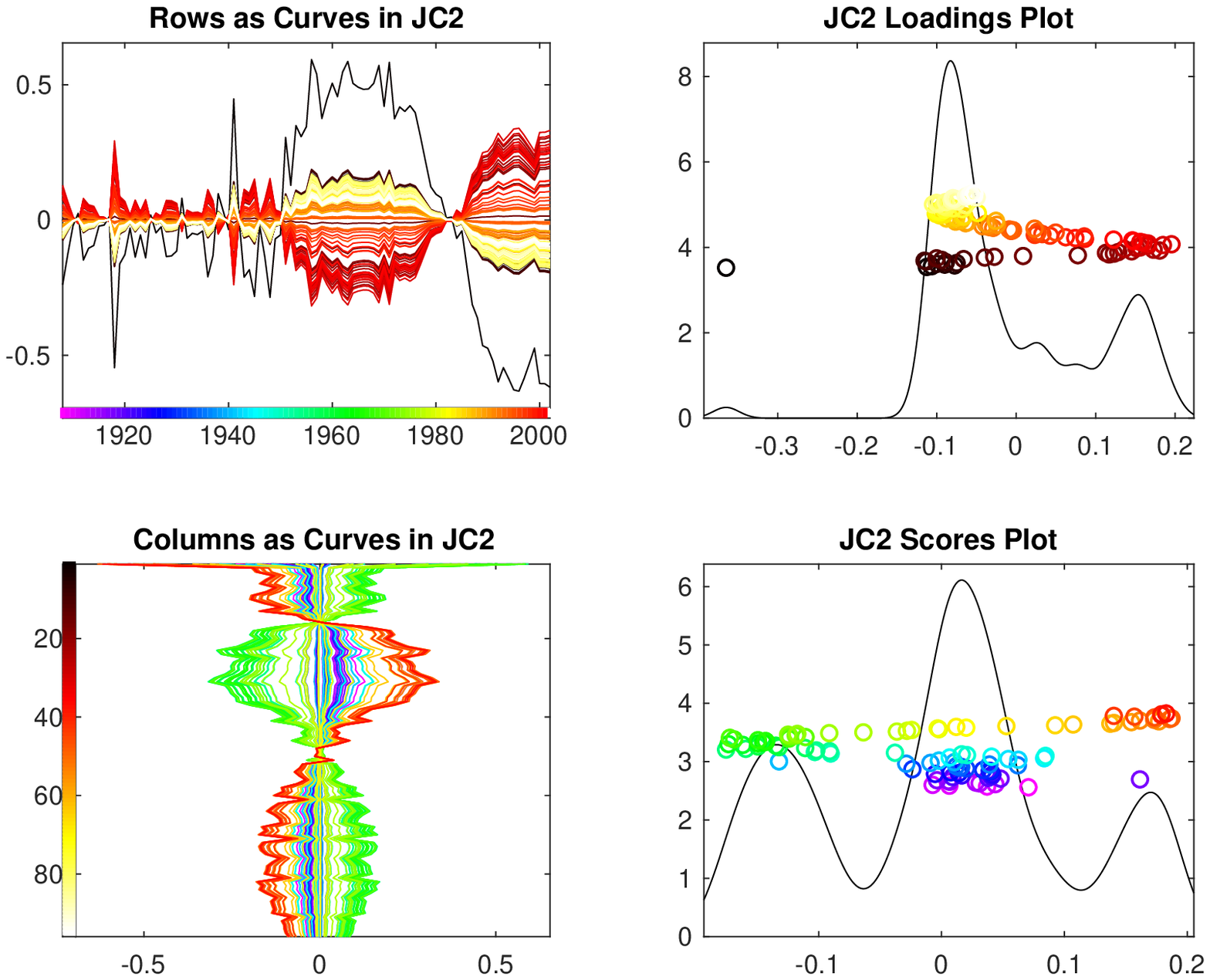}
	\end{minipage}
	\begin{minipage}[b]{0.5\textwidth}
		\centering
		\includegraphics[scale=0.5]{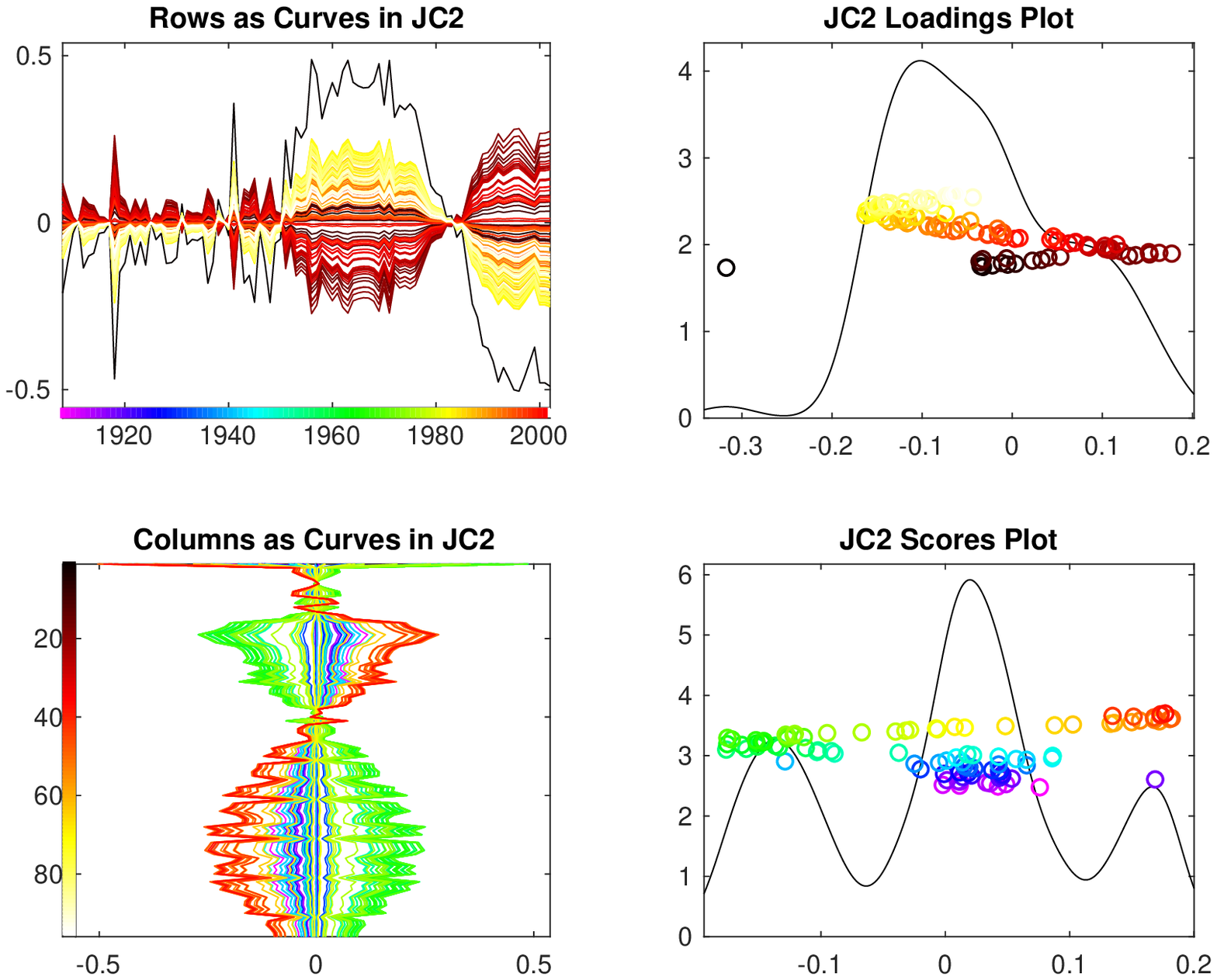}
	\end{minipage}
	\caption{The second joint components of male (left) and female (right) contain the common modes of variation driven by the increase in fatalities caused by automobile penetration and later improvement due to safety improvements. This can be seen from the scores plots in the right bottom. The loadings plots show that this automobile event exerted a significantly stronger impact on the 20--45 males.}
	\label{fig:mortalityjoint2}
\end{figure}

Another interesting result comes from the studying the first individual components for males, shown in Figure~\ref{fig:mortalityindiv}. In the scores plot of males, the blue circles stand out from the rest, corresponding to the years of the Spanish civil war when a significant spike can be seen in the rows as curves plot. Young to middle age groups (typical military age) are affected more than the others as seen in the loadings plot and columns as curves plot.  

\begin{figure}[t!]	
		\centering
		\includegraphics[scale=0.8]{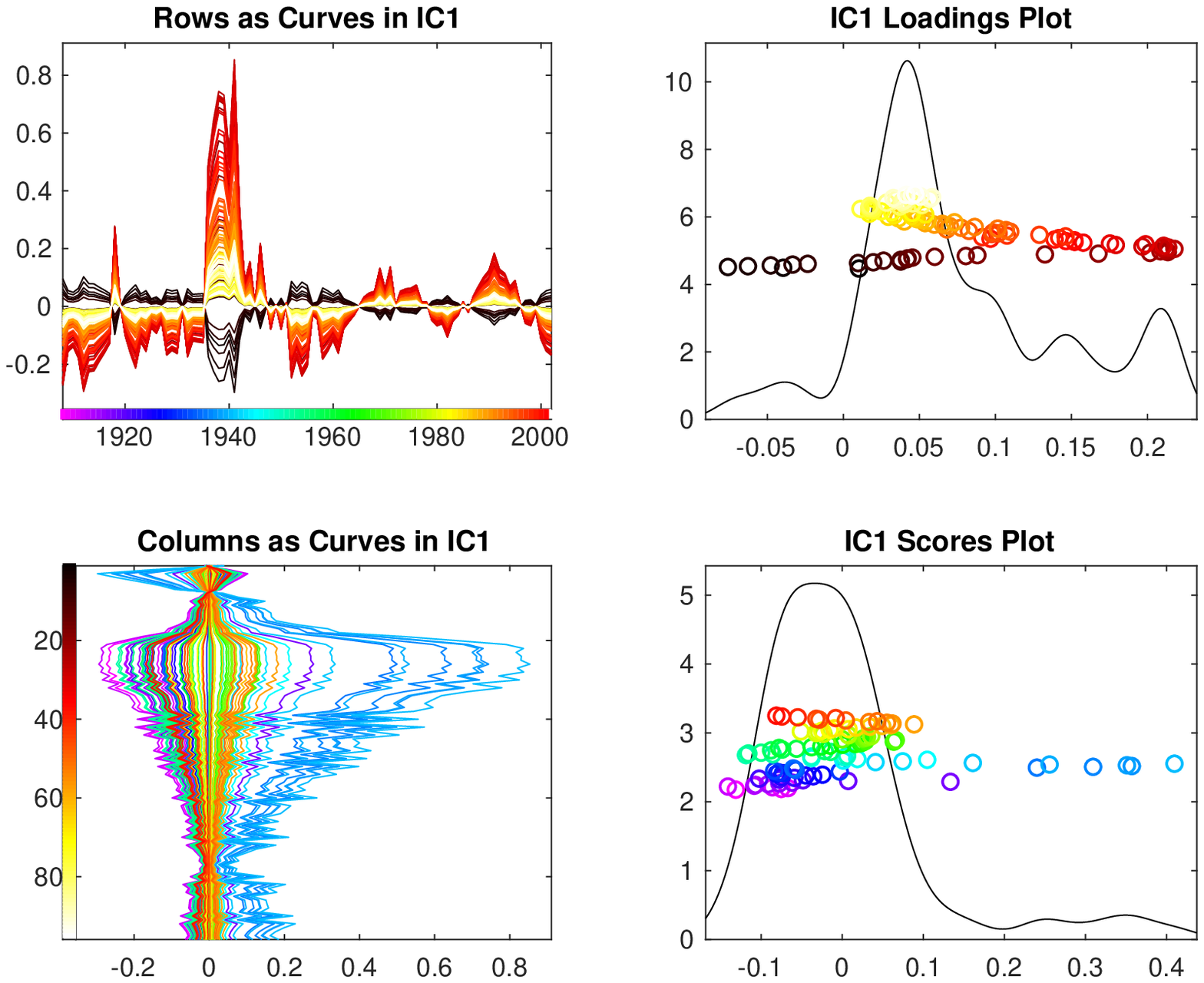}
	\caption{The individual component of male contains the variation driven by the Spanish civil war which can be seen from the blue circles on the right end of the right bottom plot. The Spanish civil war mainly affected the young to middle age males.}
	\label{fig:mortalityindiv}
\end{figure}

\section{Optimization perspective}\label{s:optimization_analysis}

In this section we will investigate how AJIVE compares to PLS, CCA and COBE using the optimization problems that each method is based on. 
Recall that $X_1, \ldots, X_K$ are $(d_k \times n)$ data matrices, with SVD decompositions $X_k = U_{X_k} \Sigma_{X_k} V_{X_k}^\top$, where $ \Sigma_{X_k}$ contains no zeros on its diagonal. To be compatible with AJIVE, we will consider these three algorithms in a non-standard configuration using row spaces. {In Sections~\ref{s:PLS_optimization} and \ref{s:CCAPPA_optimization}, we assume that the matrices $X_k$ are row centered.} We will also use the following notation: for ${{\vec{a}_1 \in \mathbb{R}^{d_1}, \vec{a}_2 \in \mathbb{R}^{d_2}}}$,
\[
\langle \vec{a}_1 X_1, \vec{a}_2 X_2\rangle=\cov(\vec{a}_1 X_1, \vec{a}_2 X_2)=\sqrt{\var(\vec{a}_1 X_1)\var(\vec{a}_2 X_2)}\corr(\vec{a}_1 X_1, \vec{a}_2 X_2).
\]

\subsection{Partial least squares}
\label{s:PLS_optimization}

The PLS finds  linear combinations of rows of $X_1$ and $X_2$ maximizing their sample covariance. More precisely, the PLS identifies a set of pairs of principal vectors, indexed by $i$, obtained sequentially from the following maximization problems:
\begin{equation}
\label{opt:pls}
\begin{aligned}
\{\vec{a}_1^{(i)},\vec{a}_2^{(i)}\}&=\argmax_{\substack{\vec{a}_1 \in \mathbb{R}^{d_1}, \vec{a}_2 \in \mathbb{R}^{d_2}}}\langle \vec{a}_1 X_1, \vec{a}_2 X_2\rangle\\
\mbox{subject to the constraints: } & \|\vec{a}_1\| = 1, \|\vec{a}_2\| = 1,\\
\langle \vec{a}_1 X_1, \vec{a}_1^{(j)} X_1 \rangle = 0,\ 
&\langle \vec{a}_2 X_2, \vec{a}_2^{(j)} X_2 \rangle = 0,\ 
\mbox{for all } j \in \{1, \ldots, i-1\}.
\end{aligned}
\end{equation}
Unlike AJIVE, the directions from PLS are influenced by both variance within data blocks and correlation between the data blocks. In particular, if the signal strength of the individual structure is sufficiently large it might be mistakenly classified as a joint structure by being found ahead of the real joint structure. This phenomenon can be seen in the analysis of the toy example of Section~\ref{c:jive-s:intro-subsec:toy} in \ref{s:toy_details}.

\subsection{Canonical correlation analysis/ Principal angle analysis}
\label{s:CCAPPA_optimization}

Similar to PLS, the  CCA finds  linear combinations of rows of $X_1$ and $X_2$ maximizing their sample correlation. In particular, CCA identifies a set of pairs of canonical vectors obtained sequentially from the optimization problem:
\begin{equation}
\label{opt:cca}
\begin{aligned}
\{\vec{a}_1^{(i)},\vec{a}_2^{(i)}\}&=\argmax_{\substack{\vec{a}_1 \in \mathbb{R}^{d_1}, \vec{a}_2 \in \mathbb{R}^{d_2}}}
\langle \vec{a}_1 X_1, \vec{a}_2 X_2\rangle \\
\mbox{subject to the constraints: }& \|\vec{a}_1 X_1\| = 1, \|\vec{a}_2 X_2\| = 1\\
\langle \vec{a}_1 X_1, \vec{a}_1^{(j)} X_1 \rangle = 0,\ 
& \langle \vec{a}_2 X_2, \vec{a}_2^{(j)} X_2 \rangle = 0,\mbox{ for all }
j \in \{ 1, \ldots, i-1\}.
\end{aligned}
\end{equation}
This form makes the relationship between \eqref{opt:pls} and \eqref{opt:cca} clear and is equivalent to the usual formulation of optimizing the correlation. 

There is an important relationship between CCA and PAA \citep{bjorck1973numerical}, i.e.,  if $\rho_i = \langle \vec{a}_1^{(i)} X_1, \vec{a}_2^{(i)} X_2\rangle$ is the $i$th canonical correlation, $\rho_i = \cos(\theta_{i})$, where $\theta_{i}$ is the $i$th principal angle between row spaces of $X_1$ and $X_2$. The principal vector pairs 
$\{\vec{x}_{1,i}, \vec{x}_{2,i}\}=  \{\vec{a}_1^{(i)} X_1, \vec{a}_2^{(i)} X_2\}$ are often obtained through SVD of $V_{X_1}^\top V_{X_2} $. In particular, let $\vec{u}_{X_1,i}, \vec{u}_{X_2,i}$ be the $i$th left and right singular vectors of $V_{X_1}^\top V_{X_2} $. Then, the $i$th pair of principal vectors are
\[
\vec{x}_{1,i} = \vec{u}_{X_1,i}^\top V_{X_1}^\top ,\quad
\vec{x}_{2,i} = \vec{u}_{X_2,i}^\top V_{X_2}^\top.
\]

An issue with CCA of high-dimensional data is related to the fact that CCA is interested in the canonical vectors $\vec{a}_i$ rather {than} the principal vectors $\vec{x}_i$. In particular, when $d_1 > n, d_2 > n$,  the values of $\vec{a}_i$ in \eqref{opt:cca} are not identifiable due to the singularity of $X_1 X_1^\top$ and $X_2 X_2^\top$. Several approaches have been taken to solve this problem. One approach is to use the Moore--Penrose pseudo inverse to replace the inverse of $X_1 X_1^\top$ and $X_2 X_2^\top$. A second approach is to add a ridge penalty on $X_1 X_1^\top$ and $X_2 X_2^\top$~\citep{vinod1976canonical}. A third approach called penalized CCA is to add penalty functions on $\{\vec{a}_1^{(i)},\vec{a}_2^{(i)}\}$, such as an $\ell_1$ penalty~\citep{parkhomenko2007genome, le2009sparse}, an elastic net~\citep{waaijenborg2008quantifying} or a fused lasso~\citep{witten2009penalized}. Another approach called diagonal penalized CCA is to replace $X_1 X_1^\top$ and $X_2 X_2^\top$ by diag($X_1 X_1^\top$) and diag($X_2 X_2^\top$) \citep{parkhomenko2009sparse, witten2009penalized}.

Another important issue with CCA, which is directly related to AJIVE, is that when $d_1 > n, d_2 > n$, CCA is {generally} driven by noise. \citet{lee2007continuum, samarov2009analysis, lee2016high} study the asymptotic behavior of CCA {in the high-dimension low sample size context} and point out the inconsistency phenomenon in this case. One can solve this issue, like AJIVE and COBE, by replacing {$X_k$} by its low-rank approximation {$\tilde A_1$, $\tilde A_2$}. Recall notation from Eq.~\eqref{equ:step1}.
The $i$th principal vectors are $\vec{p}_i = \tilde{V}_1\vec{u}_{1,i}$, $\vec{q}_i = \tilde{V}_2\vec{u}_{2,i}$, where $\vec{u}_{j,i}$ is the $i$th singular vector of $\tilde U_i$  of the SVD of $\tilde{V}_1^\top \tilde{V}_2$ respectively.

As discussed in Section~\ref{c:jive-s:method}, AJIVE uses an equivalent principal angle calculation based on SVD of $M = [\tilde V_1, \tilde V_2]^\top = U_M \Sigma_M V_M^\top$  \citep{miao1992principal}. 
AJIVE uses the transpose of the $i$th right singular vector, $V_{M, i}^\top$, as the estimated $i$th basis vector of the joint space, provided that the $i$th principal angle is smaller than the threshold derived in Section~\ref{c:jive-s:method-subsec:step2-2blocks}.  Moreover, if the $i$th principal angle has a value distinct from other principal angles, then the $i$th left singular vector of $M$ can be written as $U_{M, i} = [\vec{u}_{1,i}^\top, \vec{u}_{2,i}^\top]^\top/\sqrt{2}$.  Consequently
\[
V_{M, i}^\top = \frac{1}{\sigma_{M,i}} \, U_{M, i}^\top M 
= \frac{1}{\sqrt{2}\sigma_{M,i}} \, {(\vec{u}_{1,i}^\top \tilde{V}_1^\top + \vec{u}_{2,i}^\top \tilde{V}_2^\top)}
= \frac{1}{\sqrt{2}\sigma_{M,i}} \, {(\vec{p}_{i}^\top + \vec{q}_{i}^\top)}.
\]
This shows that the AJIVE direction $V_{M, i}^\top$ is the scaled sum of the $i$th pair of principal vectors.

CCA applied to the low-rank approximations {$\tilde A_k$} and AJIVE are therefore closely related.
However, AJIVE provides one joint vector per two distinct principal vectors that by the virtue of being an average should be a better estimate of the joint space than either of the principal vectors. More importantly, AJIVE uses a theoretically sound threshold of the principal angles that allows us to segment individual and joint variation. 

The AJIVE formulation allows for a natural extension to multi-block situations.
Several approaches of Multiset Canonical Correlation Analysis (mCCA) have been developed as extensions of CCA~\citep{horst1961relations, kettenring1971canonical, nielsen2002multiset}. There is no general consensus on which of these extensions is preferable.  We point out that AJIVE is closely related to one of the mCCA discussed in \citet{nielsen2002multiset}. 

This version of mCCA is defined using the optimization problem for the $i$th set of canonical vectors $\{ \vec{a}_1^{(i)}, \ldots, \vec{a}_K^{(i)} \}$ and corresponding principal vectors (also called canonical {variables}) $\{ \vec{a}_1^{(i)} X_1, \ldots, \vec{a}_K^{(i)} X_K \}$:
\begin{equation}
\label{opt:mcca}
\begin{aligned}
\{\vec{a}_1^{(i)}, \ldots, \vec{a}_K^{(i)} \}&= \argmax_{\substack{\vec{a}_1, \ldots, \vec{a}_K}} \sum_{\substack{1 \leq k,l \leq K}} \langle \vec{a}_k X_k, \vec{a}_l X_l \rangle\\
\mbox{subject to the constraints: }& \sum_{\substack{k = 1}}^{K} \|\vec{a}_k X_k\|_2^2 = 1,\\
\langle \vec{a}_k X_k, \vec{a}_k^{(j)} X_k \rangle = 0, & \quad \mbox{for all } k \in \{ 1, \ldots, K\},\quad  j \in\{ 1,\ldots, i-1\}.
\end{aligned}
\end{equation}
Notice that the constraint in \eqref{opt:mcca} is different than the perhaps more natural $ \|\vec{a}_k X_k\|_2^2 = 1$ for all $k \in \{ 1, \ldots, K\}$.

If the $i$th singular value corresponding to the AJIVE direction $V_{M, i}^\top$ has a value distinct from other singular values in the AJIVE SVD, then calculations similar to the two block case show that the $i$th basis vector of the joint space from AJIVE
\[
V_{M, i}^\top=\frac{1}{\sigma_{M,i}} \sum_{k = 1}^{K} \vec{a}_k^{(i)} X_k
\]
is the scaled sum of the corresponding canonical variables. In fact, $V_{M, i}^\top$ is the $i$th flag mean of the row spaces of $X_1, \ldots, X_K$, as defined by \citet{draper2014flag}, which thus is a building block of AJIVE.

\subsection{Common orthogonal basis extraction}

\citet{zhou2015group}  proposed a compelling optimization problem for finding the common orthogonal basis (COBE). It is based on iteratively solving
\begin{equation}
\label{opt:cobe}
\begin{aligned}
\bar{a}_i=&\argmin_{{\bar{a},\ z_{i,k}, k = 1, \ldots, K}}  
\sum_{k=1}^{K} \|\tilde V_k z_{i,k} - \bar{a}\|^2\\
\mbox{subject to the constraints: } &\|\bar{a}\|_2 = 1, \langle \bar{a}, \bar{a}_j\rangle=0,\mbox{ for all } j \in \{1,\ldots,i-1\}.
\end{aligned}
\end{equation}
To compare COBE to AJIVE we first simplify the objective function of \eqref{opt:cobe} to
$$\begin{aligned}
\sum_{k=1}^{K} \|\tilde V_k z_{i, k}  - \bar{a}_i\|_2^2
= 	\sum_{k=1}^{K} \| \tilde V_k z_{i,k}\|_2^2 + K \|\bar{a}_i\|_2^2 - 2 \sum_{k=1}^{K} \langle  \tilde V_k z_{i,k}, \bar{a}_i\rangle 
= \|z_i\|_2^2 + K \|\bar{a}_i\|_2^2 - 2  z_i^\top M \bar{a}_i.
\end{aligned}$$
where $z_i = [z_{i,1}, \ldots, z_{i,K}]$. 
If we fix the value of $\| z_i\|$ we see that the solution to the optimization problem (\ref{opt:cobe}) is the same as SVD of $M$ with $\bar{a}_i = V_{M,i}$. Moreover this solution is invariant in $\|z_i\|$.

Thus the optimization problem (\ref{opt:cobe}) gives the same result as AJIVE. However, because AJIVE uses well optimized SVD rather than a heuristic iteration algorithm, AJIVE is much faster than the COBE algorithm. Moreover, COBE lacks any principally based standard on how to choose the threshold for selecting the joint space.

To understand why this is a serious issue consider the results of applying COBE to the TCGA data discussed in detail in the next section. To make comparisons fair we provided COBE the same selected first stage ranks for each data block as AJIVE. COBE's default threshold for separating joint and individual structure of $0.01$ is too low to find any joint component. Therefore we tried raising the  default threshold 0.01 to 1, in which case COBE fails to finish on our computer due to its inefficient handling of high dimensional data.

\section*{Acknowledgments}
This research was supported in part by the National Science Foundation under
Grant Nos 1016441, 1512945 and 1633074.

\appendix

\section{Proofs}
\label{s:proofs}
\begin{proof}[Proof of Lemma~\ref{lemma-exist}]
	
	Define the row subspaces respectively for each matrix $A_k$ as $\row(A_k) \subseteq \mathbb{R}^n$.  For non-trivial cases, define a subspace $\row(J) \neq \{\vec{0}\}$ as the intersection of the row spaces $\{\row(A_1), \ldots, \row(A_K)\}$, i.e.,
	$$\row(J) \triangleq \bigcap\limits_{k=1}^{K} \row(A_k).$$
	For each matrix $A_k$, two matrices $J_k$, $I_k$ can be obtained by projection of $A_k$ on $\row(J)$ and its orthogonal complement in the row space $\row(A_k)$. Thus the two matrices satisfy $J_k + I_k = A_k$ and their row subspaces are orthogonal with each other, i.e., $\row(J) \perp \row(I_k)$, for all $k \in \{ 1, \ldots, K\}$. Then the intersection of the row subspaces $\{\row(I_1), \ldots, \row(I_K)\}$, $\bigcap\limits_{k=1}^{K} \row(I_k)$, has a zero projection matrix. Therefore, we have $\bigcap\limits_{k=1}^{K} \row(I_k) = \{\vec{0}\}$ and have obtained a set of matrices simultaneously satisfying the stated constraints.	
	
	On the other hand, it follows from the assumptions that the row space $\row(A_k)$ is spanned by the union of basis vectors of $\row(J_k)$ and $\row(I_k)$, which indicates
	$$ \row(J) = \bigcap\limits_{k=1}^{K} \row(A_k).$$
	Accordingly, the matrices $J_1, \ldots, J_K$ and $I_1, \ldots, I_K$ are also uniquely defined.
	
\end{proof}

\begin{proof}[Proof of Lemma~\ref{lemma-bound}]
	Let $P_{1}$ and $P_{2}$ be the projection matrices onto the individually perturbed joint row spaces. And let $P$ be the projection matrix onto the common joint row space $J$. Thus, we have
	\begin{align*}
	\sin\theta & = \|(I - P_{1})P_{2} \|  \leq \|(I - P_{1})(I - P)P_{2}\| + \|(I - P_{1})PP_{2}\| \\
	& \leq \|(I - P_{1})(I - P)\| \, \|(I - P)P_{2}\| + \|(I - P_{1})P\| \, \|PP_{2}\|,
	\end{align*}
	in which $\|(I - P_{1})P\| = \sin\theta_{1,1}$, $\|(I - P_{1})(I - P)\| = \cos\theta_{1}$, $\|(I - P_{2})P\| = \sin\theta_{2, 1}$ and $\|(I - P_{2})(I - P)\| = \cos\theta_{2, 1}$.
	Therefore,
	\begin{equation*}
	\sin\phi \leq \cos\theta_{1, 1}\sin\theta_{2, 1} + \sin\theta_{1, 1}\cos\theta_{2, 1} = \sin(\theta_{1, 1} + \theta_{2, 1}).
	\end{equation*}
\end{proof}

\begin{proof}[Proof of Lemma~\ref{lemma-bound-multi}]	
	Notation from (\ref{equ:JointM2}) and (\ref{equ:JointM}) is used here. For each singular value $\sigma_{M,i}$, it can be formulated as a sequential optimization problem i.e
	$$ \sigma_{M,i}^2 = \textnormal{max}_{\substack{Q}} \|MQ\|^2_{F} = \textnormal{max}_{\substack{Q}} \sum_{k=1}^K\|\tilde{V}_{1}^\top Q\|^2_{F},$$
	where $Q$ is a rank $1$ projection matrix that is orthogonal to the previous $i-1$ optima, i.e., $Q_{1}, \ldots, Q_{i-1}$. The $Q$ that maximizes the Frobenius norm of $MQ$ is denoted as $Q_i$. 
	
	For an arbitrary component in the theoretical joint score subspace $\row(J)$, write its projection matrix as $P_J^{(1)}$. The Frobenius norm of $M$ projected onto $P_J^{(1)}$ is 
	\begin{equation*}
	\|MP_J^{(1)}\|^2_{F}   =  \begin{bmatrix}
	\tilde{V}_{1}^\top P_J^{(1)} \\
	\vdots \\
	\tilde{V}_{K}^\top P_J^{(1)} \\
	\end{bmatrix}_F^2
	\geq  \begin{bmatrix}
	\cos{\theta_1} \\
	\vdots \\
	\cos{\theta_K} \\
	\end{bmatrix}_F^2
	= \sum_{k=1}^{K} \cos^2\theta_{k}
	\end{equation*}
	
	Considering the mechanism of SVD, $\sigma_{M, 1}^2$ is the maximal norm obtained from the optimal projection matrix $Q_1 \subseteq \bigcup_{k=1}^{K}\row(\tilde{A}_k) \subseteq \mathbb{R}^n$. If all $\tilde{A}_k$ contain all components obtained by noise perturbation of the common row space $\row(J)$, then we have 
	$$
	\sigma_{M, 1}^2 \geq \|MP_J^{(1)}\|^2_{F} \geq \sum_{k=1}^{K} \cos^2\theta_{k}.
	$$
	to be considered as a component of the joint score subspace. 
	
	This argument can be applied sequentially. For the $Q_2 \in Q_1^{\perp} \cap \{\bigcup_{k=1}^{K}\row(\tilde{A}_k)\}$, there exist a non-empty joint subspace ($\subseteq \row(J)$) such that all $ Q_1^{\perp} \cap \row(\tilde{A}_k)$ contain perturbed directions of a joint component other than the one above. Therefore this joint component with projection matrix $P_J^{(2)}$ should have
	$$\sigma_{M, 2}^2 \geq \|MP_J^{(2)}\|^2_{F} \geq \sum_{k=1}^{K} \cos^2\theta_{k}.$$
	Thus the singular values corresponding to the joint components satisfies (\ref{equ:Mbound}) and this procedure can continue through at least $r_J$ steps.
\end{proof}

\section{Details of the toy example}
\label{s:toy_details}

Section~\ref{c:jive-s:intro-subsec:toy} introduces a toy example of two data blocks, $X$ ($100\times 100$) and $Y$ ($\mbox{10,000}\times 100$), with patterns corresponding to joint and individual structures. For details see Figure~\ref{fig:jive:toyrawdata}. 

A naive attempt at integrative analysis can be done by concatenating $X$ and $Y$ on columns and performing a singular value decomposition on this concatenated matrix. Figure~\ref{fig:jive:toysvdoutput} shows the results for three choices of rank. The rank-$2$ approximation essentially captures the joint variation component and the individual variation component of $X$, but the $Y$ components are hard to interpret. The bottom $2000$ rows show the joint variation but the top half of $Y$ reveals signal from the individual component of $X$.  One might hope that the $Y$ individual components would show up in the rank-$3$ and rank-$4$ approximations. However, because the noise in the $X$ matrix is so large, a random noise component from $X$ dominates the $Y$ signal, so the important latter component disappears from this low-rank representation unlike the AJIVE result in Figure~\ref{fig:jive:toyjiveoutput}. In this example, this naive approach completely fails to give a meaningful joint analysis. 

\begin{figure}[t!]
	\centering
	\vspace{0.1in}
	\includegraphics[scale = 0.75]{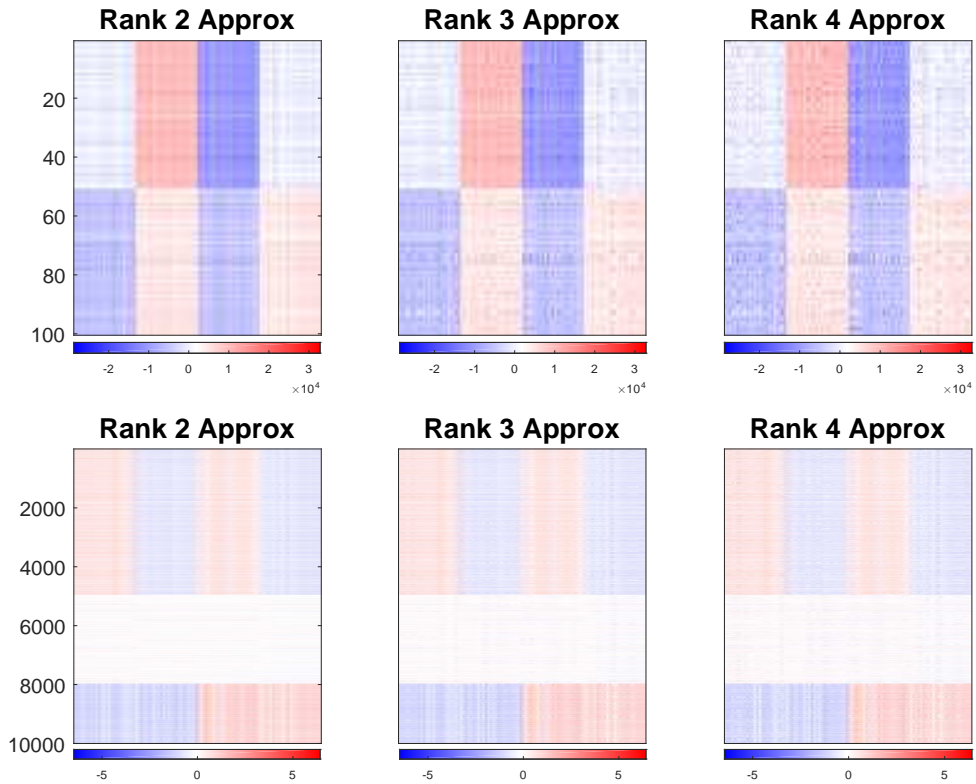}
	\caption[SVD approximations of concatenated toy data blocks]{Shows the concatenation SVD approximation of each block for rank 2 (left), 3 (center) and 4 (right). Although block $X$ has a relatively accurate approximation when the rank is chosen as 2, the individual pattern in block $Y$ has never been captured due to the heterogeneity between $X$ and $Y$.}
	\label{fig:jive:toysvdoutput}
\end{figure}

Figure~\ref{fig:jive:toyplsoutput} presents the PLS approximations with different numbers of components selected. PLS completely fails to separate joint and individual components. Instead it provides mixtures of the joint, and some of the individual components. Increasing the rank of the PLS approximation only includes more noise. 

\begin{figure}[htp!]
	\centering
	\includegraphics[scale = 0.75]{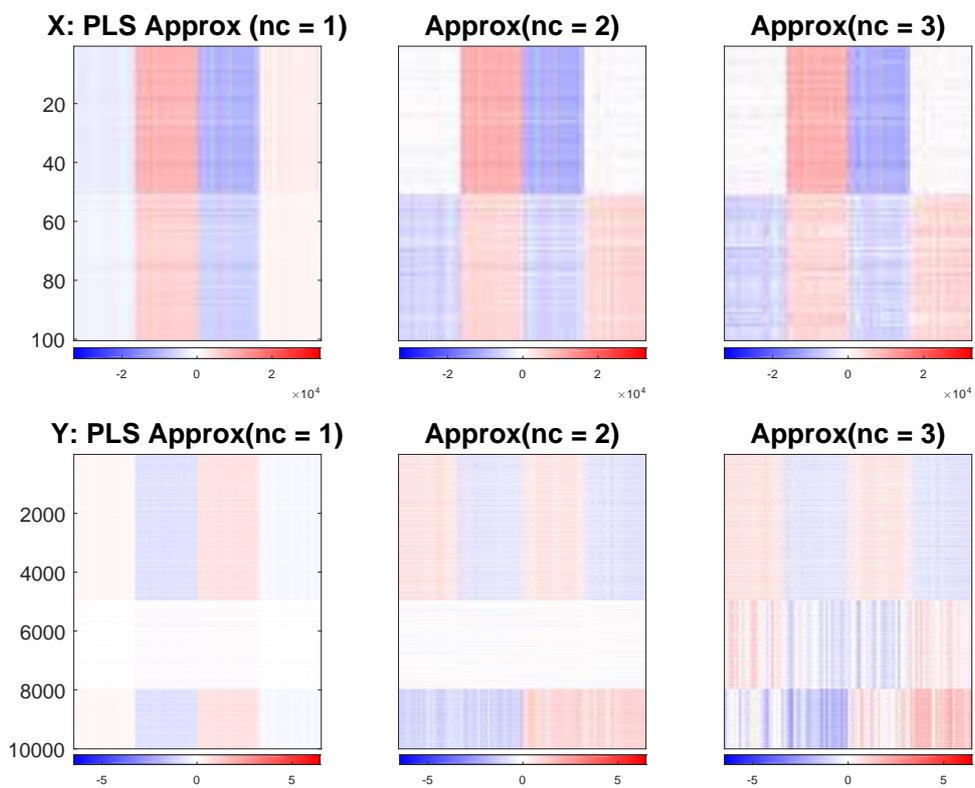}
	\caption[PLS approximations of the toy data]{PLS approximations of each block for numbers of components as 1 (left), 2 (center) and 3 (right). PLS fails to distinguish the joint and individual variation structure.}
	\label{fig:jive:toyplsoutput}
\end{figure}

The \citet{lock2013joint} method, called JIVE here, is applied to this toy data set. We implemented the JIVE algorithm using the \textsf{R} package \texttt{r.jive}~\citep{OConnell:2016du} without the orthogonality constraint. The \texttt{jive} function provides two options for rank selection: using a permutation test and the Bayesian Information Criterion, respectively. However, neither of them segmented joint signal properly. When using the Bayesian Information Criterion approach, no joint signal was identified and the true joint signals were labeled as noise. The permutation test approach gave a reasonable approximation of the total signal variation within each data block as in the left panel of Figure~\ref{fig:jive:toyoldjiveoutput}. However, the \citet{lock2013joint} method gave rank-$2$ approximations to the joint matrices shown in the middle panel. The approximation consists of the real joint component together with the individual component of $X$. Consequently, the approximation of the $X$ individual matrix is a zero matrix and a wrong approximation of the $Y$ individual matrix is shown in the top half of the right panel. We speculate that failure to correctly apportion the joint and individual variation is caused by the fact that the individual spaces are correlated, because the permutation test does not handle correlated individual signals very well. We also manually specified the correct joint and individual ranks for $X$ and $Y$, which results in the correct results.

We finally remark that the \citet{zhou2015group} method COBE correctly segments the toy example. However it takes significantly (39 times) longer time than AJIVE to do so.

\begin{figure}[htp!]
	\centering
	\vspace{0.1in}
	\includegraphics[scale = 0.75]{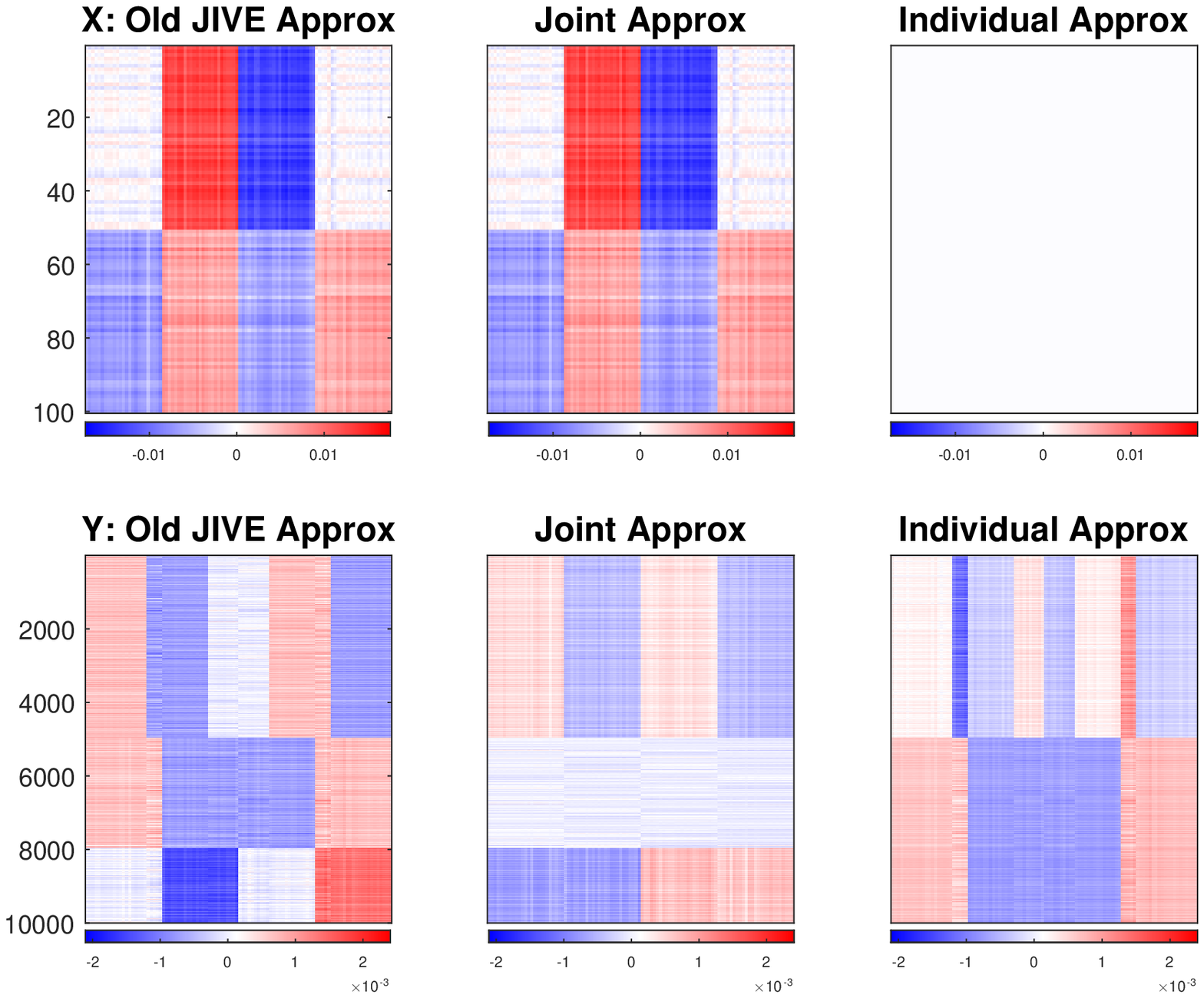}
	\caption[The old JIVE approximation of the toy data]{ The \citet{lock2013joint} JIVE method approximation of the data blocks $X$ and $Y$ in the toy example are shown in the first panel of figures. The joint matrix approximations (middle panel) incorrectly contain the individual component of $X$ because of the failure of the permutation test to correctly select ranks in the presence of correlated individual components.}
	\label{fig:jive:toyoldjiveoutput}
\end{figure}

\section*{References}


\begin{thebibliography}{50}
\providecommand{\natexlab}[1]{#1}
\providecommand{\url}[1]{\texttt{#1}}
\providecommand{\urlprefix}{URL }
\expandafter\ifx\csname urlstyle\endcsname\relax
  \providecommand{\doi}[1]{doi:\discretionary{}{}{}#1}\else
  \providecommand{\doi}[1]{doi:\discretionary{}{}{}\begingroup
  \urlstyle{rm}\url{#1}\endgroup}\fi
\providecommand{\bibinfo}[2]{#2}

\bibitem[{Abdi et~al.(2013)Abdi, Williams, and Valentin}]{abdi2013multiple}
\bibinfo{author}{H.~Abdi}, \bibinfo{author}{L.J. Williams},
  \bibinfo{author}{D.~Valentin}, \bibinfo{title}{Multiple factor analysis:
  Principal component analysis for multitable and multiblock data sets},
  \bibinfo{journal}{Wiley Interdisciplinary Reviews: Computational Statistics}
  \bibinfo{volume}{5}  (\bibinfo{year}{2013})
  \bibinfo{pages}{149--179}.

\bibitem[{Bj{\"o}rck and Golub(1973)}]{bjorck1973numerical}
\bibinfo{author}{{\.A}.~Bj{\"o}rck}, \bibinfo{author}{G.H. Golub},
  \bibinfo{title}{Numerical methods for computing angles between linear
  subspaces}, \bibinfo{journal}{Mathematics of Computation}
  \bibinfo{volume}{27}  (\bibinfo{year}{1973})
  \bibinfo{pages}{579--594}.

\bibitem[{Cai and Zhang(2016)}]{cai2016rate}
\bibinfo{author}{T.T. Cai}, \bibinfo{author}{A.~Zhang},
  \bibinfo{title}{Rate-optimal perturbation bounds for singular subspaces with
  applications to high-dimensional statistics}, \bibinfo{journal}{arXiv
  preprint arXiv:1605.00353} .

\bibitem[{Ciriello et~al.(2015)Ciriello, Gatza, Beck, Wilkerson, Rhie, Pastore,
  Zhang, McLellan, Yau, Kandoth et~al.}]{ciriello2015comprehensive}
\bibinfo{author}{G.~Ciriello}, \bibinfo{author}{M.L. Gatza},
  \bibinfo{author}{A.H. Beck}, \bibinfo{author}{M.D. Wilkerson},
  \bibinfo{author}{S.K. Rhie}, \bibinfo{author}{A.~Pastore},
  \bibinfo{author}{H.~Zhang}, \bibinfo{author}{M.~McLellan},
  \bibinfo{author}{C.~Yau}, \bibinfo{author}{C.~Kandoth}, 
R. Bowlby, H. Shen, S. Hayat, R. Fieldhouse, S.C. Lester, G.M. Tse, 
R.E. Factor, L.C. Collins, K.H. Allison, Y.Y. Chen, K. Jensen, N.B. Johnson, 
S. Oesterreich, G.B. Mills, A.D. Cherniack, G. Robertson, C. Benz, C. Sander, 
P.W. Laird, K.A. Hoadley, T.A. King, TCGA Research Network, C.M. Perou,
  \bibinfo{title}{Comprehensive molecular portraits of invasive lobular breast
  cancer}, \bibinfo{journal}{Cell} \bibinfo{volume}{163} 
  (\bibinfo{year}{2015}) \bibinfo{pages}{506--519}.

\bibitem[{Draper et~al.(2014)Draper, Kirby, Marks, Marrinan, and
  Peterson}]{draper2014flag}
\bibinfo{author}{B.~Draper}, \bibinfo{author}{M.~Kirby},
  \bibinfo{author}{J.~Marks}, \bibinfo{author}{T.~Marrinan},
  \bibinfo{author}{C.~Peterson}, \bibinfo{title}{A flag representation for
  finite collections of subspaces of mixed dimensions},
  \bibinfo{journal}{Linear Algebra Appl.} \bibinfo{volume}{451}
  (\bibinfo{year}{2014}) \bibinfo{pages}{15--32}.

\bibitem[{Hanafi et~al.(2011)Hanafi, Kohler, and
  Qannari}]{hanafi2011connections}
\bibinfo{author}{M.~Hanafi}, \bibinfo{author}{A.~Kohler},
  \bibinfo{author}{E.-M. Qannari}, \bibinfo{title}{Connections between multiple
  co-inertia analysis and consensus principal component analysis},
  \bibinfo{journal}{Chemom. Intell. Lab. Syst.}
  \bibinfo{volume}{106} (\bibinfo{year}{2011})
  \bibinfo{pages}{37--40}.

\bibitem[{Hanley and McNeil(1982)}]{hanley1982meaning}
\bibinfo{author}{J.A. Hanley}, \bibinfo{author}{B.J. McNeil},
  \bibinfo{title}{The meaning and use of the area under a receiver operating
  characteristic (ROC) curve}, \bibinfo{journal}{Radiology}
  \bibinfo{volume}{143} (\bibinfo{year}{1982})
  \bibinfo{pages}{29--36}.

\bibitem[{Horst(1961)}]{horst1961relations}
\bibinfo{author}{P.~Horst}, \bibinfo{title}{Relations among m sets of
  measures}, \bibinfo{journal}{Psychometrika}
  \bibinfo{volume}{26} (\bibinfo{year}{1961})
  \bibinfo{pages}{129--149}.

\bibitem[{Hotelling(1936)}]{hotelling1936relations}
\bibinfo{author}{H.~Hotelling}, \bibinfo{title}{Relations between two sets of
  variates}, \bibinfo{journal}{Biometrika}
  \bibinfo{volume}{28} (\bibinfo{year}{1936})
  \bibinfo{pages}{321--377}.

\bibitem[{Jere et~al.(2014)Jere, Dauwels, Asif, Vie, Cichocki, and
  Jaillet}]{jere2014extracting}
\bibinfo{author}{S.~Jere}, \bibinfo{author}{J.~Dauwels}, \bibinfo{author}{M.T.
  Asif}, \bibinfo{author}{N.M. Vie}, \bibinfo{author}{A.~Cichocki},
  \bibinfo{author}{P.~Jaillet}, \bibinfo{title}{Extracting commuting patterns
  in railway networks through matrix decompositions}, In:
  \bibinfo{booktitle}{Control Automation Robotics \& Vision (ICARCV), 2014 13th
  International Conference on}, \bibinfo{organization}{IEEE},
  \bibinfo{pages}{pp. 541--546}, \bibinfo{year}{2014}.

\bibitem[{Jordan(1875)}]{jordan1875essai}
\bibinfo{author}{C.~Jordan}, \bibinfo{title}{Essai sur la g{\'e}om{\'e}trie
  {\`a} $ n $ dimensions}, \bibinfo{journal}{Bull. Soc.
  Math. France} \bibinfo{volume}{3} (\bibinfo{year}{1875})
  \bibinfo{pages}{103--174}.

\bibitem[{Kettenring(1971)}]{kettenring1971canonical}
\bibinfo{author}{J.R. Kettenring}, \bibinfo{title}{Canonical analysis of
  several sets of variables}, \bibinfo{journal}{Biometrika}
  (\bibinfo{year}{1971}) \bibinfo{pages}{433--451}.

\bibitem[{Kotz and Nadarajah(2004)}]{kotz2004multivariate}
\bibinfo{author}{S.~Kotz}, \bibinfo{author}{S.~Nadarajah},
  \bibinfo{title}{Multivariate $t$-distributions and Their Applications},
  \bibinfo{publisher}{Cambridge University Press}, \bibinfo{year}{2004}.

\bibitem[{K{\"u}hnle(2011)}]{kuhnle2011integration}
\bibinfo{author}{O.~K{\"u}hnle}, \bibinfo{title}{Integration of multiple
  high-throughput data-types in cancer research}, Doctoral dissertation,
  \bibinfo{school}{Ludwig Maximilian Universit\"at M\"unchen, Germany},
  \bibinfo{year}{2011}.

\bibitem[{Kuligowski et~al.(2015)Kuligowski, P{\'e}rez-Guaita,
  S{\'a}nchez-Illana, Le{\'o}n-Gonz{\'a}lez, de~la Guardia, Vento, Lock, and
  Quint{\'a}s}]{kuligowski2015analysis}
\bibinfo{author}{J.~Kuligowski}, \bibinfo{author}{D.~P{\'e}rez-Guaita},
  \bibinfo{author}{{\'A}.~S{\'a}nchez-Illana},
  \bibinfo{author}{Z.~Le{\'o}n-Gonz{\'a}lez}, \bibinfo{author}{M.~de~la
  Guardia}, \bibinfo{author}{M.~Vento}, \bibinfo{author}{E.F. Lock},
  \bibinfo{author}{G.~Quint{\'a}s}, \bibinfo{title}{Analysis of multi-source
  metabolomic data using joint and individual variation explained (JIVE)},
  \bibinfo{journal}{Analyst}.

\bibitem[{L{\^e}~Cao et~al.(2009)L{\^e}~Cao, Martin, Robert-Grani{\'e}, and
  Besse}]{le2009sparse}
\bibinfo{author}{K.-A. L{\^e}~Cao}, \bibinfo{author}{P.~G. Martin},
  \bibinfo{author}{C.~Robert-Grani{\'e}}, \bibinfo{author}{P.~Besse},
  \bibinfo{title}{Sparse canonical methods for biological data integration:
  Application to a cross-platform study}, \bibinfo{journal}{BMC Bioinformatics}
  \bibinfo{volume}{10} (\bibinfo{year}{2009})
  \bibinfo{pages}{34}.

\bibitem[{Lee(2007)}]{lee2007continuum}
\bibinfo{author}{M.H. Lee}, \bibinfo{title}{Continuum direction vectors in
  high dimensional low sample size data}, Doctoral dissertation,
  \bibinfo{school}{University of North Carolina at Chapel Hill}, Chapel Hill, NC, 
  \bibinfo{year}{2007}.

\bibitem[{Lee(2016)}]{lee2016high}
\bibinfo{author}{S.~Lee}, \bibinfo{title}{High-dimension, low sample size
  asymptotics of canonical correlation analysis}, \bibinfo{journal}{arXiv
  preprint arXiv:1609.02992} .

\bibitem[{Lock and Dunson(2013)}]{lock2013bayesian}
\bibinfo{author}{E.F. Lock}, \bibinfo{author}{D.B. Dunson},
  \bibinfo{title}{Bayesian consensus clustering},
  \bibinfo{journal}{Bioinformatics} \bibinfo{volume}{29} 
  (\bibinfo{year}{2013}) \bibinfo{pages}{2610--2616}.

\bibitem[{Lock et~al.(2013)Lock, Hoadley, Marron, and Nobel}]{lock2013joint}
\bibinfo{author}{E.F. Lock}, \bibinfo{author}{K.A. Hoadley},
  \bibinfo{author}{J.S.~Marron}, \bibinfo{author}{A.B. Nobel},
  \bibinfo{title}{Joint and individual variation explained {(JIVE)} for
  integrated analysis of multiple data types}, \bibinfo{journal}{Ann. Appl. 
  Statist.} \bibinfo{volume}{7} 
  (\bibinfo{year}{2013}) \bibinfo{pages}{523--542}.

\bibitem[{L{\"o}fstedt et~al.(2013)L{\"o}fstedt, Hoffman, and
  Trygg}]{lofstedt2013global}
\bibinfo{author}{T.~L{\"o}fstedt}, \bibinfo{author}{D.~Hoffman},
  \bibinfo{author}{J.~Trygg}, \bibinfo{title}{Global, local and unique
  decompositions in OnPLS for multiblock data analysis},
  \bibinfo{journal}{Analytica Chimica Acta} \bibinfo{volume}{791}
  (\bibinfo{year}{2013}) \bibinfo{pages}{13--24}.

\bibitem[{Marron and Alonso(2014)}]{marron2014overview}
\bibinfo{author}{J.S. Marron}, \bibinfo{author}{A.M. Alonso},
  \bibinfo{title}{Overview of object oriented data analysis},
  \bibinfo{journal}{Biometrical J.}
  \bibinfo{volume}{56} (\bibinfo{year}{2014})
  \bibinfo{pages}{732--753}.

\bibitem[{Miao and Ben-Israel(1992)}]{miao1992principal}
\bibinfo{author}{J.~Miao}, \bibinfo{author}{A.~Ben-Israel}, \bibinfo{title}{On
  principal angles between subspaces in {$\mathbb{R}^{n}$}}, \bibinfo{journal}{Linear
  Algebra Appl.} \bibinfo{volume}{171} (\bibinfo{year}{1992})
  \bibinfo{pages}{81--98}.

\bibitem[{Mo et~al.(2013)Mo, Wang, Seshan, Olshen, Schultz, Sander, Powers,
  Ladanyi, and Shen}]{mo2013pattern}
\bibinfo{author}{Q.~Mo}, \bibinfo{author}{S.~Wang}, \bibinfo{author}{V.E.
  Seshan}, \bibinfo{author}{A.B. Olshen}, \bibinfo{author}{N.~Schultz},
  \bibinfo{author}{C.~Sander}, \bibinfo{author}{R.S. Powers},
  \bibinfo{author}{M.~Ladanyi}, \bibinfo{author}{R.~Shen},
  \bibinfo{title}{Pattern discovery and cancer gene identification in
  integrated cancer genomic data}, \bibinfo{journal}{Proc. Natl. Acad. Sci. USA} \bibinfo{volume}{110} 
  (\bibinfo{year}{2013}) \bibinfo{pages}{4245--4250}.

\bibitem[{Network et~al.(2012)}]{cancer2012comprehensive}
\bibinfo{author}{C.G.A. Network}, et~al., \bibinfo{title}{Comprehensive
  molecular portraits of human breast tumours}, \bibinfo{journal}{Nature}
  \bibinfo{volume}{490} (\bibinfo{year}{2012})
  \bibinfo{pages}{61--70}.

\bibitem[{Nielsen(2002)}]{nielsen2002multiset}
\bibinfo{author}{A.A. Nielsen}, \bibinfo{title}{Multiset canonical
  correlations analysis and multispectral, truly multitemporal remote sensing
  data}, \bibinfo{journal}{IEEE Trans. Image Process.}
  \bibinfo{volume}{11} (\bibinfo{year}{2002})
  \bibinfo{pages}{293--305}.

\bibitem[{O'Connell and Lock(2016)}]{OConnell:2016du}
\bibinfo{author}{M.J. O'Connell}, \bibinfo{author}{E.F. Lock},
  \bibinfo{title}{{R. JIVE for exploration of multi-source molecular data}},
  \bibinfo{journal}{Bioinformatics} \bibinfo{volume}{32} 
  (\bibinfo{year}{2016}) \bibinfo{pages}{2877--2879}.

\bibitem[{O'Rourke et~al.(2013)O'Rourke, Vu, and Wang}]{o2013random}
\bibinfo{author}{S.~O'Rourke}, \bibinfo{author}{V.~Vu},
  \bibinfo{author}{K.~Wang}, \bibinfo{title}{Random perturbation of low rank
  matrices: Improving classical bounds}, \bibinfo{journal}{arXiv preprint
  arXiv:1311.2657}.

\bibitem[{Parkhomenko et~al.(2007)Parkhomenko, Tritchler, and
  Beyene}]{parkhomenko2007genome}
\bibinfo{author}{E.~Parkhomenko}, \bibinfo{author}{D.~Tritchler},
  \bibinfo{author}{J.~Beyene}, \bibinfo{title}{Genome-wide sparse canonical
  correlation of gene expression with genotypes}, in: \bibinfo{booktitle}{BMC
  Proceedings}, vol.~\bibinfo{volume}{1}, \bibinfo{organization}{BioMed
  Central}, \bibinfo{pages}{S119}, \bibinfo{year}{2007}.

\bibitem[{Parkhomenko et~al.(2009)Parkhomenko, Tritchler, Beyene
  et~al.}]{parkhomenko2009sparse}
\bibinfo{author}{E.~Parkhomenko}, \bibinfo{author}{D.~Tritchler},
  \bibinfo{author}{J.~Beyene}, et~al., \bibinfo{title}{Sparse canonical
  correlation analysis with application to genomic data integration},
  \bibinfo{journal}{Stat. Appl. Genet. Mol. Biol.}
  \bibinfo{volume}{8}  (\bibinfo{year}{2009})
  \bibinfo{pages}{1--34}.

\bibitem[{Ray et~al.(2014)Ray, Zheng, Lucas, and Carin}]{ray2014bayesian}
\bibinfo{author}{P.~Ray}, \bibinfo{author}{L.~Zheng},
  \bibinfo{author}{J.~Lucas}, \bibinfo{author}{L.~Carin},
  \bibinfo{title}{Bayesian joint analysis of heterogeneous genomics data},
  \bibinfo{journal}{Bioinformatics} \bibinfo{volume}{30}~(\bibinfo{number}{10})
  (\bibinfo{year}{2014}) \bibinfo{pages}{1370--1376}.

\bibitem[{Samarov(2009)}]{samarov2009analysis}
\bibinfo{author}{D.V. Samarov}, \bibinfo{title}{The analysis and advanced
  extensions of canonical correlation analysis}, Doctoral dissertation,
  \bibinfo{school}{University of North Carolina at Chapel Hill}, Chapel Hill, NC
  \bibinfo{year}{2009}.

\bibitem[{Schouteden et~al.(2013)Schouteden, Van~Deun, Pattyn, and
  Van~Mechelen}]{schouteden2013sca}
\bibinfo{author}{M.~Schouteden}, \bibinfo{author}{K.~Van~Deun},
  \bibinfo{author}{S.~Pattyn}, \bibinfo{author}{I.~Van~Mechelen},
  \bibinfo{title}{SCA with rotation to distinguish common and distinctive
  information in linked data}, \bibinfo{journal}{Behav. Res. Methods}
  \bibinfo{volume}{45} (\bibinfo{year}{2013})
  \bibinfo{pages}{822--833}.

\bibitem[{Schouteden et~al.(2014)Schouteden, Van~Deun, Wilderjans, and
  Van~Mechelen}]{schouteden:2014ce}
\bibinfo{author}{M.~Schouteden}, \bibinfo{author}{K.~Van~Deun},
  \bibinfo{author}{T.F. Wilderjans}, \bibinfo{author}{I.~Van~Mechelen},
  \bibinfo{title}{{Performing DISCO-SCA to search for distinctive and common
  information in linked data}}, \bibinfo{journal}{Behavior Research Methods}
  \bibinfo{volume}{46} (\bibinfo{year}{2014})
  \bibinfo{pages}{576--587}.

\bibitem[{Smilde et~al.(2003)Smilde, Westerhuis, and
  de~Jong}]{smilde2003framework}
\bibinfo{author}{A.K. Smilde}, \bibinfo{author}{J.A. Westerhuis},
  \bibinfo{author}{S.~de~Jong}, \bibinfo{title}{A framework for sequential
  multiblock component methods}, \bibinfo{journal}{J. Chemometrics}
  \bibinfo{volume}{17} (\bibinfo{year}{2003})
  \bibinfo{pages}{323--337}.

\bibitem[{Stewart and Sun(1990)}]{stewart1990matrix}
\bibinfo{author}{G.~Stewart}, \bibinfo{author}{J.-G. Sun},
  \bibinfo{title}{Matrix Perturbation Theory}, Computer Science and Scientific
  Computing, \bibinfo{publisher}{Academic Press}, ISBN
  \bibinfo{isbn}{9780126702309}, \bibinfo{year}{1990}.

\bibitem[{Trygg and Wold(2003)}]{trygg2003o2}
\bibinfo{author}{J.~Trygg}, \bibinfo{author}{S.~Wold}, \bibinfo{title}{{O2-PLS,
  a two-block ($X$-$Y$) latent variable regression (LVR) method with an integral
  OSC filter}}, \bibinfo{journal}{J. Chemometrics} \bibinfo{volume}{17}
  (\bibinfo{year}{2003}) \bibinfo{pages}{53--64}.

\bibitem[{Vinod(1976)}]{vinod1976canonical}
\bibinfo{author}{H.D. Vinod}, \bibinfo{title}{Canonical ridge and econometrics
  of joint production}, \bibinfo{journal}{J. Econometrics}
  \bibinfo{volume}{4}  (\bibinfo{year}{1976})
  \bibinfo{pages}{147--166}.

\bibitem[{Waaijenborg et~al.(2008)Waaijenborg, de~Witt~Hamer, Zwinderman
  et~al.}]{waaijenborg2008quantifying}
\bibinfo{author}{S.~Waaijenborg}, \bibinfo{author}{P.V. de~Witt~Hamer},
  \bibinfo{author}{A.H. Zwinderman}, et~al., \bibinfo{title}{Quantifying the
  association between gene expressions and DNA-markers by penalized canonical
  correlation analysis}, \bibinfo{journal}{Stat. Appl. Genet. Mol. Biol.} \bibinfo{volume}{7} 
  (\bibinfo{year}{2008}) \bibinfo{pages}{Article 3}.

\bibitem[{Wedin(1972)}]{wedin1972perturbation}
\bibinfo{author}{P.-A. Wedin}, \bibinfo{title}{Perturbation bounds in
  connection with singular value decomposition}, \bibinfo{journal}{BIT
  Numerical Mathematics} \bibinfo{volume}{12} 
  (\bibinfo{year}{1972}) \bibinfo{pages}{99--111}.

\bibitem[{Wei et~al.(2016)Wei, Lee, Wichers, and Marron}]{wei2016direction}
\bibinfo{author}{S.~Wei}, \bibinfo{author}{C.~Lee},
  \bibinfo{author}{L.~Wichers}, \bibinfo{author}{J.S.~Marron},
  \bibinfo{title}{Direction-projection-permutation for high-dimensional
  hypothesis tests}, \bibinfo{journal}{J. Comput. Graphical
  Stat.} \bibinfo{volume}{25} (\bibinfo{year}{2016})
  \bibinfo{pages}{549--569}.

\bibitem[{Westerhuis et~al.(1998)Westerhuis, Kourti, and
  MacGregor}]{westerhuis1998analysis}
\bibinfo{author}{J.A. Westerhuis}, \bibinfo{author}{T.~Kourti},
  \bibinfo{author}{J.F. MacGregor}, \bibinfo{title}{Analysis of multiblock and
  hierarchical PCA and PLS models}, \bibinfo{journal}{J. Chemometrics}
  \bibinfo{volume}{12} (\bibinfo{year}{1998})
  \bibinfo{pages}{301--321}.

\bibitem[{Witten et~al.(2009)Witten, Tibshirani, and
  Hastie}]{witten2009penalized}
\bibinfo{author}{D.M. Witten}, \bibinfo{author}{R.J.~Tibshirani},
  \bibinfo{author}{T.~Hastie}, \bibinfo{title}{A penalized matrix
  decomposition, with applications to sparse principal components and canonical
  correlation analysis}, \bibinfo{journal}{Biostatistics}
  \bibinfo{volume}{10} (\bibinfo{year}{2009})
  \bibinfo{pages}{515--534}.

\bibitem[{Wold(1985)}]{wold1985partial}
\bibinfo{author}{H.~Wold}, \bibinfo{title}{Partial least squares},
In: S. Kotz, N.L. Johnson, Eds., Encyclopedia of Statistical Sciences, Vol. 6, Wiley, New York, pp. 581--591.

\bibitem[{Wold et~al.(1987)Wold, Geladi, Esbensen, and
  {\"O}hman}]{wold1987multi}
\bibinfo{author}{S.~Wold}, \bibinfo{author}{P.~Geladi},
  \bibinfo{author}{K.~Esbensen}, \bibinfo{author}{J.~{\"O}hman},
  \bibinfo{title}{Multi-way principal components-and PLS-analysis},
  \bibinfo{journal}{J. Chemometrics}
  \bibinfo{volume}{1}  (\bibinfo{year}{1987})
  \bibinfo{pages}{41--56}.

\bibitem[{Wold et~al.(1996)Wold, Kettaneh, and Tjessem}]{wold1996hierarchical}
\bibinfo{author}{S.~Wold}, \bibinfo{author}{N.~Kettaneh},
  \bibinfo{author}{K.~Tjessem}, \bibinfo{title}{Hierarchical multiblock PLS and
  PC models for easier model interpretation and as an alternative to variable
  selection}, \bibinfo{journal}{J. Chemometrics}
  \bibinfo{volume}{10} (\bibinfo{year}{1996})
  \bibinfo{pages}{463--482}.

\bibitem[{Yang and Michailidis(2015)}]{Yang:2015dt}
\bibinfo{author}{Z.~Yang}, \bibinfo{author}{G.~Michailidis}, \bibinfo{title}{{A
  non-negative matrix factorization method for detecting modules in
  heterogeneous omics multi-modal data}}, \bibinfo{journal}{Bioinformatics}
  \bibinfo{volume}{32}  (\bibinfo{year}{2015})
  \bibinfo{pages}{1--8}.

\bibitem[{Yu et~al.(2017)Yu, Risk, Zhang, and Marron}]{yu2017jive}
\bibinfo{author}{Q.~Yu}, \bibinfo{author}{B.B. Risk},
  \bibinfo{author}{K.~Zhang}, \bibinfo{author}{J.S.~Marron}, \bibinfo{title}{JIVE
  integration of imaging and behavioral data}, \bibinfo{journal}{NeuroImage}
  \bibinfo{volume}{152} (\bibinfo{year}{2017}) \bibinfo{pages}{38--49}.

\bibitem[{Zhang et~al.(2015)Zhang, Zhou, Jin, Wang, and
  Cichocki}]{zhang2015ssvep}
\bibinfo{author}{Y.~Zhang}, \bibinfo{author}{G.~Zhou},
  \bibinfo{author}{J.~Jin}, \bibinfo{author}{X.~Wang},
  \bibinfo{author}{A.~Cichocki}, \bibinfo{title}{SSVEP recognition using common
  feature analysis in brain-computer interface}, \bibinfo{journal}{J. 
  Neuroscience Methods} \bibinfo{volume}{244} (\bibinfo{year}{2015})
  \bibinfo{pages}{8--15}.

\bibitem[{Zhou et~al.(2016)Zhou, Cichocki, Zhang, and Mandic}]{zhou2015group}
\bibinfo{author}{G.~Zhou}, \bibinfo{author}{A.~Cichocki},
  \bibinfo{author}{Y.~Zhang}, \bibinfo{author}{D.~Mandic},
  \bibinfo{title}{Group component analysis for multiblock data: Common and
  individual feature extraction}, \bibinfo{journal}{IEEE Trans. Neural Netw. Learn. Syst.} \bibinfo{volume}{17} 
  (\bibinfo{year}{2016}) \bibinfo{pages}{2426--2439}.

\end{thebibliography}
\end{document}